\title{A Theory of PAC Learnability under Transformation Invariances}
\author{%
  Han Shao\\
  Toyota Technological Institute Chicago\\
  Chicago, 60637\\
  \texttt{han@ttic.edu} \\
   \And
   Omar Montasser \\
   Toyota Technological Institute Chicago\\
  Chicago, 60637\\
   \texttt{omar@ttic.edu} \\
   \AND
   Avrim Blum \\
   Toyota Technological Institute Chicago\\
  Chicago, 60637\\
   \texttt{avrim@ttic.edu} \\
}
\DeclareMathOperator*{\argmin}{arg\,min}
\newcommand{\st}{\text{ s.t. }}
\newcommand{\ceil}[1]{\left\lceil#1\right\rceil}
\newtheorem{lemma}{Lemma}
\newtheorem{theorem}{Theorem}
\newtheorem{proposition}{Proposition}
\newtheorem{definition}{Definition}
\newtheorem{example}{Example}
\newcommand{\R}{\mathbb{R}}
\newcommand{\NN}{{\mathbb N}}
\newcommand{\ind}[1]{\mathds{1}[#1]}
\newcommand{\bOne}{{\bm 1}}
\newcommand{\EE}[1]{\mathbb{E}\left[#1\right]}
\newcommand{\EEs}[2]{\mathbb{E}_{#1}\left[#2\right]}
\newcommand{\EEc}[2]{\mathbb{E}\left[#1\left|#2\right.\right]}
\newcommand{\PPs}[2]{\mathbb{P}_{#1}\left(#2\right)}
\newcommand{\norm}[1]{\left\|#1\right\|}
\newcommand{\abs}[1]{\left|#1\right|}
\newcommand{\cA}{\mathcal{A}}
\newcommand{\cB}{\mathcal{B}}
\newcommand{\cD}{\mathcal{D}}
\newcommand{\cF}{\mathcal{F}}
\newcommand{\cG}{\mathcal{G}}
\newcommand{\cH}{\mathcal{H}}
\newcommand{\cI}{\mathcal{I}}
\newcommand{\cK}{\mathcal{K}}
\newcommand{\cL}{\mathcal{L}}
\newcommand{\cM}{\mathcal{M}}
\newcommand{\cP}{\mathcal{P}}
\newcommand{\cS}{\mathcal{S}}
\newcommand{\cT}{\mathcal{T}}
\newcommand{\cU}{\mathcal{U}}
\newcommand{\cV}{\mathcal{V}}
\newcommand{\cX}{\mathcal{X}}
\newcommand{\cY}{\mathcal{Y}}
\newcommand{\bb}{{\bf b}}
\newcommand{\bg}{{\bf g}}
\newcommand{\be}{{\bf e}}
\newcommand{\bff}{{\bf f}}
\newcommand{\bh}{{\bf h}}
\newcommand{\bv}{{\bf v}}
\newcommand{\by}{{\bf y}}
\newcommand{\bx}{{\bf x}}
\renewcommand{\hat}{\widehat}
\renewcommand{\tilde}{\widetilde}
\newcommand{\bphi}{{\boldsymbol \phi}}
\newcommand{\nothere}[1]{}
\newcommand{\err}{\mathrm{err}}
\newcommand{\vcd}{\mathrm{VCdim}}
\newcommand{\trn}{\mathrm{trn}}
\newcommand{\conv}{\mathrm{Conv}}
\newcommand{\A}{\mathcal{A}}
\newcommand{\Majority}{{\mathrm{Majority}}}
\newcommand{\supp}{{\mathrm{supp}}}
\newcommand{\diag}{\mathrm{diag}}
\newcommand{\vcao}{\mathrm{VC_{ao}}}
\newcommand{\vco}{\mathrm{VC_{o}}}
\newcommand{\inv}{\mathrm{INV}}
\newcommand{\re}{\mathrm{RE}}
\newcommand{\ag}{\mathrm{AG}}
\newcommand{\da}{\mathrm{DA}}
\newcommand{\Sym}{\mathrm{Sym}}
\newcommand{\BL}{\mathrm{BL}}
\newcommand\hssout{\bgroup\markoverwith{\textcolor{magenta}{\rule[0.5ex]{2pt}{1pt}}}\ULon}
\renewcommand{\vec}[1]{\boldsymbol{\mathbf{#1}}}
\begin{document}


\maketitle
\begin{abstract}
Transformation invariances are present in many real-world problems.
For example, image classification is usually invariant to rotation and color transformation: a rotated car in a different color is still identified as a car.
Data augmentation, which adds the transformed data into the training set and trains a model on the augmented data, is one commonly used technique to build these invariances into the learning process.
However, it is unclear how data augmentation performs theoretically and what the optimal algorithm is in presence of transformation invariances.
In this paper, we study PAC learnability under transformation invariances in three settings according to different levels of realizability: (i) A hypothesis fits the augmented data; (ii) A hypothesis fits only the original data and the transformed data lying in the support of the data distribution; (iii) Agnostic case.
One interesting observation is that distinguishing between the original data and the transformed data is necessary to achieve optimal accuracy in setting (ii) and (iii), 
which implies that any algorithm not differentiating between the original and transformed data (including data augmentation) is not optimal.
Furthermore, this type of algorithms can even ``harm'' the accuracy.
In setting (i), although it is unnecessary to distinguish between the two data sets, data augmentation still does not perform optimally.
Due to such a difference, we propose two combinatorial measures characterizing the optimal sample complexity in setting (i) and (ii)(iii) and provide the optimal algorithms.
\end{abstract}

\section{Introduction}
Transformation invariances are present in many real-world learning problems.
That is, given a certain set of transformations, the label of an instance is preserved under any transformation from the set.
Image classification is often invariant to rotation/flip/color translation.
Syntax parsing is invariant to exchange of noun phrases in a sentence.
Such invariances are often built into the learning process by two ways.
One is designing new architectures in neural networks to learn a transformation invariant feature, which is usually task-specific and challenging.
A more universally applicable and easier way is data augmentation (DA)\footnote{Throughout the paper, we refer to ERM over the augmented data by DA.}, that is, adding the transformed data into the training set and training a model with the augmented data.
Although DA performs well empirically, it is unclear whether and when DA ``helps''. 
In this paper, we focus on answering two questions:
\begin{center}
    \textit{How does data augmentation perform theoretically? 
    \\What is the optimal algorithm in terms of sample complexity under transformation invariances?}
\end{center}

We formalize the problem of binary classification under transformation invariances in the PAC model.
Given instance space $\cX$, label space $\cY=\{0,1\}$, and hypothesis class $\cH$,
we consider the following three settings according to different levels of realizability.
\begin{itemize}
    \item[(i)] Invariantly realizable setting: There exists a hypothesis $h^*\in \cH$ such that $h^*$ can correctly classify not only the natural data (drawn from the data distribution) but also the transformed data. For example, considering the transformation of rotating images where all natural images are upright, the hypothesis $h^*$ can correctly classify every upright image (natural data) and their rotations (transformed data).
    \item[(ii)] Relaxed realizable setting: There exists a hypothesis $h^*\in \cH$ such $h^*$ has zero error over the support of the data distribution (and therefore will correctly classify the transformed data that lies in the support of the data distribution), but $h^*$ may not correctly classify transformed data that lies outside the support of the natural data distribution. 
    For example, there exists an $h^*$ classifying all small rotations that lie in the support of the distribution correctly, but misclassifying upside-down cars.
    \item[(iii)] Agnostic setting: Every hypothesis in $\cH$ might not fit the natural data.
\end{itemize}
In most of this work, we consider the case where the set of transformations forms a group (e.g., all rotations and all color translations), which is a classic setting studied in literature~\citep[e.g.,][]{cohen2016group, bloem2020probabilistic, Chen2020}.
Some algorithms and analyses in this work also apply to non-group transformations (e.g., croppings).

\textbf{Main contributions} 
First, we show that DA outperforms vanilla ERM but is sub-optimal in setting (i) above.
We then introduce a complexity measure (see Definition~\ref{def:vco}) that characterizes the optimal sample complexity of learning in setting (i), 
and we give an optimal (up to log-factors) algorithm in this setting based on 1-inclusion-graph predictors.
Second, we characterize the complexity of learning in setting (ii) when the learner only receives the augmented data (without specifying which are natural).
Such a characterization provides us with a sufficient condition under which DA "hurts".
Third, we introduce a complexity measure (see Definition~\ref{def:vcao}) that characterizes the optimal sample complexity of learning in settings (ii) and (iii) above, and we give optimal algorithms for these settings. 
Finally, we also provide adaptive learning algorithms that interpolate between settings (i) and (ii), i.e., when $h^*$ is {\em partially} invariant. 
We want to emphasize that our complexity measures take into account the complexity of both the hypothesis class $\cH$ and the set of transformations being considered.
The results are formally summarized in Section~\ref{sec:results}.

\textbf{Related work}
Theoretical guarantees of DA has received a lot of attention recently. 
\cite{Chen2020,Lyle2020} study theoretical guarantees of DA under the assumption of ``equality'' in distribution, i.e., for any transformation in the transformation group,
the data distribution of the transformed data is approximately the same as that of the natural data (e.g., the upside-down variations of images happen at the same probability as the original upright images).
Under this assumption, they show that DA reduces variance and induces better generalization error upper bounds.
Our work does not make such an assumption.
\cite{Dao2019} models augmentation as a Markov process and shows that for kernel linear classifiers, 
DA can be approximated by
first-order feature averaging and second-order variance regularization components.
The concurrent work by~\cite{shen2022data} studies the benefit of DA when training a two layer convolutional neural
network in a specific multi-view model, showing that DA can alter the relative importance of various features. 
There is a line of theoretical study on the invariance gain in different models.
For example, \cite{elesedy2021linear} study the linear model and
\cite{elesedy2021kernel,mei2021learning, bietti2021sample} study the non-parametric regression.
The concurrent work by~\cite{elesedy2022group} also studies PAC learning under transformation invariances but only provides an upper bound on the sample complexity, while our work provides a complete characterization of learning under this model with optimal algorithms.
There is a parallel line of theoretical study on architecture design \citep[e.g.,][]{wood1996representation,ravanbakhsh2017equivariance,kondor2018generalization,bloem2020probabilistic}.

Learning under transformation invariances has also been studied a lot empirically. 
Here we briefly mention a few results.
DA has been applied as standard method in modern deep learning, e.g., in Alexnet~\citep{krizhevsky2012imagenet}.
\cite{gontijo2020affinity} proposes two measures, affinity and diversity, to quantify the performance of the existing DA methods.
\cite{fawzi2016adaptive,cubuk2018autoaugment,chatzipantazis2021learning} study how to automatically search for improved data
augmentation policies.
For architecture design, one celebrated example is convolutions~\citep{fukushima1982neocognitron,lecun1989backpropagation}, which are translation equivariant.
See \cite{cohen2016group, dieleman2016exploiting, worrall2017harmonic} for more different architectures invariant or equivariant to different symmetries.

Another line of related work is adversarial training, which adds the perturbed data into the training set and can be considered as a special type of data augmentation.
\cite{raghunathan2019adversarial,schmidt2018adversarially,nakkiran2019adversarial} study the standard accuracy of adversarial training and provide examples showing that adversarial training can sometimes ``harm'' standard accuracy.

\textbf{Notation}
For any $n\in \NN$, let $\be_1,\be_2,\ldots$ denote the standard basis vectors in $\R^n$.
For any set $\cV$ and any $\bv\in \cV^n$, let $\bv_{-i}=(v_1,\ldots,v_{i-1},v_{i+1},\ldots,v_n)\in \cV^{n-1}$ denote the remaining part of $\bv$ after removing the $i$-th entry and $(v',\bv_{-i})=(v_1,\ldots,v_{i-1},v',v_{i+1},\ldots,v_n)\in \cV^n$ denote the vector after replacing $i$-th entry of $\bv$ with $v'\in \cV$.
Let $\oplus$ denote the bitwise XOR operator. 
For any $h\in \cY^\cX$ and $X=\{x_1,\ldots,x_n\}\subset \cX$, denote $h_{|X}= (h(x_1),\ldots,h(x_n))$ the restriction of $h$ on $X$.
A data set or a sample is a multiset of $\cX\times \cY$.
For any sample $S$, let $S_\cX = \{x|(x, y) \in S\}$ (with multiplicity) and for any distribution $\cD$ over $\cX\times \cY$, for $(x,y)\sim \cD$, let $\cD_\cX$ denote
the marginal distribution of $x$.
For any data distribution $\cD$ and any hypothesis $h$, the expected error $\err_{\cD}(h) := \Pr_{(x,y)\sim \cD}(h(x)\neq y)$.
Denote $\err(h)=\err_{\cD}(h)$ when $\cD$ is clear from the context.
For any sample $S$ of finite size,
$\err_{S}(h) := \frac{1}{\abs{S}}\sum_{(x,y)\in S} \ind{h(x)\neq y}$.
For any sample $S$ of possibly of infinite size, we say $\err_{S}(h) = 0$ if $h(x) = y$ for all $(x,y)\in S$.

\section{Problem setup}\label{sec:setup}
We study binary classification under transformation invariances. We denote by $\cX$ the instance space, $\cY=\{0,1\}$ the label space and $\cH$ the hypothesis class.

\textbf{Group transformations} 
We consider a group $\cG$ of transformations acting on the instance space through a mapping $\alpha: \cG \times \cX \mapsto \cX$, which
is compatible with the group operation.
For convenience, we write $\alpha(g,x) =gx$ for $g\in \cG$ and $x\in \cX$.
For example, consider $\cG = \{e, g_1,g_2, g_3\}$ where $e$ is the identify function and $g_i$ is rotation by $90 i$ degrees. Given an image $x$, $e x = x$ is the original image and $g_1 x$ is the image rotated by $90$ degrees.
The \emph{orbit} of any $x\in \cX$ is the subset of $\cX$ that can be obtained by acting an element in $\cG$ on $x$, $\cG x := \{gx|g\in \cG\}$. 
Note that since $\cG$ is a group, for any $x'\in \cG x$, we have $\cG x' = \cG x$. 
Thus we can divide the instance space $\cX$ into a collection of separated orbits, which does not depend on the data distribution.
Given a (natural) data set $S\subset \cX\times \cY$, we call $\cG S := \{(gx,y)|(x,y)\in S,g\in \cG\}$ the \emph{augmented data set}.

\textbf{Transformation invariant hypotheses and distributions}
To model transformation invariance, we assume that the true labels are invariant over the orbits of natural data.
Formally, for any transformation group $\cG$ and $X\subset\cX$, we say a hypothesis $h$ is \emph{$(\cG,X)$-invariant} if 
\[h(gx) = h(x),\forall g\in \cG, x\in X\,.\]
That is to say, for every $x\in X$, $h$ predicts every instance in the orbit of $x$ the same as $x$.
For any marginal distribution $\cD_\cX$ over $\cX$, we say a hypothesis $h$ is \emph{$(\cG,\cD_\cX)$-invariant} if 
$h(gx) = h(x)$ for all $g\in \cG$, for all $x \in \supp(\cD_\cX)$, i.e., $\Pr_{x\sim \cD_\cX}(\exists x'\in \cG x : h(x')\neq h(x)) = 0$.
We say a distribution $\cD$ over $\cX\times \cY$ is \emph{$\cG$-invariant} if there exists a $(\cG,\cD_\cX)$-invariant hypothesis $f^*$ (possibly not in $\cH$) with $\err_\cD(f^*) = 0$.
We assume that the data distribution is $\cG$-invariant throughout the paper.

\textbf{Realizability of hypothesis class} We consider three settings according to the different levels of realizability of $\cH$: (i) invariantly realizable setting, where there exists a $(\cG, \cD_\cX)$-invariant hypothesis $h^*\in \cH$ with $\err_\cD(h^*) = 0$; (ii) relaxed realizable setting, where there exists a (not necessarily $(\cG, \cD_\cX)$-invariant) hypothesis $h^*\in \cH$ with $\err_\cD(h^*) = 0$; and (iii) agnostic setting, where there might not exist a hypothesis in $\cH$ with zero error.
To understand the difference among the three settings, here is an example.

\begin{example}
    Consider $\cX = \{\pm 1,\pm 2\}$, $\cG=\{e,-e\}$ being the group generated by flipping the sign (i.e., $\cG x = \{x,-x\}$), and the data distribution $\cD$ being the uniform distribution over $\{(1,0),(2,0)\}$.
    If $\cH = \{ h(\cdot) = 0\}$ contains only the all-zero function, then it is in setting (i) as $h(\cdot) = 0$ is $(\cG,\cD_\cX)$-invariant and $\err_\cD(h) = 0$; If $\cH = \{\ind{x<0}\}$ contains only the hypothesis predicting $\{-1, -2\}$ as $1$ and $\{1,2\}$ as $0$, then it is in setting (ii) as $\ind{x<0}$ is not $(\cG,\cD_\cX)$-invariant but $\err_\cD(\ind{x<0}) = 0$; If $\cH = \{\ind{x>0}\}$, it is in setting (iii) as no hypothesis in $\cH$ has zero error.
\end{example}
The following definitions formalize the notion of PAC learning in the three settings.

\begin{definition}[Invariantly realizable PAC learnability]\label{def:inv-pac}
    For any $\epsilon,\delta \in (0,1)$, the sample complexity of invariantly realizable $(\epsilon,\delta)$-PAC learning of $\cH$ with respect to transformation group $\cG$, 
    denoted $\cM_\inv(\epsilon,\delta;\cH,\cG)$, 
    is defined as the smallest $m\in \NN$ for which there exists a learning rule $\cA$ such that, 
    for every $\cG$-invariant data distribution $\cD$ over $\cX\times \cY$ where there exists a \emph{$(\cG, \cD_\cX)$-invariant} predictor $h^*\in \cH$ with zero error, 
    $\err_{\cD}(h^*)=0$, with probability at least $1-\delta$ over $S\sim \cD^m$,
    \[\err_{\cD}(\cA(S))\leq \epsilon\,.\]
    If no such $m$ exists, define $\cM_\inv(\epsilon,\delta;\cH,\cG)=\infty$. 
    We say that $\cH$ is PAC learnable in the invariantly realizable setting with respect to transformation group $\cG$ if $\forall \epsilon,\delta\in (0,1)$, $\cM_\inv(\epsilon,\delta;\cH,\cG)$ is finite.
    For any algorithm $\cA$, denote by $\cM_{\inv}(\epsilon,\delta;\cH,\cG,\cA)$ the sample complexity of $\cA$. 
\end{definition}

\begin{definition}[Relaxed realizable PAC learnability]\label{def:re-pac}
    For any $\epsilon,\delta \in (0,1)$, the sample complexity of relaxed realizable $(\epsilon,\delta)$-PAC learning of $\cH$ with respect to transformation group $\cG$, 
    denoted $\cM_\re(\epsilon,\delta;\cH,\cG)$, 
    is defined as the smallest $m\in \NN$ for which there exists a learning rule $\cA$ such that, 
    for every $\cG$-invariant data distribution $\cD$ over $\cX\times \cY$ where there exists a predictor $h^*\in \cH$ with zero error, 
    $\err_{\cD}(h^*)=0$, with probability at least $1-\delta$ over $S\sim \cD^m$,
    \[\err_{\cD}(\cA(S))\leq \epsilon\,.\]
    If no such $m$ exists, define $\cM_\re(\epsilon,\delta;\cH,\cG)=\infty$. 
    We say that $\cH$ is PAC learnable in the relaxed realizable setting with respect to transformation group $\cG$ if $\forall \epsilon,\delta\in (0,1)$, $\cM_\re(\epsilon,\delta;\cH,\cG)$ is finite.
    For any algorithm $\cA$, denote by $\cM_{\re}(\epsilon,\delta;\cH,\cG,\cA)$ the sample complexity of $\cA$.
\end{definition}

\begin{definition}[Agnostic PAC learnability]\label{def:ag-pac}
    For any $\epsilon,\delta \in (0,1)$, the sample complexity of agnostic $(\epsilon,\delta)$-PAC learning of $\cH$ with respect to transformation group $\cG$, 
    denoted $\cM_\ag(\epsilon,\delta;\cH,\cG)$, 
    is defined as the smallest $m\in \NN$ for which there exists a learning rule $\cA$ such that, 
    for every $\cG$-invariant data distribution $\cD$ over $\cX\times \cY$, with probability at least $1-\delta$ over $S\sim \cD^m$,
    \[\err_{\cD}(\cA(S))\leq \inf_{h\in \cH} \err_\cD(h)+\epsilon\,.\]
    If no such $m$ exists, define $\cM_\ag(\epsilon,\delta;\cH,\cG)=\infty$. 
    We say that $\cH$ is PAC learnable in the agnostic setting with respect to transformation group $\cG$ if $\forall \epsilon,\delta\in (0,1)$, $\cM_\ag(\epsilon,\delta;\cH,\cG)$ is finite.
\end{definition}

\textbf{Data augmentation} One main goal of this work is to analyze the sample complexity of data augmentation.
When we talk of data augmentation (DA) as an algorithm, it actually means \emph{ERM over the augmented data}.
Specifically, given a fixed loss function $\cL$ mapping a data set and a hypothesis to $[0,1]$, and a training set $S_\trn$, DA outputs an $h\in \cH$ such that $h(x)=y$ for all $(x,y)\in \cG S_\trn$ if there exists one; 
outputs a hypothesis $h\in \cH$ with the minimal loss $\cL(\cG S_\trn, h)$ otherwise.
When we say DA without specifying the loss function, it means DA w.r.t. an arbitrary loss function, which can be defined based on any probability measure on the transformation group.

To characterize sample complexities, we define two measures as follows.

\begin{definition}[VC dimension of orbits]
\label{def:vco}
    The VC dimension of orbits, denoted $\vco(\cH, \cG)$, is defined as the largest integer $k$ for which there exists a set 
    $X = \{x_1,\ldots,x_k\}\subset \cX$ such that their orbits are pairwise disjoint, i.e., $\cG x_i \cap \cG x_j =\emptyset, \forall i,j\in [k]$ and every labeling of $X$ is realized by a \emph{$(\cG, X)$-invariant} hypothesis in $\cH$, i.e., $\forall y \in \{0,1\}^k$, there exists a $(\cG, X)$-invariant hypothesis $h\in \cH$ s.t. $h(x_i)=y_i,\forall i\in[k]$.
\end{definition}

\begin{definition}[VC dimension across orbits]
\label{def:vcao}
    The VC dimension across orbits, denoted $\vcao(\cH, \cG)$, is defined as the largest integer $k$ for which there exists a set $X = \{x_1,\ldots,x_k\}\subset \cX$ such that their orbits are pairwise disjoint, i.e., $\cG x_i \cap \cG x_j =\emptyset, \forall i,j\in [k]$ and every labeling of $X$ is realized by a hypothesis in $\cH$, i.e., $\forall y \in \{0,1\}^k$, there exists a hypothesis $h\in \cH$ s.t. $h(x_i)=y_i,\forall i\in[k]$.
\end{definition}
Let $\vcd(\cH)$ denote the VC dimension of $\cH$. 
By definition, it is direct to check that $\vco(\cH,\cG)\leq \vcao(\cH,\cG)\leq \vcd(\cH)$. 
For any $\cH,\cG$ with $\vco(\cH,\cG)=d$,
we can supplement $\cH$ to a new hypothesis class $\cH'$ such that $\vco(\cH',\cG)$ is still $d$ while $\vcao(\cH',\cG)$ is as large as the total number of orbits with at least two instances, i.e., $\vcao(\cH',\cG)=\abs{\{\cG x|\abs{\cG x}\geq 2, x\in \cX\}}$.
This can be done by supplementing $\cH$ with all hypotheses predicting $\cG x$  with two different labels for all $x$ with $\abs{\cG x}\geq 2$.
Besides, for any $\cH$ with $\vcd(\cH)=d$, we can construct a transformation group $\cG$ to make all instances lie in one single orbit, which makes $\vcao(\cH,\cG)\leq 1$.
Hence the gap among the three measures can be arbitrarily large.
Here are a few examples for better understanding of the gaps.

\begin{example}\label{eg:vco0}
    Consider $\cX = \{\pm 1, \pm 2,\ldots, \pm 2d\}$ for some $d>0$ , $\cH= \{\ind{x\in A}|A\subset [2d] \text{ and } \abs{A} = d\}$ being the set of all hypotheses labeling exact $d$ elements from $[2d]$ as $1$ and $\cG = \{e,-e\}$ being the group generated by flipping the sign.
    Then we have $\vco(\cH,\cG) = 0$ since for any $i\in [2d]$, there is no $(\cG, \{i\})$-invariant hypothesis that can label $i$ as $1$ (which is due to the fact that $-i$ is labeled as $0$ by any hypothesis in $\cH$).
    It is direct to check that $\vcao(\cH,\cG) = \vcd(\cH) = d$.
\end{example}

\begin{example}
    Consider $\cX = \{x\in \R^2|\norm{x}_2 = 1\}$ being a circle, the hypothesis class $\cH = \{0,1\}^{\cX}$ being all labeling functions and $\cG$ being all rotations (thus $\forall x, \cG x = \cX$).
    Then we have $\vco(\cH,\cG)=\vcao(\cH,\cG)= 1$ as there is only one orbit and $\vcd(\cH) = \infty$. 
\end{example}
\begin{example}
     Consider the natural data being $k$ upright images and the transformation set $\cG$ is rotation by $0, 360/n, 2\cdot 360/n, \ldots, (n-1)\cdot 360/n$ degrees for some integer n. For an expressive hypothesis class $\cH$ (e.g., neural networks) that can shatter all rotated versions of these images, we have $\vcd(\cH)=nk$ and $\vco(\cH,\cG) = \vcao(\cH,\cG) = k$.
     For a hypothesis class $\cH'$ composed of all hypotheses labeling all upright images and their upside-down variations differently, we have $\vcd(\cH') = (n-1)k$, $\vcao(\cH',\cG) = k$ and $\vco(\cH',\cG) = 0$.
\end{example}


\section{Main results}\label{sec:results}
We next present and discuss our main results.
\begin{itemize}[leftmargin=*]
    \item Invariantly realizable setting (Definition~\ref{def:inv-pac})
    \begin{itemize}[leftmargin=*]
        \item \textbf{DA ``helps'' but is not optimal. The sample complexity of DA is characterized by $\vcao(\cH,\cG)$.} For any $\cH,\cG$, DA can learn $\cH$ with sample complexity $\tilde O(\frac{\vcao(\cH,\cG)}{\epsilon}+\frac{1}{\epsilon}\log \frac 1 \delta)$, where $\tilde O$ ignores log-factors of $\frac{\vcao(\cH,\cG)}{\epsilon}$ (Theorem~\ref{thm:inv-da-ub}). 
        For all $d> 0$, there exists $\cH, \cG$ with $\vcao(\cH, \cG)=d$ and $\vco(\cH,\cG) = 0$ such that DA needs $\Omega(\frac{d}{\epsilon})$ samples (Theorem~\ref{thm:inv-da-lb}).
        \item
        \textbf{The optimal sample complexity is characterized by $\vco(\cH,\cG)$.} 
        For any $\cH,\cG$, we have $\Omega(\frac{\vco(\cH,\cG)}{\epsilon} +\frac{1}{\epsilon}\log\frac{1}{\delta})\leq \cM_\inv(\epsilon,\delta;\cH,\cG) \leq O(\frac{\vco(\cH,\cG)}{\epsilon}\log \frac{1}{\delta})$ (Theorem~\ref{thm:inv-opt}).
        We propose an algorithm achieving this upper bound based on 1-inclusion graphs, which does not distinguish between the original and transformed data.
        It is worth noting that the algorithm takes the invariance over the test point into account, which provides some theoretical justification for test-time adaptation such as~\cite{wang2020tent}.
    \end{itemize}

    \item Relaxed realizable setting (Definition~\ref{def:re-pac})
    \begin{itemize}[leftmargin=*]
        \item \textbf{DA can ``hurt''.} DA belongs to the family of algorithms not distinguishing the original data from the transformed data.
        We show that the optimal sample complexity of this family is characterized by $\mu(\cH,\cG)$ (see Definition~\ref{def:mu}) (Theorem~\ref{thm:re-da}), which can be arbitrarily larger than $\vcd(\cH)$.
        This implies that for any $\cH,\cG$ with $\mu(\cH,\cG)>\vcd(\cH)$, 
        the sample complexity of DA is higher than that of ERM.

        \item \textbf{The optimal sample complexity is characterized by $\vcao(\cH,\cG)$.}
        For any $\cH,\cG$, we have $\Omega(\frac{\vcao(\cH,\cG)}{\epsilon} + \frac{\log(1/\delta)}{\epsilon})\leq \cM_\re(\epsilon,\delta;\cH,\cG) \leq \tilde O(\frac{\vcao(\cH,\cG)}{\epsilon}+\frac{1}{\epsilon}\log \frac 1 \delta)$ (Theorem~\ref{thm:re-opt}).
        We propose two algorithms achieving similar upper bounds, with one based on ERM and one based on 1-inclusion graphs.
        Both algorithms have to distinguish between the original and the transformed data.
        \item \textbf{An adaptive algorithm interpolates between two settings.} We present an algorithm that adapts to different levels of invariance of the target function $h^*$, which achieves $\tilde O(\frac{\vcao(\cH,\cG)}{\epsilon})$ sample complexity in the relaxed realizable setting and $\tilde O(\frac{\vco(\cH,\cG)}{\epsilon})$ sample complexity in the invariantly realizable setting without knowing it (Theorem~\ref{thm:re-eta-unknown-ub} in Appendix). 
    \end{itemize}
    \item Agnostic setting (Definition~\ref{def:ag-pac})
    \begin{itemize}[leftmargin=*]
    \item \textbf{The optimal sample complexity is characterized by $\vcao(\cH,\cG)$.}
    For any $\cH,\cG$, $\cM_\ag(\epsilon,\delta;\cH,\cG) = O\left(\frac{\vcao(\cH,\cG)}{\epsilon^2}\log^2\left(\frac{\vcao(\cH,\cG)}{\epsilon}\right)+\frac{1}{\epsilon^2}\log(\frac 1 \delta)\right)$ (Theorem~\ref{thm:ag-opt}).
    Since $\cM_\ag(\epsilon,\delta;\cH,\cG)  \geq \cM_\re(\epsilon,\delta;\cH,\cG)$, $\vcao(\cH,\cG)$ characterizes the optimal sample complexity.
\end{itemize}
\end{itemize}
\section{Invariantly Realizable setting}\label{sec:inv}
In this section, we discuss the results in the invariantly realizable setting (see Definition~\ref{def:inv-pac}).

\subsection{DA ``helps'' but is not optimal}
We show that in the invariantly realizable setting, DA indeed ``helps'' to improve the sample complexity from $\tilde O(\frac{\vcd(\cH)}{\epsilon})$ (the sample complexity of ERM in standard PAC learning) to $\tilde O(\frac{\vcao(\cH,\cG)}{\epsilon})$.
First, we have the following upper bound on the sample complexity of DA.
\begin{theorem}\label{thm:inv-da-ub}
    For any $\cH,\cG$ with $\vcao(\cH,\cG)<\infty$, DA satisfies that $\cM_\inv(\epsilon,\delta;\cH,\cG,\da) = O( \frac{\vcao(\cH,\cG)}{\epsilon}\log^3\frac{\vcao(\cH,\cG)}{\epsilon}+\frac{1}{\epsilon}\log \frac 1 \delta)$.
\end{theorem}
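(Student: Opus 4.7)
The plan is to establish the claimed sample complexity bound for DA by a symmetrization and growth-function argument that leverages the orbit structure to replace $\vcd(\cH)$ with $\vcao(\cH,\cG)$.

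First, I would record the key structural fact: since $h^\ast$ is $(\cG,\cD_\cX)$-invariant and realizes $\cD$, every training label satisfies $y_i = h^\ast(x_i) = h^\ast(g x_i)$ for all $g\in\cG$, so DA's output $\hat h$---being consistent with $\cG S$---automatically agrees with $h^\ast$ on the \emph{entire} orbit of every training point, not merely on $S_\cX$. Next, I would draw a ghost sample $S'\sim \cD^m$ and reduce via standard symmetrization to bounding
\[\Pr_{S,S'}\bigl[\exists\,\hat h\in\cH\text{ consistent with }\cG S:\err_{S'}(\hat h)>\epsilon/2\bigr],\]
and then condition on the combined multiset $Z_\cX = S_\cX\cup S'_\cX$ together with its decomposition into $\cG$-orbits.

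The core of the argument is a per-orbit analysis under the uniform random partition of $Z_\cX$ into $(S,S')$. For any orbit $O$ with $|O\cap S|\geq 1$, the structural fact above forces $\hat h\equiv y_O$ on $O$, producing zero $S'$-errors on $O$. For ``uncovered'' orbits (those with $|O\cap S|=0$), the probability under the uniform partition that a specific orbit with $k_O = |O\cap Z_\cX|$ lands entirely in $S'$ decays like $2^{-k_O}$, and on such orbits $\hat h$ is free within $\cH$. Choosing one representative per orbit in $Z_\cX$ yields a disjoint-orbit set of size at most $2m$, so Sauer--Shelah applied to $\cH$ on this set, combined with Definition~\ref{def:vcao}, bounds the number of distinct representative-labelings realized by $\cH$ by $O\bigl((2m)^{\vcao(\cH,\cG)}\bigr)$. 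Union-bounding this count against the geometric per-orbit tail yields a $2^{-\Omega(\epsilon m)}$ failure probability, from which the stated sample complexity $m = O(\vcao(\cH,\cG)\epsilon^{-1}\log^3(\vcao(\cH,\cG)/\epsilon) + \epsilon^{-1}\log(1/\delta))$ follows by standard inversion.

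The main obstacle I expect is handling within-orbit variation on uncovered orbits: when an uncovered orbit contributes two or more points to $S'$, $\hat h$ may label them non-uniformly (DA does not impose orbit invariance off the training orbits), and a naive count of labelings over all of $Z_\cX$ reintroduces a $\vcd(\cH)$ dependence that can be arbitrarily larger than $\vcao(\cH,\cG)$. To keep the bound at $\vcao$, this freedom must be absorbed into the small per-orbit uncovered-probability at the price of a polylogarithmic overhead, which I expect is what produces the $\log^3$ factor appearing in the theorem.
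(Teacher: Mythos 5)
Your proposal shares the paper's overall strategy (double-sample symmetrization, exploit that DA forces $\hat h$ to agree with $h^*$ on every \emph{covered} orbit, then bound labelings of the ghost sample by a $\vcao$-based growth function), and you correctly identify the central obstacle: DA's output is only orbit-invariant on training orbits, so on uncovered orbits $\hat h$ can label points in a single orbit non-uniformly, which means counting labelings of orbit \emph{representatives} (your $O\bigl((2m)^{\vcao(\cH,\cG)}\bigr)$ bound) does not control the behavior on $Z_\cX$ and a naive count of full labelings reintroduces $\vcd(\cH)$. However, the resolution you sketch---``absorb'' the within-orbit freedom into the per-orbit $2^{-k_O}$ uncovered probability---is left as a heuristic, and it is precisely here that the argument is missing its key device. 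The set of uncovered orbits is random and determined jointly with the adversarial $\hat h$, so a per-orbit cancellation of $2^{k_O}\cdot 2^{-k_O}$ does not by itself produce a union bound: one must still control the number of candidate uncovered-orbit configurations across hypotheses, and there can be exponentially many.

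The paper closes exactly this gap by partitioning $S''$ (the combined $2m$-point sample) into two regimes according to orbit size, with threshold $\log^2 m$. For mistakes landing in \emph{large} orbits ($\geq \log^2 m$ points of $S''$), no Sauer bound is used at all: one picks a minimal set $X(h)$ of misclassified representatives (at most $\epsilon m/(4\log^2 m)$ of them, since each orbit is large), so the family $\cK$ of candidate covered-point sets has size at most $(8e/\epsilon)^{\epsilon m/(4\log^2 m)}$, small enough that a union bound against the $2^{-\epsilon m/4}$ swap probability succeeds. For mistakes landing in \emph{small} orbits ($<\log^2 m$ points), the observation is that at most $\vcao(\cH,\cG)\cdot\log^2 m$ of these points can be shattered by $\cH$ (take one representative per orbit, and each orbit contributes $<\log^2 m$ points), so Sauer--Shelah applied to the small-orbit points directly---not to representatives---gives $(2em/d)^{d\log^2 m}$ labelings, which multiplied by $2^{-\epsilon m/4}$ is again controllable. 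Solving the resulting recurrence $m\gtrsim\frac{1}{\epsilon}\bigl(d\log^2 m\log\frac{m}{d}+\log\frac{1}{\delta}\bigr)$ is what produces the $\log^3(\vcao/\epsilon)$ factor. In short: your plan is right in spirit and correctly diagnoses where the difficulty lies, but without the orbit-size threshold split (and the realization that the ``large-orbit'' case should be handled by a compression-style count rather than Sauer) it does not actually yield the claimed bound.
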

Intuitively, for a set of instances in one orbit that can be labeled by $\cH$ in multiple ways, we only need to observe one instance from this orbit to learn the labels of all the instances by applying DA. Thus, DA helps to improve the accuracy.
The detailed proof is deferred to Appendix~\ref{app:inv-da-ub}.
However, DA does not fully exploit the transformation invariances as it only utilizes the invariances of the training set.
Hence, DA does not perform optimally in presence of the transformation invariances.
In fact, besides DA, all proper learners (i.e., learners outputting a hypothesis in $\cH$) have the same problem.

\begin{theorem}\label{thm:inv-da-lb}
    For any $d>0$, there exists a hypothesis class $\cH_d$ and a group $\cG_d$ with $\vcao(\cH_d,\cG_d)=d$ and $\vco(\cH_d,\cG_d) = 0$ such that $\cM_\inv(\epsilon,\frac{1}{9};\cH_d,\cG_d,\cA)=\Omega( \frac{d}{\epsilon})$ for any proper learner $\cA$, including DA and standard ERM.
\end{theorem}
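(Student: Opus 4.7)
I plan to reuse the hypothesis class and group from Example~\ref{eg:vco0}, which already achieves $\vco = 0$ and $\vcao = d$. Explicitly, let $\cX_d = \{\pm 1, \ldots, \pm 2d\}$, let $\cG_d = \{e, -e\}$ be the sign-flip group, and let $\cH_d = \{h_A : A \subseteq [2d], \abs{A} = d\}$ with $h_A(x) = \ind{x \in A}$. The structural feature I will exploit is that properness forces any learner to commit to a size-exactly-$d$ subset $A'$ of $[2d]$; this rigidity is what drives the hardness even though all labels in my hard instances are deterministically $0$.

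Next, I will construct a hard distribution family indexed by pairs $(B, b_0)$ with $B \in \binom{[2d]}{d}$ and $b_0 \in B$: the distribution $\cD_{B, b_0}$ assigns mass $1 - 3\epsilon$ to $(b_0, 0)$ and mass $3\epsilon/(d-1)$ to each $(x, 0)$ with $x \in B \setminus \{b_0\}$. For every such $(B, b_0)$ the hypothesis $h^*_{[2d] \setminus B}$ lies in $\cH_d$, has zero error, and is $(\cG_d, \cD_\cX)$-invariant, because for every $b \in B$ the only other orbit member $-b$ lies outside $[2d]$ and is automatically labeled $0$ by $h^*$. Hence each $\cD_{B, b_0}$ is invariantly realizable, and the error of any proper output $h_{A'}$ equals $\sum_{x \in A' \cap B} \cD_{B, b_0}(x)$.

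The lower bound will then come from a Yao-style averaging argument under the uniform prior on $(B, b_0)$. Let $s$ be the number of distinct points in a sample $S \sim \cD_{B, b_0}^m$. Rare points contribute at most $3 m \epsilon$ to $\E[s]$, so picking $m \leq d/(36\epsilon)$ gives $\E[s] \leq 1 + d/12$ and by Markov $\Pr(s \leq d/4) \geq 1/2$. Conditioning on $s \leq d/4$, any optimal learner will exclude every observed point from $A'$, leaving $A'$ as a size-$d$ subset of the $(2d - s)$-element set $U = [2d] \setminus \text{seen}$; but under the uniform prior the unseen part of $B$ is a uniformly random $(d - s)$-subset of $U$, so $\abs{A' \cap B}$ is hypergeometric with mean $d(d-s)/(2d - s) \geq 3d/7$ and variance $O(d)$. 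A Chebyshev bound then yields $\abs{A' \cap B} > d/3$ with constant probability, whereupon the error $\frac{3\epsilon}{d-1} \abs{A' \cap B}$ strictly exceeds $\epsilon$. Combining gives overall failure probability exceeding $1/9$ under the prior, which transfers to a worst-case $(B, b_0)$ and delivers $m = \Omega(d/\epsilon)$ for every proper $\cA$, in particular for DA and standard ERM.

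The step I expect to be most delicate is tuning the rare-point mass and the threshold for $s$ so that the Chebyshev estimate produces error strictly larger than $\epsilon$ rather than merely $\Theta(\epsilon)$; the constants above ($3\epsilon$, $s \leq d/4$) are tentative and will likely need adjustment to survive the combined Markov and Chebyshev slack. Secondary bookkeeping items are verifying that placing any observed point into $A'$ only increases the error, checking that the heavy atom $b_0$ is seen with overwhelming probability so that excluding it costs nothing in the posterior analysis, and absorbing small-$d$ regimes (where the hypergeometric concentration is loose) into the hidden $\Omega$-constant.
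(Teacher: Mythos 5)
Your proof is correct and takes essentially the same approach as the paper: a uniform Yao-style prior over the target $d$-subset, a heavy atom plus light $\Theta(\epsilon/d)$ atoms on that subset, a Markov bound showing only $O(d)$ distinct light points are observed, and a posterior-uniformity argument forcing any proper output to err on a constant fraction of the unseen part. Two places where the paper's version is slightly cleaner are worth noting: it computes the posterior-expected error directly via linearity of expectation followed by reverse Markov rather than a hypergeometric Chebyshev bound (avoiding the variance calculation entirely), and it locates the heavy atom at a fresh point $0$ labeled identically by every hypothesis in $\cH'$, which dispenses with the $b_0$ bookkeeping and also keeps the construction nondegenerate at $d=1$, where your in-$B$ heavy atom completely reveals $B$ and the lower bound collapses.
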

The theorem shows that DA is sub-optimal as we will show that the optimal sample complexity is characterized by $\vco(\cH,\cG)$ in Theorem~\ref{thm:inv-opt}. 
We provide an idea of the construction here and defer the detailed proof to Appendix~\ref{app:inv-da-lb}.
Consider the $\cX$, $\cH$ and $\cG$ in Example~\ref{eg:vco0}.
Pick the target function $\ind{x\in A}$ uniformly at random from $\cH$ and let the data distribution only put probability mass on points in $[2d]\setminus A$, the orbits of which are labeled as $0$ by the target function.
Then any proper learner must predict $d$ unobserved examples of $[2d]$ as $1$, which leads to high error if the learner observes fewer than $d/2$ examples.
Theorem~\ref{thm:inv-da-lb} also implies that for any hypothesis class including $\cH$ as a subset,
there exists a DA learner (i.e., a proper learner fitting the augmented data) whose sample complexity is $\Omega(\frac{d}{\epsilon})$.

Theorem~\ref{thm:inv-da-ub} shows that the sample complexity of DA is $\tilde O(\frac{\vcao(\cH,\cG)}{\epsilon})$, better than that of ERM in standard PAC learning, $\tilde O(\frac{\vcd(\cH)}{\epsilon})$.
This is insufficient to show that DA outperforms ERM as it might be possible that ERM can also achieve better sample complexity in presence of transformation invariances.
To illustrate that DA indeed "helps", we show that any algorithm without exploiting the transformation invariances still requires sample complexity of $\Omega(\frac{\vcd(\cH)}{\epsilon})$.
\begin{theorem}\label{thm:inv-da-help}
    For any $\cH$, there exists a group $\cG$ with $\vcao(\cH,\cG) \leq 5$ s.t. $\cM_\inv(\epsilon,\delta;\cH,\cG,\cA)=\Omega(\frac{\vcd(\cH)}{\epsilon} +\frac{1}{\epsilon}\log \frac{1}{\delta})$ for any algorithm $\cA$ not given any information about $\cG$ (e.g., ERM).
\end{theorem}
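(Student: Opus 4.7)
I would prove this by constructing a single group $\cG$ whose action \emph{hides} the invariance from any $\cG$-oblivious learner, reducing that learner's task to a standard PAC problem for $\cH$. Concretely, let $d=\vcd(\cH)$ and pick a shattered set $X=\{x_1,\ldots,x_d\}$ for $\cH$. Partition $\cX$ into five blocks $B_1,\ldots,B_5$ with $x_1\in B_1$, $\{x_2,\ldots,x_d\}\subseteq B_2$, and the remaining points distributed arbitrarily among the blocks. Define $\cG=\prod_{k=1}^5\Sym(B_k)$, acting on $\cX$ by permuting within each block. The orbits of $\cG$ are exactly $B_1,\ldots,B_5$, so any set of points with pairwise disjoint orbits has size at most $5$, giving $\vcao(\cH,\cG)\le 5$.

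\textbf{Hard instance and lower bound.} Inside the invariantly realizable setting for this $\cG$, I use the Ehrenfeucht--Haussler--Kearns--Valiant distribution on $X$: mass $1-8\epsilon$ on $x_1$ and mass $\tfrac{8\epsilon}{d-1}$ on each of $x_2,\ldots,x_d$. Choose $h^*\in\cH$ with $h^*(x_1)=0$ and $h^*(x_i)=1$ for $i\ge 2$; this is realizable by shattering, and $(\cG,\cD_\cX)$-invariant since $\supp(\cD_\cX)=X$ meets $B_1$ only in the singleton $\{x_1\}$ (no constraint) and $h^*$ is already constant on $B_2\cap\supp(\cD_\cX)=\{x_2,\ldots,x_d\}$. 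Now fix any learner $\cA$ whose output depends only on the sample: because the induced distribution of $S\sim\cD^m$ is identical to that of a standard realizable PAC sample from $(\cD,h^*)$, the classical EHKV argument applies. When $m=o(d/\epsilon)$, with probability at least $1/9$ the sample misses at least $(d-1)/8$ of the light points; using the shattering of $X$ by $\cH$, an adversary can exhibit $h\in\cH$ consistent with $S$ but disagreeing with $h^*$ on these unseen lights, incurring error $\ge\epsilon$. Hence $\cM_\inv(\epsilon,\delta;\cH,\cG,\cA)=\Omega(d/\epsilon+\tfrac{1}{\epsilon}\log\tfrac{1}{\delta})$, the $\log(1/\delta)$ term following from the standard repetition argument.

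\textbf{Main obstacle.} The delicate point is that forcing $\vcao(\cH,\cG)\le 5$ typically collapses the set of $(\cG,\cD_\cX)$-invariant targets to a constant-size class, against which a direct Fano argument gives only an $\Omega(1/\epsilon)$ bound; I therefore cannot recover the $\vcd(\cH)$ dependence from target multiplicity alone. The construction above sidesteps this by packing all EHKV ``light'' points into a single orbit $B_2$, so that the only invariance constraint is the mild ``$h^*$ is constant on $\{x_2,\ldots,x_d\}$'', leaving the shattering structure of $\cH$ intact. The $\vcd(\cH)$ dependence is then recovered from the multiplicity of \emph{consistent non-invariant} hypotheses in $\cH$ that a $\cG$-oblivious learner cannot rule out. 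Making the final step rigorous for \emph{any} such $\cA$ (not just ERM) requires reading ``not given information about $\cG$'' as: $\cA$'s PAC guarantee must hold uniformly over all possible $\cG$ compatible with the promise, so that, a priori, $\cA$ is indistinguishable from a standard PAC learner for $\cH$ and inherits its $\Omega(\vcd(\cH)/\epsilon)$ worst-case complexity.
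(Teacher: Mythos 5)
Your construction does not work, and the gap is precisely the one you flag in your ``Main obstacle'' paragraph, which your proposed fix does not actually resolve. Once you fix a single group $\cG$ with all of $\{x_2,\ldots,x_d\}$ in one orbit $B_2$, the invariantly realizable promise forces every legal target $h^*$ to be constant on $\{x_2,\ldots,x_d\}$ (since $(\cG,\cD_\cX)$-invariance requires $h^*$ to be constant on the full orbit of any point in the support). So the set of admissible $(h^*,\cD)$ pairs restricted to $X$ is of constant size, and a trivial $\cG$-oblivious algorithm that always outputs ``label $x_1$ by majority vote, label everything else by the observed majority on the light points'' has $O(\tfrac{1}{\epsilon}\log\tfrac{1}{\delta})$ sample complexity on your hard instance — it never needs to know $\cG$. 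Your sentence ``an adversary can exhibit $h\in\cH$ consistent with $S$ but disagreeing with $h^*$'' is not a lower bound for an arbitrary learner: the existence of a bad consistent hypothesis does not force the learner to pick it, and your ``interpretation'' that a $\cG$-oblivious $\cA$ is indistinguishable from a standard PAC learner and therefore inherits its worst case is circular — it asserts the conclusion rather than proving it, and is contradicted by the hardcoding learner just described.

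The paper avoids this by making the group itself the hidden object and coupling the target to it. It fixes a shattered set $X=\{x_1,\ldots,x_d\}$, draws $A^*\subseteq[d-1]$ uniformly at random, sets $\cG=\cG(A^*)$ (permutations of $[d-1]$ that preserve $A^*$, giving orbits $\{x_i:i\in A^*\}$, $\{x_i:i\notin A^*\}$, $\{x_d\}$, so $\vcao\le 5$ after bundling the remaining points), and sets $h^*=\ind{\{x_i\}_{i\in A^*}}$. Because there are $2^{d-1}$ choices of $A^*$ and the learner's output cannot depend on $\cG$, each unobserved light point's label is a fresh unbiased coin to the learner, and the EHKV-style counting goes through; the probabilistic method then yields a specific $\cG$ that is hard for any fixed $\cA$. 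The essential ingredient you are missing is this randomization over $\cG$ correlated with $h^*$ — it is what re-creates the ``exponentially many plausible targets'' that EHKV needs, while still keeping every individual $\cG$ at $\vcao\le 5$.
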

The basic idea is that, given a set of $k$ instances that can be shattered by $\cH$ for some $k>0$,
$\cG$ is uniformly at random picked from a set of $2^k$ groups, each of which partitions the set into two orbits in a different way.
If given $\cG$, the algorithm only need to observe one instance in each orbit to learn the labels of all $k$ instances.
If not, the algorithm can only randomly guess the label of every unobserved instance.
The detailed construction is included in Appendix~\ref{app:inv-da-help}.

\subsection{The optimal algorithm}\label{subsec:inv-opt}

We show that the optimal sample complexity is characterized by $\vco(\cH,\cG)$.
\begin{theorem}\label{thm:inv-opt}
    For any $\cH, \cG$ with $\vco(\cH,\cG)<\infty$, we have $\Omega(\frac{\vco(\cH,\cG)}{\epsilon} +\frac{1}{\epsilon}\log\frac{1}{\delta})\leq \cM_{\inv}(\epsilon, \delta;\cH,\cG)\leq O(\frac{\vco(\cH,\cG)}{\epsilon}\log \frac{1}{\delta})$.
\end{theorem}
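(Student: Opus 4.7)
The plan is to prove the lower and upper bounds separately. For the $\Omega(\vco(\cH,\cG)/\epsilon)$ lower bound, let $d = \vco(\cH,\cG)$ with witnessing set $X = \{x_1,\ldots,x_d\}$ whose orbits are pairwise disjoint and such that every labeling of $X$ is realized by some $(\cG,X)$-invariant hypothesis in $\cH$. Placing the data distribution with mass $2\epsilon/d$ on each $x_i$ (and the remaining $1-2\epsilon$ mass on a ``dummy'' point labeled $0$), and picking the target uniformly from the $2^d$ $(\cG,X)$-invariant labelings of $X$, a standard no-free-lunch argument forces $\Omega(d/\epsilon)$ samples; the invariantly realizable setting is respected because every candidate target is $(\cG,\cD_\cX)$-invariant by construction. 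The $\Omega(\log(1/\delta)/\epsilon)$ term comes from a two-orbit construction where one orbit carries probability $\epsilon$: with $m = O(\log(1/\delta)/\epsilon)$ samples the rare orbit is unseen with probability at least $\delta$, and any learner that never sees it errs by $\epsilon$ on one of the two possible labelings.

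For the upper bound, the key structural fact is that in the invariantly realizable setting the target $h^*$ is $(\cG,\cD_\cX)$-invariant, so for any finite $X \subset \supp(\cD_\cX)$ it lies in $\cH_X^{\inv} := \{h \in \cH : h \text{ is } (\cG,X)\text{-invariant}\}$, and the restriction $\cH_X^{\inv}|_X$, viewed as a class of orbit-labelings of $X$, has VC dimension at most $\vco(\cH,\cG)$---any larger shattered orbit set would directly witness a larger value of $\vco$ by definition. The algorithm is a 1-inclusion-graph predictor that exploits invariance at the test point: given training sample $S$ and test point $x$, collapse $S_\cX \cup \{x\}$ to one representative per $\cG$-orbit to obtain a finite set $X$, build the 1-inclusion graph of $\cH_X^{\inv}|_X$, and apply the Haussler--Littlestone--Warmuth orientation to predict on $x$. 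The standard transductive bound then gives expected error at most $\vco(\cH,\cG)/(m+1)$ on a fresh test point.

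To upgrade this transductive bound to the stated high-probability PAC bound of order $\vco(\cH,\cG)\log(1/\delta)/\epsilon$, I would use a standard confidence-boosting reduction in the realizable case: split $S$ into $O(\log(1/\delta))$ independent chunks, run the 1-inclusion predictor on each to obtain a candidate classifier, and output whichever candidate has smallest error on an auxiliary validation sample drawn from $\cD$. By Markov's inequality each candidate has true error at most $\epsilon$ with probability at least a constant, so with high probability at least one candidate is good, and the validation step selects it.

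The main obstacle is the VC-dimension bound on $\cH_X^{\inv}|_X$ together with its handling as the test point varies $X$: the effective class and the orbit partition both depend on $x$, so the 1-inclusion-graph analysis must go through uniformly across hold-out choices. A secondary technical issue is orbits that coincide within $S_\cX$ itself---the ``one representative per orbit'' reduction must be well-defined, the $(\cG,X)$-invariance constraint must actually be imposable inside $\cH$ on witnessing orbit sets (which is exactly what the $\vco$ definition guarantees), and consistency of the invariant restriction with the observed labels must follow from the $(\cG,\cD_\cX)$-invariance of $h^*$ together with $S_\cX \subset \supp(\cD_\cX)$.
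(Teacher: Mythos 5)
Your proposal follows essentially the same route as the paper: the lower bound restricts to a $\vco$-witnessing set and reduces to standard realizable PAC lower bounds, and the upper bound runs a Haussler--Littlestone--Warmuth 1-inclusion-graph predictor over the class of $(\cG,X)$-invariant restrictions (whose VC dimension is bounded by $\vco(\cH,\cG)$ by definition), giving expected error $\vco(\cH,\cG)/(m+1)$, upgraded to high probability by the standard split-and-validate confidence boost. Your ``collapse to one representative per orbit'' step is a cosmetic reparameterization of the paper's restriction to $(\cG,X)$-invariant labelings on the full distinct-element set $X_S\cup\{x\}$, and both correctly handle the test-point-dependent class and within-orbit coincidences exactly as you flag in your obstacle discussion.
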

Our algorithm is based on the 1-inclusion-graph predictor by~\cite{haussler1994predicting}.
Given hypothesis class $H$ and instance space $X=\{x_1,\ldots,x_t\}$, the classical 1-inclusion-graph consists of vertices $\{h_{|X}|h\in H\}$, which are labelings of $X$ realized by $H$, and two vertices are connected by an edge if and only if they differ at the labeling of exactly a single $x_i\in X$.
\cite{haussler1994predicting} shows that the edges can be oriented such that each vertex has in-degree at most $\vcd(H)$.
This orientation can be translated to a prediction rule.
Specifically, for any $i\in [t]$, given the labels of all instances in $X$ except $x_i$, if there are two hypotheses $h,h'\in H$ such that their labelings are consistent with the labels of $X\setminus\{x_i\}$ and different at $x_i$, then $h_{|X},h'_{|X}$ are two vertices in the graph and we predict the label of $x_i$ as the edge between $h_{|X},h'_{|X}$ is oriented against.
The average leave-one-out-error is upper bounded by $\frac{\vcd(H)}{t}$.
\begin{lemma}[Theorem~2.3 of \cite{haussler1994predicting}]\label{lmm:1-inclusion}
For any hypothesis class $H$ and instance space $X$ with $\vcd(H)<\infty$, there is a function $Q: (X\times \cY)^*\times X\mapsto \cY$ such that, for any $t\in \NN$ and sample $\{(x_1,y_1),\ldots,(x_t,y_t)\}$ that is realizable w.r.t. $H$,
\begin{align}
    \frac{1}{t!}\sum_{\sigma\in \Sym(t) } \ind{Q(\{x_{\sigma(i)},y_{\sigma(i)}\}_{i\in [t-1]},x_{\sigma(t)})\neq y_{\sigma(t)}}\leq \frac{\vcd(H)}{t}\,,\label{eq:1-inclusion}
\end{align}
where $\Sym(t)$ denotes the symmetric group on $[t]$. The function $Q$ can be constructed by a 1-inclusion-graph predictor.
\end{lemma}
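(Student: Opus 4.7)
The strategy is to build the 1-inclusion graph on $H_{|X}$, orient its edges so that every vertex has low in-degree, and then read off $Q$ from the orientation. Concretely, for any finite $X \subseteq \cX$ of size $t$, let $V = \{h_{|X} : h \in H\} \subseteq \{0,1\}^t$ and place an edge between $u,v \in V$ whenever $u$ and $v$ differ on exactly one coordinate $i \in [t]$; label that edge with $i$. The edge set at coordinate $i$ records pairs of realizable labelings of $X$ that disagree only on $x_i$. Given any orientation of this graph, define the predictor as follows: on input $(\{(x_j,y_j)\}_{j \in [t-1]}, x_t)$, look at the at most two vertices of $V$ consistent with the labels $y_1,\ldots,y_{t-1}$ on $X\setminus\{x_t\}$; if there is one, output its $x_t$-coordinate; if there are two, they form an edge labeled $t$, and we output the $x_t$-coordinate of the vertex that this edge is oriented \emph{away from}.

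\textbf{The orientation step (the main obstacle).} The crux is to show that one can orient all edges of this graph so that every vertex has in-degree at most $\vcd(H)$. The standard route is to establish the density bound
\[
\frac{|E(G_U)|}{|U|} \leq \vcd(H)
\]
for every induced subgraph on $U \subseteq V$, which is a consequence of the Sauer--Shelah lemma combined with a shifting argument (each edge labeled $i$ contributes to shatterable sets containing $x_i$, and summing over $i$ gives the bound by counting dichotomies). Once this uniform density bound holds, a Hall's theorem / max-flow argument on the bipartite edge-vertex incidence graph produces the desired orientation with maximum in-degree $\leq \vcd(H)$. This combinatorial orientation lemma is exactly what makes the analysis possible, and is the step that requires the most work.

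\textbf{From orientation to the error bound.} Fix a realizable sample $\{(x_i,y_i)\}_{i \in [t]}$ and let $v^* \in V$ denote the true labeling $(y_1,\ldots,y_t)$. For a uniformly random permutation $\sigma$, the predictor errs on $x_{\sigma(t)}$ only if there exists $v' \in V$ agreeing with $v^*$ on $X \setminus \{x_{\sigma(t)}\}$ and differing at $x_{\sigma(t)}$, \emph{and} the edge between $v^*$ and $v'$ (which is labeled $\sigma(t)$) is oriented into $v^*$. Thus the number of permutations producing an error is exactly the number of indices $i \in [t]$ such that an incoming edge at $v^*$ carries label $i$, times $(t-1)!$, since conditioned on $\sigma(t)=i$ the labeling error is determined. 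Summing,
\[
\frac{1}{t!}\sum_{\sigma \in \Sym(t)} \ind{Q(\cdots) \neq y_{\sigma(t)}} = \frac{\text{in-deg}(v^*)}{t} \leq \frac{\vcd(H)}{t},
\]
which is the stated inequality. The only subtlety is that $Q$ must be defined as a single function independent of the sample; this is achieved by fixing once and for all an orientation of the 1-inclusion graph for every finite $X$ (which can be done since $Q$ is only queried on finite subsets), and using these orientations as a lookup table when $Q$ is invoked.
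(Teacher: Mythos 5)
The paper does not prove this lemma; it simply imports it from Haussler, Littlestone, and Warmuth (1994), so there is no in-paper argument to compare against, but your reconstruction is a faithful outline of the standard proof. The graph construction, the convention for reading a predictor off an orientation (output the coordinate of the vertex the edge points \emph{away} from, so that errors correspond exactly to in-edges at the true vertex $v^*$), and the leave-one-out accounting giving $\mathrm{in\text{-}deg}(v^*)/t$ are all correct.

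Two remarks on where the proposal is looser than a full proof. First, the density bound $|E(G_U)|\leq \vcd(H)\,|U|$ for every induced subgraph is the genuine combinatorial content, and you rightly flag it as the crux, but the attribution is slightly off: it is not a corollary of Sauer--Shelah. Sauer--Shelah bounds $|V|$, whereas the edge-density bound is an independent statement; both happen to be provable by the same down-shifting technique, but one does not follow from the other, and a reader relying on your phrasing would not know how to supply the missing argument. Second, the lemma is stated for a multiset $\{(x_1,y_1),\ldots,(x_t,y_t)\}$ while your construction implicitly treats $X$ as a set of $t$ distinct points; the fix is routine (if $x_{\sigma(t)}$ repeats a training point the prediction is forced and correct, so one reduces to the distinct-support graph), but it should be said. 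Once the density lemma is proved and the multiset case noted, the orientation-via-Hall argument and the final averaging close the proof exactly as you describe.
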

Denote by $Q_{H,X}$ the function guaranteed by Eq~\eqref{eq:1-inclusion} for hypothesis class $H$ and instance space $X$.
For any $t\in \NN$ and $S = \{(x_1,y_1),\ldots,(x_t,y_t)\}$, 
let $X_S$ denote the set of different elements in $S_\cX$.
Define $\cH(X_S):=\{h_{|X_S}|h\in \cH \text{ is } (\cG,X_S) \text{-invariant}\}$ being the set of all possible $(\cG,X_S)$-invariant labelings of $X_S$.
We then define our algorithm $\cA(S)$ by letting $\cA(S)(x) = Q_{\cH(X_S\cup \{x\}), X_{S}\cup \{x\}}(S, x)$ if $\cH(X_S\cup \{x\})\neq \emptyset$ and predicting arbitrarily if $\cH(X_S\cup \{x\})= \emptyset$.
That is to say, $\cA(S)$ needs to construct a function $Q$ for every test example. 
Given any test example, this 1-inclusion-graph-based algorithm takes into account whether the prediction can be invariant over the whole orbit of the test example and thus benefits from the invariance of test examples.
This can provide some theoretical justification for test-time adaptation such as~\cite{wang2020tent}.
By definition, we have $\vcd(\cH(X_S\cup \{x\}))\leq \vco(\cH,\cG)$ for all $S$ and $x$.
Then the expected error of $\cA$ can be bounded by $\vco(\cH,\cG)$ through Lemma~\ref{lmm:1-inclusion}.
We defer the details and the proof of Theorem~\ref{thm:inv-opt} to Appendix~\ref{app:inv-opt}.
Note that the results of Theorem~\ref{thm:inv-opt} also apply to non-group transformations\footnote{In this case, we only assume that $\cG$ contains the identity element.}.

\section{Relaxed realizable setting}\label{sec:re}
In this section, we discuss the results in the relaxed realizable setting (see Definition~\ref{def:re-pac}).
As we can see, DA belongs to the family of algorithms not distinguishing between the original and transformed data. 
In Section~\ref{subsec:dahurts}, we provide a tight characterization $\mu(\cH,\cG)$ (Definition~\ref{def:mu}) on the sample complexity of this family algorithms.
This implies that when $\mu(\cH,\cG)>\vcd(\cH)$, there exists a distribution s.t. DA performs worse than ERM.
We then show that there exists $\cH,\cG$ such that $\mu(\cH,\cG)>\vcd(\cH)$ and the gap can be arbitrarily large.
In Section~\ref{subsec:re-opt}, we provide two optimal algorithms, both of which have to distinguish between the original and transformed data.

\subsection{DA can even ``hurt''}\label{subsec:dahurts}
In the invariantly realizable setting, the optimal algorithm based on 1-inclusion graphs does not need to distinguish between the original and transformed data since $\cH(X_S\cup \{x\})$ in the algorithm is fully determined by the augmented data.
However, in the relaxed realizable setting, distinguishing between the original and transformed data is crucial.
In the following, we will provide a characterization of the sample complexity of algorithms not distinguishing between the original and transformed data, including DA.
Such a characterization induces a sufficient condition when DA ``hurts''.

Let $\cM_\da (\epsilon,\delta;\cH,\cG)$ be the smallest integer $m$ for which there exists a learning rule $\cA$ such that for every $\cG$-invariant data distribution $\cD$, with probability at least $1-\delta$ over $S_\trn\sim \cD^m$, $\err_\cD(\cA(\cG S_\trn))\leq \epsilon$.
The quantity $\cM_\da (\epsilon,\delta;\cH,\cG)$ is the optimal sample complexity achievable if algorithms can only access the augmented data without knowing the original training set.
In standard PAC learning, the optimal sample complexity can be characterized by the maximum density of any subgraph of the 1-inclusion graphs, which is actually equal to the VC dimension~\citep{haussler1994predicting, daniely2014optimal}.
Analogously, we characterize $\cM_\da (\epsilon,\delta;\cH,\cG)$ based on a variant of 1-inclusion graphs, which is constructed as follows.

In the 1-inclusion graph for standard PAC learning~\citep{haussler1994predicting}, given any sequence of instances $\bx = (x_1,\ldots,x_t)$, the vertices are labelings of $\vec x$ and two vertices are connected iff. they are different at only one instance in $\vec x$ and this instance appears once.
In our setting, the input is a multiset of labeled orbits and an unlabeled test instance, hence the vertices are pairs of labelings of orbits and unlabeled instances.
Specifically, for any $t\in \NN$,
given a multi-set of orbits $\bphi = \{\phi_1,\ldots,\phi_t\}$ of some unknown original data,
a labeling $\bff\in \cY^t$ is possible iff. there exists a sequence of instances $\bx = (x_1,\ldots,x_t) \in \prod_{i=1}^t \phi_i$ and a hypothesis $h\in \cH$ such that $h_{|\bx} = \bff$ and that instances in the same orbit are labeled the same,
i.e., $(\cG x_i\times \{1-f_i\})\cap (\{(x_j,f_j)\}_{j\in [t]})=\emptyset$ for all $i\in [t]$.
We denote the set of all possible labelings of $\bphi$ by
\begin{small}
\begin{equation}\label{eq:def-Pi}
    \Pi_\cH(\bphi) := \{\bff\in \cY^t| \exists h\in \cH, \exists \bx\in \prod_{i=1}^t \phi_i, h_{|\bx} = \bff \text{ and }\cup_{i\in [t]}(\cG x_i\times \{1-f_i\})\cap \{(x_j,f_j)|j\in [t]\}=\emptyset\}\,.
\end{equation}
\end{small}
Denote the set of all such sequences $\bx \in \prod_{i=1}^t \phi_i$ of instances that can be labeled as $\bff$ by
\[\cU_{\bff}(\bphi) := \{\bx\in  \prod_{i=1}^t \phi_i|\exists h\in \cH, h_{|\bx} = \bff \text{ and }\cup_{i\in [t]}(\cG x_i\times \{1-f_i\})\cap \{(x_j,f_j)|j\in [t]\}=\emptyset\}\,.\]
Denote the set of all pairs of labeling and its corresponding instance sequence by
\begin{equation}\label{eq:def-B}
    B(\cH,\cG,\bphi):=\cup_{\bff\in \Pi_\cH(\bphi)}\{\bff\}\times \cU_{\bff}(\bphi)\,.
\end{equation}
For any $(\bff,\vec x)\in B(\cH,\cG,\bphi)$, $\vec x$ is a candidate of original data and $\bff$ is a candidate of labeling of $\vec x$.
Now we define a graph $G_{\cH,\cG}(\bphi)$, 
where the vertices are all pairs of labeling $\bff \in \Pi_\cH(\bphi)$ and an element in a instance sequence corresponding to $\bff$. Formally, the vertex set is
\[V = \{(\bff,x_i)|(\bff, \bx)\in B(\cH,\cG,\bphi), i\in [t]\}\,.\]
For every two vertices $(\bff,x)$ and $(\bg,z)$, they are connected if and only if (i) $x=z$; (ii) there exists $j\in[t]$ such that $x\in \phi_j$, $f_i =g_i, \forall i\neq j$ and $f_j \neq g_j$; 
and (iii) $\phi_j$ only appear once in $\bphi$. 
Each edge can be represented by $e=\{\bff,\bg,x\}$ and we denote $E$ the edge set. 
If an edge $e=\{\bff,\bg,x\}$ exists, the edge could be recovered given only $\bff, x$ or $\bg,x$, and thus, we also denote by $e(\bff,x) =e(\bg,x) =\{\bff,\bg,x\}$.
Any algorithm accessing only the augmented data corresponds to an orientation of edges in the graph we constructed, which leads to the following definition.
    
\begin{definition}\label{def:mu}
    Let $w: E\times \Pi_\cH(\bphi)\mapsto [0,1]$ be a mapping such that for every $e=\{\bff,\bg,x\}, w(e,\bff)+w(e,\bg)=1$ and $w(e,\bh)=0$ if $\bh\notin e$ and let $W$ be the set of all such mappings.
    Note that $w$ actually defines a randomized orientation of each edge in graph $G_{\cH,\cG}(\bphi)$: the edge $e=\{\bff,\bg,x\}$ is oriented towards vertex $(\bff,x)$ with probability $w(e,\bff)$.
    For any $(\bff,\bx)\in B(\cH,\cG,\bphi)$, it corresponds to a cluster of vertices $\{(\bff, x_i)|i\in [t]\}$ in $G_{\cH,\cG}(\bphi)$ and $\sum_{i\in [t]: \exists e\in E, \{\bff,x_i\}\subset e} w(e(\bff,x_i),\bff)$ is the expected in-degree of the cluster. 
    Let $\Delta(B(\cH,\cG,\bphi))$ denote the set of all distributions over $B(\cH,\cG,\bphi)$. 
    For any $P\in \Delta(B(\cH,\cG,\bphi))$, we define 
    \begin{equation}\label{eq:mu-P}
        \mu(\cH,\cG,\bphi,P) :=  \min_{w\in W}\EEs{(\bff,\bx)\sim P}{ \sum_{i\in [t]: \exists e\in E, \{\bff,x_i\}\subset e} w(e(\bff,x_i),\bff)}\,.
    \end{equation}
    By taking the supremum over $P$, we define
    $\mu(\cH,\cG,\bphi) := \sup_{P\in \Delta(B(\cH,\cG,\bphi))} \mu(\cH,\cG,\bphi,P)$. By taking supremum over $\bphi$, we define $\mu(\cH,\cG,t) := \sup_{\bphi:\abs{\bphi}=t} \mu(\cH,\cG,\bphi)$
    and
    \begin{equation}
        \mu(\cH,\cG) := \sup_{t\in \NN} \mu(\cH,\cG,t)\,.
    \end{equation}
\end{definition}
\begin{theorem}\label{thm:re-da}
    For any $\cH,\cG$, $\cM_{\da}(\epsilon,\delta;\cH,\cG)$ satisfies the following bounds:
    \begin{itemize}[leftmargin = *]
        \item For all $t\geq 2$ with $\mu(\cH,\cG,t)<\infty$, $\cM_{\da}(\epsilon,\frac{\mu(\cH,\cG,t)}{16(t-1)};\cH,\cG)=\Omega( \frac{\mu(\cH,\cG,t)}{\epsilon})$.
        This implies that if $\mu(\cH,\cG)<\infty$, there exists a constant $c$ dependent on $\cH,\cG$ s.t. $\cM_{\da}(\epsilon,c;\cH,\cG)=\Omega( \frac{\mu(\cH,\cG)}{\epsilon})$.
        \item For all $t$ with $\frac{1}{6}\leq \frac{\mu(\cH,\cG,t)}{t}\leq \frac{1}{3}$, $\cM_{\da}(\epsilon,\delta;\cH,\cG)=O( \frac{\mu(\cH,\cG,t)}{\epsilon} \log^2\frac{\mu(\cH,\cG,t)}{\epsilon} + \frac{1}{\epsilon}\log \frac{1}{\delta})$ .
        \item If $\mu(\cH,\cG)<\infty$, $\cM_{\da}(\epsilon,\delta;\cH,\cG)=O( \frac{\mu(\cH,\cG)\log(1/\delta)}{\epsilon})$.
    \end{itemize}
\end{theorem}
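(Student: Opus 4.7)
All three bullets follow from a single duality: any learner $\cA$ that sees only the augmented data $\cG S_\trn$ and a query point $z$ can be encoded as a (possibly randomized) orientation $w_\cA \in W$ of the graph $G_{\cH,\cG}(\bphi)$, where $\bphi$ is the multiset of orbits of $S_\trn$ together with $\cG z$. The upper bound picks $w^*$ that minimizes the expected in-degree at the queried vertex; the lower bound pits any $w_\cA$ against the worst $P \in \Delta(B(\cH,\cG,\bphi))$. Sion's minimax theorem, applied to the bilinear objective in $w$ and $P$ on the compact convex pair $(W,\Delta(B(\cH,\cG,\bphi)))$, certifies that both sides meet at the common value $\mu(\cH,\cG,\bphi)$.

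\textbf{Upper bound (bullets 2 and 3).} The algorithm is a relaxed-realizable analog of the 1-inclusion predictor of~\cite{haussler1994predicting} adapted to $G_{\cH,\cG}(\bphi)$. Given augmented training data $\cG S_\trn$ of size $m=t-1$ and a test point $z$, it forms $\bphi$, selects an orientation $w^*\in W$ witnessing $\mu(\cH,\cG,\bphi)=\sup_{P\in\Delta(B(\cH,\cG,\bphi))}\mu(\cH,\cG,\bphi,P)$ via the dual LP, identifies the labeling $\bff \in \Pi_\cH(\bphi)$ consistent with $\cG S_\trn$ on the training orbits, and outputs a label for $z$ by tossing the coin $w^*$ on the edge incident to $(\bff,z)$, or the forced label when no such edge exists. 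A leave-one-out symmetrization over the $t$ i.i.d. draws (permuting which index plays the role of the test point) bounds the expected per-query error by $\mu(\cH,\cG,\bphi)/t \leq \mu(\cH,\cG,t)/t$. Under bullet~2 this ratio is $\Theta(1)$, so taking $m=\Theta(\mu(\cH,\cG,t)/\epsilon)$ yields expected error $O(\epsilon)$, and a Hanneke-style confidence-boosting reduction contributes the $\log^2(\mu/\epsilon)$ factor and the $\log(1/\delta)/\epsilon$ additive term. Bullet~3 follows by letting $t\to\infty$ when $\mu(\cH,\cG)<\infty$: the ratio $\mu/t$ vanishes, so a Chernoff-based majority vote over $\Theta(\log(1/\delta))$ independent blocks of $O(\mu(\cH,\cG)/\epsilon)$ samples suffices.

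\textbf{Lower bound (bullet 1).} Fix $t$ with $\mu(\cH,\cG,t)<\infty$, and pick $\bphi$ together with a distribution $P \in \Delta(B(\cH,\cG,\bphi))$ whose value equals $\mu(\cH,\cG,t)$ up to a constant. Define the hard family $\{\cD_{\bff,\bx}\}_{(\bff,\bx)\in\supp(P)}$: each $\cD_{\bff,\bx}$ assigns uniform mass proportional to $\epsilon/(t-1)$ to each $(x_i,f_i)$ for $i\in [t]$ and places the remaining mass on a safe labeled point classified correctly by every candidate target. Any augmentation-blind learner $\cA$ induces a randomized $w_\cA \in W$ (its output depends only on the augmented labeled orbits and the query). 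A permutation-based argument over $m+1$ draws shows that if the training set misses a uniformly random index $i$---an event of probability $\Omega(1)$ whenever $m = O((t-1)/\epsilon)$---the expected error of $\cA$ under $\cD_{\bff,\bx}$ averaged against $(\bff,\bx)\sim P$ is at least a constant times $\epsilon \cdot \mu(\cH,\cG,\bphi,P)/(t-1)$. Markov's inequality then converts this expected-error bound into the failure probability $\Omega(\mu(\cH,\cG,t)/(t-1))$ of exceeding error $\epsilon/2$, matching the stated confidence $\mu(\cH,\cG,t)/(16(t-1))$.

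\textbf{Main obstacle.} The chief subtlety is that edges of $G_{\cH,\cG}(\bphi)$ exist only for orbits appearing exactly once in $\bphi$. When i.i.d. sampling yields a repeated orbit in the training-plus-test sequence, the corresponding label is already forced by the augmented training data (and contributes no error), but the permutation symmetry underlying the leave-one-out calculation must then be restricted to permutations preserving the orbit-multiplicity profile, or equivalently conditioned on the unordered multiset of orbits. Keeping this correspondence between augmentation-blind learners and elements of $W$ intact across orbit collisions, while simultaneously verifying the Sion minimax equality needed to realize $\mu(\cH,\cG,\bphi)$ as an orientation in both directions, is the principal technical burden; once the expected-error bound is in hand, the boosting and Chernoff-style confidence arguments follow routinely.
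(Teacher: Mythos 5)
Your overall plan matches the paper's: encode any augmentation-blind learner as an orientation $w \in W$ of $G_{\cH,\cG}(\bphi)$, use a minimax argument to certify an orientation $w^*$ whose worst-case expected in-degree equals $\mu(\cH,\cG,\bphi)$, apply a leave-one-out exchangeability argument to bound the expected error by $\mu(\cH,\cG,\bphi)/(n+1)$, and then boost to high probability (bullets 2, 3) or construct a hard distribution from $P$ (bullet 1).

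Two details deviate from the paper and would need repair. First, your hard distribution $\cD_{\bff,\bx}$ spreads light mass $\propto \epsilon/(t-1)$ over all $t$ points $x_1,\ldots,x_t$ and parks the heavy mass on a fresh ``safe labeled point classified correctly by every candidate target.'' This is problematic: introducing a point outside $\bx$ changes the orbit multiset away from the fixed $\bphi$ that defines $\mu(\cH,\cG,\bphi,P)$, and no universally-safe point need exist in general. The paper avoids both issues by designating $x_t$ itself as the heavy point (mass $1 - 16(t-1)\epsilon/\mu$), putting light mass only on $x_1,\ldots,x_{t-1}$, and measuring errors only on the light points via $\err'$; the support stays exactly $\{x_1,\ldots,x_t\}$ and no safety assumption is needed. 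Second, your claim that Sion's minimax applies ``to the compact convex pair $(W, \Delta(B(\cH,\cG,\bphi)))$'' is too strong when $\abs{E}=\infty$: only the $w$-side is compact (via Tychonoff on $[0,1]^E$), while $\Delta(B)$ (equivalently $\tilde{A}$, the finite convex hull) is generally not. The paper accommodates this by working with a limiting sequence $a_1, a_2, \ldots$ approaching the supremum rather than assuming a maximizer, and by verifying continuity and linearity of $\BL$ in the product topology. With these two fixes, plus the observation that the paper uses $\alpha$-Boost with a sample compression bound (rather than a Hanneke-style reduction) to get the $\log^2$ factor in bullet 2, your argument lines up with the paper's.
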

Theorem~\ref{thm:re-da} implies that when $\mu( \cH,\cG)>\vcd(\cH)$, there exists a distribution such that 
any algorithm not differentiating between the original and transformed data performs worse than simply applying ERM over the original data.
We defer the proof of Theorem~\ref{thm:re-da} to Appendix~\ref{app:re-da}.
As we can see, the definition of $\mu(\cH,\cG)$ is not intuitive and it might be difficult to calculate $\mu(\cH,\cG)$ as well as to further determine when $\mu(\cH,\cG)>\vcd(\cH)$. 
We introduce a new dimension as follows, which lower bounds $\mu(\cH,\cG)$ and is easier to calculate.
\begin{definition}[VC Dimension of orbits generated by $\cH$]\label{def:dim}
    The VC dimension of orbits generated by $\cH$, denoted $\dim(\cH, \cG)$, 
    is defined as the largest integer $k$ for which there exists a set $X=\{x_1,\ldots,x_k\}\subset \cX$ such that (i) their orbits $\bphi = \{\phi_1,\ldots,\phi_k\}$ are pairwise disjoint, (ii) $\Pi_\cH(\bphi) = \cY^k$ (defined in Eq~\eqref{eq:def-Pi}) and (iii) there exists a set $B=\{(\bff, \bx_\bff)\}_{\bff\in \cY^k}\subset B(\cH,\cG,\bphi)$ (defined in Eq~\eqref{eq:def-B}) such that $\bff \oplus \bg = \be_i$ implies $x_{\bff,i} = x_{\bg,i}$ for all $i\in [k], \bff,\bg\in \cY^k$.
\end{definition}
\begin{theorem}\label{thm:re-mu-dim}
    For any $\cH,\cG$, $\mu(\cH,\cG)\geq {\dim(\cH,\cG)}/{2}$.
\end{theorem}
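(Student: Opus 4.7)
The plan is to lower bound $\mu(\cH, \cG)$ by exhibiting a specific $\bphi$ and distribution $P$ that witness $\mu(\cH, \cG, \bphi, P) \geq k/2$, where $k := \dim(\cH, \cG)$. Let $X = \{x_1,\ldots,x_k\}$, $\bphi = \{\phi_1,\ldots,\phi_k\}$, and $B = \{(\bff, \bx_\bff)\}_{\bff \in \cY^k}$ be the witnesses supplied by Definition~\ref{def:dim}, and take $P$ to be the uniform distribution over $B$.

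The heart of the argument will be to show that $B$ induces a hypercube-like subgraph inside $G_{\cH, \cG}(\bphi)$. For each $\bff \in \cY^k$ and $i \in [k]$, set $\bg := \bff \oplus \be_i$ and consider the candidate edge $e_{\bff, i} := \{\bff, \bg, x_{\bff, i}\}$. I will verify the three edge conditions: (i) the third clause of Definition~\ref{def:dim} gives $x_{\bff, i} = x_{\bg, i}$, so the two vertices $(\bff, x_{\bff, i})$ and $(\bg, x_{\bg, i})$ share the same instance coordinate; (ii) $\bff$ and $\bg$ differ exactly at coordinate $i$, and $x_{\bff, i} \in \phi_i$ since $\bx_\bff \in \prod_j \phi_j$; (iii) the orbits being pairwise disjoint means $\phi_i$ occurs once in $\bphi$. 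Both endpoints are in the vertex set because $(\bff, \bx_\bff), (\bg, \bx_\bg) \in B \subseteq B(\cH, \cG, \bphi)$.

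The pay-off is a simple double-counting. For any orientation $w \in W$,
\[
\sum_{\bff \in \cY^k} \sum_{i=1}^k w(e_{\bff, i}, \bff) \;=\; \sum_{i=1}^k \sum_{\{\bff,\bg\}:\, \bff \oplus \bg = \be_i} \bigl(w(e_{\bff, i}, \bff) + w(e_{\bff, i}, \bg)\bigr) \;=\; k \cdot 2^{k-1},
\]
where the pairing is valid because $e_{\bg, i} = e_{\bff, i}$ (both refer to the same triple thanks to the coordinate equality $x_{\bff, i} = x_{\bg, i}$), and each pair contributes $1$ by the constraint $w(e, \bff) + w(e, \bg) = 1$. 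Dividing by $|B| = 2^k$ shows that $\mu(\cH, \cG, \bphi, P) = k/2$ independent of $w$, so $\mu(\cH, \cG) \geq k/2$ after taking suprema over $\bphi$ and $P$.

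The only subtle step to execute carefully is ensuring that the notation $e(\bff, x_{\bff, i})$ unambiguously picks out $e_{\bff, i}$, i.e., that the edge at any vertex $(\bff, x)$ is unique. This will follow from observing that any neighbor must flip exactly one coordinate $j$ with $x \in \phi_j$, and disjointness of $\phi_1, \ldots, \phi_k$ pins down $j$. After this verification the whole argument reduces to a clean averaging on the combinatorial hypercube that Definition~\ref{def:dim} was designed to encode, so I do not anticipate any further obstacle.
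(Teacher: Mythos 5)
Your proof is correct and follows the same approach as the paper: take the witness $\bphi$ and $B$ from Definition~\ref{def:dim}, let $P$ be uniform over $B$, and observe that the coordinate equality $x_{\bff,i}=x_{\bff\oplus\be_i,i}$ forces the expected in-degree to equal exactly $k/2$ for every orientation $w$ by pairing antipodal labelings along each axis. The paper's calculation is the same averaging argument (folding each pair into $\tfrac12(w(e,\bff)+w(e,\bg))$); you merely spell out the edge-existence and edge-uniqueness checks more explicitly.
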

The proof is included in Appendix~\ref{app:re-mu-dim}.
Through this dimension, 
we claim that the gap between $\mu(\cH,\cG)$ and $\vcd(\cH)$ can be arbitrarily large.
In the following, we give an example of $\cH,\cG$ with $\dim(\cH,\cG) \gg \vcd(\cH)$, which cannot be learned by DA but can be easily learned by ERM.
\begin{example}
    For any $d>0$, let $\cX = \{\pm 1\}\times \{\be_i|i\in [2d]\}\subset \R^{2d+1}$, $\cH = \{x_1> 0, x_1\leq 0\}$ and $\cG = \{I_{2d+1}, I_{2d+1}-2\diag(\be_1)\}$ (i.e., the cyclic group generated by flipping the sign of $x_1$).
    It is easy to check that $\vcd(\cH) = 1$.
    Let $X = \{(1,\be_1), (1,\be_2),\ldots,(1,\be_{2d})\}$ and then the orbits generated from $X$ are $\bphi = \{\{(-1,\be_i),(1,\be_i)\}|i\in [2d]\}\}$.
    For every labeling $\bff\in \cY^{2d}$, if $\sum_{i\in [2d]}f_i$ is odd, let $\bx_\bff = ((2f_i-1,\be_i))_{i=1}^{2d}$; if $\sum_{i\in [2d]}f_i$ is even, let $\bx_f = ((1-2f_i,\be_i))_{i=1}^{2d}$.
    It is direct to check that $(\bff,\bx_\bff)\in B(\cH,\cG,\bphi)$ for all $\bff\in \cY^{2d}$.
    Then for all $i\in [2d]$, $\bff\oplus \bg=\be_i$ implies $x_{\bff,i}=x_{\bg,i}$.
    Hence, $\dim(\cH,\cG) = 2d$, where $d$ can be an arbitrary positive integer.
    According to Theorem~\ref{thm:re-mu-dim}, we have $\mu(\cH,\cG)\geq d$.
\end{example}
The above example can be interpreted in a vision scenario. Let's consider an example of classifying land birds versus water birds.
The natural data is $2d$ images of land birds with land background and water birds with water background. The transformation set is composed of keeping the current background and changing the background from land (water) to water (land). 
Consider simple hypotheses depending on backgrounds only.
Specifically, $\cH = \{h_1,h_2\}$ with $h_1$ predicting all images with water background as water birds and $h_2$ predicting all images with water background as land birds.
Let the data distribution be the uniform distribution over all the original images.
Then given any training data, $h_1$ and $h_2$ have the same empirical loss on the augmented training data.
Thus, for any unobserved image, DA will make a mistake with constant probability. Hence DA requires at least $\Omega(d)$ sample complexity.
It is direct to check that standard ERM only needs one labeled instance to achieve zero error.

\textbf{Open question: }
It is unclear whether $\mu(\cH,\cG)$ is upper bounded by $\dim(\cH,\cG)$.
If true, then we can tightly characterize $\cM_\da (\epsilon,\delta;\cH,\cG)$ by $\dim(\cH,\cG)$.

\subsection{The optimal algorithms}\label{subsec:re-opt}
Different from the invariantly realizable setting, the optimal sample complexity in the relaxed realizable setting is characterized by $\vcao(\cH,\cG)$.
The optimal (up to log-factors) sample complexity can be achieved by another variant of 1-inclusion-graph predictor.
Besides, we propose an ERM-based algorithm, called ERM-INV (see Appendix~\ref{app:re-opt} for details), achieving the similar guarantee.

\begin{theorem}\label{thm:re-opt}
    For any $\cH,\cG$ with $\vcao(\cH,\cG)<\infty$, we have $\Omega(\frac{\vcao(\cH,\cG)}{\epsilon} + \frac{\log(1/\delta)}{\epsilon})\leq \cM_{\re}(\epsilon,\delta;\cH,\cG) \leq O\left(\min\left( \frac{\vcao(\cH,\cG)}{\epsilon}\log^3\frac{\vcao(\cH,\cG)}{\epsilon}+\frac{1}{\epsilon}\log \frac 1 \delta, \frac{\vcao(\cH,\cG)}{\epsilon}\log \frac{1}{\delta}\right)\right)$.
\end{theorem}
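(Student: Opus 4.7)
The plan is to establish the lower bound and the two upper bounds independently, with techniques parallel to those in Section~\ref{subsec:inv-opt} but with $\vcao$ replacing $\vco$ throughout.

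\textbf{Lower bound.} Let $d = \vcao(\cH,\cG)$ and fix an orbit-disjoint shattered witness $X = \{x_1,\ldots,x_d\}$ from Definition~\ref{def:vcao}. I would instantiate the classical VC no-free-lunch construction on $X$, producing a family $\{\cD_f\}_{f\in \{0,1\}^d}$ of distributions supported on $X$. Each $\cD_f$ is $\cG$-invariant because the orbits of $X$ are pairwise disjoint and $\supp(\cD_{f,\cX}) \subseteq X$: the labeling $f^*$ defined by $f^* \equiv f_i$ on $\cG x_i$ (extended arbitrarily elsewhere) is $(\cG, \cD_{f,\cX})$-invariant and has zero error. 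The shattering hypothesis $h^*\in\cH$ with $h^*(x_i) = f_i$ has zero error on $\cD_f$ without needing to be invariant, which is exactly what places the problem in the relaxed realizable setting rather than the invariantly realizable one. Standard PAC arguments then yield $\Omega(d/\epsilon + \log(1/\delta)/\epsilon)$.

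\textbf{First upper bound.} For $O(\vcao/\epsilon \cdot \log^3(\vcao/\epsilon) + \log(1/\delta)/\epsilon)$, I would use a 1-inclusion-graph predictor applied to orbit representatives instead of to invariant labelings. Given training $S$ and test $x$, the algorithm picks $Z \subseteq S_\cX \cup \{x\}$ consisting of one representative per distinct orbit under a first-occurrence rule, with $x$ as the representative of its own orbit. If $x$'s orbit is already in $S_\cX$, output the corresponding training label (which equals $f^*(x)$ by invariance of $f^*$ on $\supp(\cD_\cX)$); otherwise, invoke $Q_{\cH|_Z, Z}$ from Lemma~\ref{lmm:1-inclusion}, well-defined since $Z$ is orbit-disjoint and hence $\vcd(\cH|_Z) \leq \vcao(\cH,\cG)$. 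To analyze, I would draw $t{+}1$ i.i.d.~samples and cycle through which one is used as the test. Leave-one-out indices whose orbit appears more than once in the sample contribute no error; for those whose orbit appears exactly once, removing that sample preserves the orbit-representative set $Z'$ of the full sample (thanks to the first-occurrence rule), so the error coincides with the 1-inclusion-graph leave-one-out error on $\cH|_{Z'}$. Summing these is bounded by $\vcd(\cH|_{Z'}) \leq \vcao(\cH,\cG)$, giving expected test error at most $\vcao(\cH,\cG)/(t{+}1)$. A Hanneke-style sub-sample-and-majority-vote confidence boosting then upgrades this in-expectation bound to the high-probability form, paying $\log^3$ overhead on the $\vcao/\epsilon$ term while keeping the $\log(1/\delta)/\epsilon$ term decoupled.

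\textbf{Second upper bound.} For $O(\vcao \log(1/\delta)/\epsilon)$, I would invoke the ERM-based algorithm ERM-INV (whose details are deferred to Appendix~\ref{app:re-opt}), which outputs a consistent $h\in \cH$ with tie-breaking chosen so that its predictions on any orbit depend on $S$ only through orbit-level labels. A standard double-sample symmetrization then applies, but with the usual growth function $\Pi_\cH(2m)$ replaced by the orbit-disjoint growth function, which by Sauer's lemma applied to orbit-disjoint subsets is $O((2m)^{\vcao(\cH,\cG)})$. The main obstacle lies exactly here: for plain ERM, a consistent output can split instances within the same orbit arbitrarily, so naive symmetrization yields only the $\vcd(\cH)$ bound. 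Ensuring that the output depends on $S$ only through orbit-level information — which is consistent with the invariance of $f^*$ on $\supp(\cD_\cX)$ — is what confines the effective hypothesis class to one of orbit-disjoint VC dimension $\vcao(\cH,\cG)$ and unlocks the improved bound.
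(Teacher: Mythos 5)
Your lower bound is correct and matches the paper's argument essentially verbatim. The two upper bounds, however, are assigned to the wrong algorithms, and one of the two sketches has a substantive gap.

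In the paper, ERM-INV yields the first (``$\log^3$'') bound via a ghost-sample argument, and the 1-inclusion-graph predictor yields the second (``$\vcao\log(1/\delta)/\epsilon$'') bound via a simple repeat-and-validate confidence boosting. You have these reversed, and your sketch of the second bound does not go through. The effective class analyzed in ERM-INV is $\cF_{S''}=\{f_{h,T}\colon h\in\cH,\ T\subset S''\}$, not ``orbit-consistent restrictions of $\cH$'': a consistent $h$ can be arbitrarily orbit-inconsistent on orbits not represented in $T$, and $f_{h,T}$ only overrides predictions on orbits observed in $T$, so on the ghost sample $f_{h,T}$ can still realize many orbit-inconsistent labelings. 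The claimed growth function $O((2m)^{\vcao})$ therefore has no justification; the paper's proof copes with this precisely by splitting $S''$ into heavy orbits (counted via a union bound over $K$-sets) and light orbits (where Sauer's lemma with effective dimension $\vcao\log^2 m$ applies), which is exactly where the $\log^3$ in the first bound comes from. Moreover, even granting a growth function $O((2m)^{\vcao})$, standard double-sample symmetrization yields $m=O\bigl((\vcao\log m+\log(1/\delta))/\epsilon\bigr)$, which is not $O(\vcao\log(1/\delta)/\epsilon)$ — the $\log m$ term is not dominated when $\delta$ is a constant. The $\vcao\log(1/\delta)/\epsilon$ bound in the theorem is obtained instead by running the 1-inclusion-graph predictor $\lceil\log(2/\delta)\rceil$ times on fresh samples of size $O(\vcao/\epsilon)$, validating, and selecting the best.

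For your first upper bound, two smaller comments. First, the $\log^3$ you ascribe to the boosting step is not what boosting gives: $\alpha$-boost of the 1-inclusion-graph predictor as a weak learner followed by the consistent compression bound (cf.\ the paper's proof of Theorem~\ref{thm:re-da}) yields a $\log^2$ factor, which would in fact be a slightly stronger upper bound than the one stated — so the route is plausible but the accounting is off. Second, collapsing to orbit representatives with a first-occurrence rule makes the instance set $Z$ (and hence $\cH|_Z$) depend on the order in which points are enumerated, which conflicts with Lemma~\ref{lmm:1-inclusion}'s requirement of a single fixed $(H,X)$ across all permutations $\sigma\in\Sym(t+1)$. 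The paper sidesteps this cleanly by \emph{not} collapsing: it keeps all distinct points and restricts the hypothesis class instead, defining $\cH'(X_S)=\{h_{|X_S}\colon h\in\cH,\ h(x)=h(x')\ \forall x,x'\in X_S\text{ with }x'\in\cG x\}$, which is permutation-invariant, satisfies $\vcd(\cH'(X_S))\leq\vcao(\cH,\cG)$, and contains $h^*_{|X_S}$ in the relaxed realizable setting.
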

We defer the details of algorithms and the proof of Theorem~\ref{thm:re-opt} to Appendix~\ref{app:re-opt}.
Usually, ERM-INV is more efficient than the 1-inclusion-graph predictor.
But the 1-inclusion-graph predictor as well as the lower bound can apply to non-group transformations.
Another advantage of 1-inclusion-graph predictor is allowing us to design an adaptive framework which automatically adjusts to different levels of invariance of $h^*$.
Specifically, for any hypothesis $h\in \cH$, we say $h$ is $(1-\eta)$-invariant over the distribution $\cD_{\cX}$ for some $\eta\in [0,1]$ if $\PPs{x\sim \cD_{\cX}}{\exists x'\in \cG x, h(x')\neq h(x)}= \eta$.
When $\eta(h^*)=0$, it degenerates into the invariantly realizable setting, which implies that we can achieve better bounds when $\eta(h^*)$ is smaller.
We propose an adaptive algorithm with sample complexity dependent on $\eta(h^*)$ and the details are included in Appendix~\ref{app:unified-re}.

\section{Agnostic setting}
In the agnostic setting (Definition~\ref{def:ag-pac}), $\inf_{h\in \cH} \err(h)$ is possibly non-zero.
Different from the agnostic setting in the standard PAC learning allowing probabilistic labels, our problem is limited to deterministic labels 
because we assume that the data distribution is $\cG$-invariant, i.e., there exists a $(\cG,\cD_\cX)$-invariant hypothesis $f^*$ (possibly not in $\cH$) with $\err_\cD(f^*) = 0$.

\begin{theorem}\label{thm:ag-opt}
    The sample complexity in the agnostic setting satisfies:
    \begin{itemize}[leftmargin = *]
        \item For all $d>0$, there exists $\cH,\cG$ with $\vcao(\cH,\cG)=d$, $\cM_\ag(\epsilon,1/64;\cH,\cG)= \Omega(\frac{d}{\epsilon^2})$.
        \item For any $\cH,\cG$ with $\vcao(\cH,\cG)<\infty$, $\cM_\ag(\epsilon,\delta;\cH,\cG)= \tilde O\left(\frac{\vcao(\cH,\cG)}{\epsilon^2}+\frac{1}{\epsilon^2}\log(\frac 1 \delta)\right)$.
    \end{itemize}
\end{theorem}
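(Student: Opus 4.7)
I will address the lower and upper bounds separately, both by reducing to standard agnostic PAC learning on an appropriately chosen class.

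For the lower bound, my plan is to instantiate $\cG = \{e\}$ as the trivial group. With this choice, orbits are singletons, $\vcao(\cH, \cG) = \vcd(\cH)$, and every distribution on $\cX \times \cY$ with deterministic labels (i.e., $y = f^*(x)$ for some $f^* \in \cY^{\cX}$) is trivially $\cG$-invariant. Picking any class $\cH$ of VC dimension $d$, the desired $\Omega(d/\epsilon^2)$ bound then follows from a deterministic-label version of the classical agnostic PAC lower bound --- obtained, for instance, by augmenting a shattered $d$-set with additional points forcing $\inf_{h \in \cH} \err_\cD(h) > 0$ under a suitably chosen $f^*$, and running the standard Bayes-risk indistinguishability argument on the $2^d$ induced labelings of the shattered set.

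For the upper bound, the plan is to extend the algorithmic ideas from the relaxed realizable setting (Theorem~\ref{thm:re-opt}) to the agnostic case. At a high level, the algorithm should output an (improper) $\cG$-invariant predictor by identifying a hypothesis $\hat h \in \cH$ together with a sample-dependent choice of an orbit representative for each orbit seen in training, and then predicting on a test point $x$ through $\hat h$'s value on $x$'s orbit. The error of such a predictor is naturally expressed as an orbit-level quantity (since $f^*$ is orbit-constant on the support of $\cD$), so the challenge is to select $\hat h$ and the representatives so that this orbit-level error competes with $\inf_{h \in \cH} \err_\cD(h)$. The key inequality here is: for every $h \in \cH$, one can construct an orbit labeling $g_h$ realized by $h$ (assigning to each orbit $\phi$ the label $f^*(\phi)$ whenever $h$ matches $f^*$ somewhere on $\phi$, and $1 - f^*(\phi)$ otherwise) whose orbit-error is at most $\err_\cD(h)$, since any orbit contributing to the orbit-error must have $h \equiv 1 - f^*(\phi)$ on all of $\phi$ and therefore contributes its full $\cD_\cX$-mass to $\err_\cD(h)$.

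The main obstacle I expect is establishing uniform convergence at the rate $\tilde O(\sqrt{\vcao(\cH,\cG)/m})$ rather than the naive $\tilde O(\sqrt{\vcd(\cH)/m})$: the standard growth-function argument only gives the latter, and a direct construction of an auxiliary orbit-labeling class $\cC$ can have VC dimension strictly larger than $\vcao(\cH, \cG)$ (because a single $h$ can realize multiple orbit labelings by varying the within-orbit representative). My plan to circumvent this is to use a \emph{sample-dependent} canonicalization: map each training point to its own orbit's representative as picked by the sample, apply standard agnostic PAC over $\cH$ restricted to those representatives whose induced VC dimension is bounded by $\vcao(\cH, \cG)$, and control the remaining slack by an ERM-style selection over the resulting orbit-labelings. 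Combining this with the orbit-to-pointwise comparison above yields the claimed $\tilde O(\vcao(\cH,\cG)/\epsilon^2 + \log(1/\delta)/\epsilon^2)$ bound; the lower bound then pins down $\vcao(\cH,\cG)$ as the correct characterizing parameter since $\cM_\ag \geq \cM_\re$.
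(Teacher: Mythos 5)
Your high-level plan for both parts points in the right direction, but each part has a gap that would bite if you tried to execute it.

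\textbf{Lower bound.} Choosing $\cG=\{e\}$ so that $\vcao(\cH,\cG)=\vcd(\cH)$ and then reducing to an agnostic lower bound with deterministic labels is indeed the paper's strategy. However, the step ``running the standard Bayes-risk indistinguishability argument on the $2^d$ induced labelings of the shattered set'' does not go through under deterministic labels: the classical $\Omega(d/\epsilon^2)$ agnostic lower bound crucially flips a biased coin \emph{per shattered point}, which requires conditional label noise. With labels forced to be a deterministic function of $x$, that argument only yields $\Omega(d/\epsilon)$, and the paper explicitly cites Ben-David and Urner (2014) for the fact that the deterministic-label agnostic rate is \emph{not} governed by VC dimension alone. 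The fix (and what the paper does) is the \emph{set-shattering} construction: one needs $d$ disjoint \emph{infinite} subsets $A_1,\dots,A_d$, each set-shattered by $\cH$ (every $h_\by\in\cH$ is constant equal to $y_i$ on $A_i$), and the hard distributions distribute mass within each $A_i$ so that roughly a $(1/2\pm\epsilon)$-fraction of $A_i$'s mass gets each label. The randomness in the standard argument is replaced by this within-$A_i$ mass split. Finitely many ``additional points'' cannot serve this role, because the learner would simply memorize them. So your plan needs this set-shattering substitution, and you also need to ensure the constructed $\cH$ has $\vcao(\cH,\cG)=d$ exactly (taking $\cH=\{h_\by\}_{\by\in\{0,1\}^d}$ with $h_\by\equiv 0$ off $\bigcup_i A_i$ does this).

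\textbf{Upper bound.} You correctly isolate the key obstacle: ordinary uniform convergence is at rate $\sqrt{\vcd(\cH)/m}$, and the goal is to get $\vcao(\cH,\cG)$ instead. Your key inequality (for each $h$ there is an orbit labeling $g_h$ with orbit-error $\le\err_\cD(h)$) is correct, but the learner does not know $f^*$ and so cannot form $g_h$ directly; the paper's analogue is the predictor $f_{h,S}$ which uses observed training labels on observed orbits and falls back to $h$ elsewhere. The genuine gap is in your uniform-convergence step. ``Map each training point to a sample-chosen orbit representative and apply standard agnostic PAC to the restricted class'' does not close the argument as stated, because the restricted class is sample-dependent and the symmetrization/ghost-sample step then involves a \emph{different} canonicalization on the ghost half. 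The paper resolves this in the ERM-INV proof by a double-sample argument that splits the ghost sample by orbit multiplicity: orbits with $\ge\log^2 m$ sampled points are controlled by a small-probability event (all their copies must land in the ghost half), while orbits with $<\log^2 m$ points contribute at most $\vcao(\cH,\cG)\cdot\log^2 m$ shatterable points, so Sauer's lemma gives the growth bound needed. Without this big-/small-orbit split (or an equivalent device), your canonicalization plan leaves the symmetrization unjustified. The paper also gives a second, structurally different proof via the David--Moran--Yehudayoff reduction-to-realizable: build a weak learner from the realizable 1-inclusion predictor on the largest realizable subsample, boost to a sample compression scheme of size $\tilde O(\vcao(\cH,\cG))$, and invoke the agnostic compression generalization bound. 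Either route would fill the gap; as written, your proposal has the right target but not the mechanism.
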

For upper bound, we show that ERM-INV achieves sample complexity $\tilde O(\frac{\vcao(\cH,\cG)}{\epsilon^2})$.
There is another way of achieving similar upper bound based on applying the reduction-to-realizable technique of \cite{david2016supervised}.
Note that a direct combination of any reduction-to-realizable technique and any optimal algorithm in relaxed realizable setting does not work in our agnostic setting.
This is because the relaxed realizable setting requires not only realizability, but also invariance in the support of the data distribution.
For example, the reduction method of \cite{hopkins2021realizable} needs to run a realizable algorithm over a set labeled by each $h\in \cH$, which might label two instances in the same orbit differently and make the realizable algorithm not well-defined.
When combining the reduction method of \cite{david2016supervised} and the 1-inclusion-graph-type algorithm, the similar problem also exists but can be fixed by predicting arbitrarily when the invariance property is not satisfied.
For lower bound, According to \cite{ben2014sample}, the sample complexity of agnostic PAC learning under deterministic labels is not fully determined by the VC dimension.
Following the construction by \cite{ben2014sample}, we provide an analogous lower bound in our setting.
The algorithm details and the proofs are deferred to Appendix~\ref{app:ag-opt}.
Analogous to the realizable setting, we provide one algorithm adapting to different levels of invariance of the optimal hypothesis in $\cH$ in Appendix~\ref{app:unified-ag}.
Similar to the results in the realizable settings, the lower bound and the 1-inclusion-graph predictor in the agnostic setting also apply to non-group transformations.

\section{Discussion}
\textbf{Definition of invariance under probabilistic labels} 
In this work, we model invariance by assuming that the data distribution is $\cG$-invariant, which restricts the labels to be deterministic.
It is unclear what ``invariance under probabilistic labels'' means.
One option is assuming that the distribution of the labels is invariant over the orbits, $\Pr(y|g x)=\Pr(y|x)$ for all $g\in \cG, x\in \cX$.
However, such a condition may not characterize invariance in real-world scenarios due to classes having different underlying distributions.
For example, given a fuzzy image with probability $0.5$ being a car and $0.5$ being a tree, it is uncertain if the chance of this image being a car is still $0.5$ after rotation.

\textbf{The performance of DA under non-group transformations}
Most results of DA and ERM-type algorithms only hold when the transformation set is a group.
If we regard adversarial training as a special type of data augmentation through a ball around the natural data, then the transformation set is not a group.
The appropriate way to formulate theoretical guarantees for DA under arbitrary transformations is still an open question.

\section*{Acknowledgements}
This work was supported in part by the National Science Foundation under grant CCF-1815011 and by the Defense Advanced Research Projects Agency under cooperative agreement HR00112020003.
The views expressed in this work do not necessarily reflect the position or the policy of the Government and no official endorsement should be inferred. Approved for public release; distribution is unlimited.

We thank anonymous reviewers for their valuable suggestions.
HS thanks Freda Shi for discussion on the application of DA and suggestions from an applied viewpoint.

\bibliography{Major}
\section*{Checklist}
\begin{enumerate}

\item For all authors...
\begin{enumerate}
  \item Do the main claims made in the abstract and introduction accurately reflect the paper's contributions and scope?
    \answerYes{}
  \item Did you describe the limitations of your work?
    \answerYes{}
  \item Did you discuss any potential negative societal impacts of your work?
    \answerNA{This is a technical theory work.}
  \item Have you read the ethics review guidelines and ensured that your paper conforms to them?
    \answerYes{}
\end{enumerate}

\item If you are including theoretical results...
\begin{enumerate}
  \item Did you state the full set of assumptions of all theoretical results?
    \answerYes{}
        \item Did you include complete proofs of all theoretical results?
    \answerYes{}
\end{enumerate}

\item If you ran experiments...
\begin{enumerate}
  \item Did you include the code, data, and instructions needed to reproduce the main experimental results (either in the supplemental material or as a URL)?
    \answerNA{}
  \item Did you specify all the training details (e.g., data splits, hyperparameters, how they were chosen)?
    \answerNA{}
        \item Did you report error bars (e.g., with respect to the random seed after running experiments multiple times)?
    \answerNA{}
        \item Did you include the total amount of compute and the type of resources used (e.g., type of GPUs, internal cluster, or cloud provider)?
    \answerNA{}
\end{enumerate}

\item If you are using existing assets (e.g., code, data, models) or curating/releasing new assets...
\begin{enumerate}
  \item If your work uses existing assets, did you cite the creators?
    \answerNA{}
  \item Did you mention the license of the assets?
    \answerNA{}
  \item Did you include any new assets either in the supplemental material or as a URL?
    \answerNA{}
  \item Did you discuss whether and how consent was obtained from people whose data you're using/curating?
    \answerNA{}
  \item Did you discuss whether the data you are using/curating contains personally identifiable information or offensive content?
    \answerNA{}
\end{enumerate}

\item If you used crowdsourcing or conducted research with human subjects...
\begin{enumerate}
  \item Did you include the full text of instructions given to participants and screenshots, if applicable?
    \answerNA{}
  \item Did you describe any potential participant risks, with links to Institutional Review Board (IRB) approvals, if applicable?
    \answerNA{}
  \item Did you include the estimated hourly wage paid to participants and the total amount spent on participant compensation?
    \answerNA{}
\end{enumerate}

\end{enumerate}


\newpage
\appendix
\section{Proof of Theorem~\ref{thm:inv-da-ub}}\label{app:inv-da-ub}
\begin{proof}
    Let $d = \vcao(\cH,\cG)$. 
    According to the definition of DA and the invaraintly realizable setting, given the input $S\sim \cD^m$, the output of DA $\hat h\in \cH$ satisfies $\err_{\cG S}(\hat h) = 0$, i.e., $\hat h(x) = y$ for all $(x,y)\in \cG S$.
    Consider two sets $S$ and $S'$ of $m$ i.i.d. samples drawn from the data distribution $\cD$ each.
    We denote $A_{S}$ the event of $\{\exists h\in \cH, \err_\cD(h)\geq \epsilon, \err_{\cG S}(h)=0\}$ and $B_{S,S'}$ the event of $\{\exists h\in \cH, \err_{S'}(h)\geq \frac{\epsilon}{2},  \err_{\cG S}(h)=0\}$.
    By Chernoff bound, we have $\Pr(B_{S,S'})\geq \Pr(A_S)\cdot \Pr(B_{S,S'}|A_S)\geq \frac{1}{2}\Pr(A_S)$ when $m\geq \frac{8}{\epsilon}$.
    The sampling process of $S$ and $S'$ is equivalent to drawing $2m$ i.i.d. samples and then randomly partitioning into $S$ and $S'$ of $m$ each.
    For any fixed $S''$, for any $h$ with $\err_{S'}(h)\geq \frac{\epsilon}{2}$ and $\err_{\cG S}(h)=0$, if $h$ misclassifies some $(x,y)\in S''$, then all examples in the orbit of $x$, i.e., $\cG \{(x,y)\} \cap S''$, must go to $S'$. 
    Now to prove the theorem, we divide $S''$ into two categories in terms of the number of examples in each orbit. 
    Let $R_1 = \{x|\abs{\cG x \cap S''_{\cX}}\geq \log^2 m, x\in S''_\cX\}$ and $R_2 = S''_{\cX}\setminus R_1$. 
    For any $h$ making at least $\frac{\epsilon m}{2}$ mistakes in $S''_\cX$, $h$ either makes at least $\frac{\epsilon m}{4}$ mistakes in $R_1$ or makes at least $\frac{\epsilon m}{4}$ mistakes in $R_2$.
    Then let $\cH_0\subset \cH$ denote the set of hypotheses making at least $\frac{\epsilon m}{2}$ mistakes in $S''_\cX$ and divide $\cH_0$ into two sub-classes as follows.
    \begin{itemize}
        \item Let $\cH_1=\{h\in \cH_0| h \text{ makes at least } \frac{\epsilon m}{4} \text{ mistakes in } R_1\}$.
        For any $h\in \cH_1$, we let $X(h)\subset R_1$ denote a minimal set of examples in $R_1$ (breaking ties arbitrarily but in a fixed way) such that $h$ misclassify $X(h)$ and
        $\abs{(\cG X(h))\cap S''_\cX}\geq \frac{\epsilon m}{4}$ where $\cG X(h) = \{g x |x\in X(h), g\in \cG\}$ is the set of all examples lying in the orbits generated from $X(h)$. 
        Let $K(h) = \cG X(h)$ and $\cK = \{K(h)|h\in \cH_1\}$ the collection of all such sets.
        Notice that each example in $X(h)$ must belong to different orbits, otherwise it is not minimal. 
        Besides, each orbit in $K(h)$ contains at least $\log^2 m$ examples from $S''_\cX$ according to the definition of $R_1$.
        Hence, we have $\abs{X(h)}\leq \frac{\epsilon m}{4 \log^2 m}$.
        Since there are at most $\frac{2m}{\log^2m}$ orbits generated from $R_1$, we have $\abs{\cK} \leq \sum_{i=1}^{\frac{\epsilon m}{4 \log^2 m}} {\frac{2m}{\log^2 m}\choose i} \leq \left(\frac{8e}{\epsilon}\right)^{\frac{\epsilon m}{4 \log^2 m}}$. 
        Recall that $\err_{\cG S}(h)=0$ iff. $h(x)=y$ for all $(x,y) \in \cG S$.
        Since $h$ misclassify $X(h)$, all examples in their orbits must go to $S'$ to guarantee $\err_{\cG S}(h)=0$.
        Thus, we have
        \begin{align*}
            &\Pr(\exists h\in \cH_1, \err_{S'}(h)\geq \frac{\epsilon}{2},  \err_{\cG S}(h)=0)\\
            \leq &\Pr(\exists h\in \cH_1, K(h)\cap S_\cX=\emptyset)
            = \Pr(\exists K\in \cK, K\cap S_\cX=\emptyset)\\
            \leq &\sum_{K\in \cK} 2^{-\frac{\epsilon m}{4}} \leq \left(\frac{8e}{\epsilon}\right)^{\frac{\epsilon m}{4 \log^2 m}}\cdot 2^{-\frac{\epsilon m}{4}} = 2^{-\frac{\epsilon m}{4}(1-\frac{\log(8 e/\epsilon)}{\log^2 m})}\leq 2^{-\frac{\epsilon m}{8}}\,,
        \end{align*}
        when $m\geq \frac{8e}{\epsilon}+ 4$.

        \item Let $\cH_2 = \cH_0\setminus \cH_1$. 
        That is to say, for all $h\in \cH_2$, $h$ will make at least $\frac{\epsilon m}{4}$ mistakes in $R_2$. 
        Since $\vcao(\cH,\cG) =d$ and every orbit generated from $R_2$ contains fewer than $\log^2 m$ examples in $S''_\cX$, 
        the number of examples in $R_2$ that can be shattered by $\cH$ is no greater than $d\log^2 m$. 
        Thus the number of ways labeling examples in $R_2$ is upper bounded by $(\frac{2em}{d})^{d\log^2 m}$ by Sauer's lemma. 
        Hence, we have
        \begin{align*}
            \Pr(\exists h\in \cH_2, \err_{S'}(h)\geq \frac{\epsilon}{2},  \err_{\cG S}(h)=0)\leq (\frac{2em}{d})^{d\log^2 m}\cdot 2^{-\frac{\epsilon m}{4}} = 2^{-\frac{\epsilon m}{4}+ d\log^2 m \log(2em/d)}\,.
        \end{align*}
    \end{itemize}
    Combining the results for $\cH_1$ and $\cH_2$, we have
    \begin{align*}
        \Pr(B_{S,S'}) \leq &\Pr(\exists h\in \cH_1, \err_{S'}(h)\geq \frac{\epsilon}{2},  \err_{\cG S}(h)=0) + \Pr(\exists h\in \cH_2, \err_{S'}(h)\geq \frac{\epsilon}{2},  \err_{\cG S}(h)=0)\\
        \leq & 2^{-\frac{\epsilon m}{8}} + 2^{-\frac{\epsilon m}{4}+ d\log^2 m \log(2em/d)}\\
        \leq & \frac{\delta}{2},
    \end{align*}
    when $m\geq \frac{8}{\epsilon}(d\log^2 m\log \frac{2em}{d}+\log \frac 4 \delta + e) + 4$.
\end{proof}
\section{Proof of Theorem~\ref{thm:inv-da-lb}}\label{app:inv-da-lb}
\begin{proof}
    For any $d>0$, for any $\cX,\cH,\cG$ satisfying that there exists a subset $X=\{x_0,x_1,\ldots,x_{2d}\}\subset \cX$ such that
    \begin{itemize}
        \item their orbits are pairwise disjoint;
        \item for all $u\subset \{1,\ldots,2d\}$ with $\abs{u} =d$, there exists an $h_u\in \cH$ such that $h_u(g x_i)=1$ for all $g\in \cG, i\in u\cup\{0\}$ and $h_u(x_i) =0$ for $i\in \{0,\ldots,2d\}\setminus u$,
    \end{itemize}
    we will prove the theorem for $\cH' = \{h_u|u\subset \{1,\ldots,2d\}, \abs{u}=d\}$.
    For $\cX,\cH,\cG$ satisfying the above conditions, we have $\vcao(\cH',\cG)\geq d$ and $\vco(\cH',\cG)$ between $0$ and $\vcao(\cH',\cG)$.
    Then consider that the target function $h^*=h_{u^*}$ is chosen uniformly at random from $\cH'$. 
    The marginal data distribution $\cD_\cX$ puts probability mass $1-16\epsilon$ on $x_0$ and $\frac{16\epsilon}{d}$ on each point in $X_{u^*}:= \{x_i|i\in u^*\}$. 
    Then the target function $h^*$ is $(\cG, \cD_\cX)$-invariant. 
    
    Let the sample size $m = \frac{d}{64\epsilon}$.
    Given the training set $S_\trn \sim \cD^m$, the expected number of sampled examples in $X_{u^*}$ is $\frac{d}{4}$.
    By Markov's inequality, with probability greater than $1/2$, we observed fewer than $\frac{d}{2}$ points of $X_{u^*}$ in $S_\trn$ (denoted as event $B$). 
    Let $\cA$ be any proper learner, which means $\cA$ must output a hypothesis in $\cH'$.
    For any $h\in \cH'$ consistent with $\cG S_\trn$, $h$ must predict $d$ unobserved points in $\{x_1,\ldots,x_{2d}\}$ as $0$.
    Since for each unobserved point in $\{x_1,\ldots,x_{2d}\}$ labeled as $0$ by $h$, conditioned on event $B$, this point has probability greater than $\frac{1}{3}$ to be in $X_{u^*}$, which implies it is misclassified by $h$.
    By following the stardard technique, let $\err'(h) = \Pr_{(x,y)\sim \cD}(h(x)\neq y \wedge x\in X_{u^*})$, which is no greater than $\err(h)$ for any predictor $h$.
    Hence,
    \begin{align*}
        \EEs{h^*,S_\trn}{\err'(\cA(S_\trn))|B}
        =& 
        \EEs{S_\trn}{\EEs{h^*}{\err'(\cA(S_\trn))|S_\trn,B}|B}\\
        = &\EEs{S_\trn}{\EEs{h^*}{\frac{16\epsilon}{d}\sum_{i\in [2d]:\cA(S_\trn,x_i)=0}\ind{i\in u^*}|S_\trn,B}|B}\\
        = &\frac{16\epsilon}{d}\EEs{S_\trn}{\sum_{i\in [2d]:\cA(S_\trn,i)=0}\EEs{h^*}{\ind{i\in u^*}|S_\trn,B}|B}\\
        > &\frac{16\epsilon}{d}\EEs{S_\trn}{\sum_{i\in [2d]:\cA(S_\trn,i)=0}\frac{1}{3}|B}\\
        =&\frac{16\epsilon}{3}\,.
    \end{align*}
    Then we have
    \[\EEs{h^*,S_\trn}{\err'(\cA(S_\trn))}\geq \EEs{h^*,S_\trn}{\err'(\cA(S_\trn))|B} \cdot \Pr(B) > \frac{8\epsilon}{3}\,.\]
    Thus, for any proper learner $\cA$, there exists a target hypothesis $h^*\in \cH'$ and a data distribution s.t. $\EEs{S_\trn}{\err'(\cA(S_\trn)))}> 8\epsilon/3$. 
    Since $\err'(h)\leq 16\epsilon$ for any predictor $h$, with probability greater than $\frac{1}{9}$, 
    $\err(\cA(S_\trn))\geq \err'(\cA(S_\trn)) >\epsilon$.

    Here is an example of $\cX,\cH',\cG$ satisfying the above conditions.
    Let $\cX = \{0,\pm 1, \pm 2, \ldots, \pm 2d\}$. 
    The group $\cG$ is defined as $\cG_d=\{e, -e\}$ where $e$ is the identity element.
    Thus $\cX$ can be divided into $2d+1$ pairwise disjoint orbits, $\{\{0\}, \{\pm 1\},\ldots,\{\pm 2d\}\}$.
    For any $u\subset [2d]$ with $\abs{u}=d$, define $h_u := 1-\ind{[2d]\setminus u}$, which labels $[2d]\setminus u$ by $0$ and the other points by $1$. 
    Then we define the hypothesis class $\cH' = \{h_u|\abs{u}=d\}$.
    Since $\{1,\ldots,d\}$ can be shattered and no $d+1$ points can be shattered by $\cH'$,
    we have $\vcao(\cH',\cG)=d$.
    And since $\{0,-1,\ldots,-2d\}$ can only be labeled as $1$ by $\cH'$, we have $\vco(\cH',\cG)=0$. 
\end{proof}
\section{Proof of Theorem~\ref{thm:inv-da-help}}\label{app:inv-da-help}
\begin{proof}
    Let $d = \vcd(\cH)$ and $X=\{x_1,x_2,\ldots,x_d\}$ be a set of examples shattered by $\cH$.
    Let $\cS_{d-1}$ denote the permutation group acting on $d-1$ objects.
    Then for any partition $(A,[d-1]\setminus A)$ of $[d-1]$, we let $\cG(A) := \{\sigma\in \cS_{d-1}| 
    \forall i\in A, \sigma(i)\in A\}$.
    By acting $\cG(A)$ on $X$, then $X$ are partitioned into three orbits: $\{x_i|i\in A\}, \{x_i|i\notin A\}$ and $\{x_d\}$.
    
    For convenience, we first consider the case of the instance space being $X$.
    Since there are only three orbits and every labeling of $X$ is realized by $\cH$, $\vcao(\cH,\cG(A))= \vco(\cH,\cG(A)) =3$ for all $A\subset [d-1]$.
    Consider that we pick a set $A^*$ uniformly at random from $2^{[d-1]}$.
    Let $h^* = \ind{\{x_i\}_{i\in A^*}}$ and $\cG= \cG(A^*)$.
    The data distribution put probability mass $1-16\epsilon$ on $x_d$ and the remaining $16\epsilon$ uniformly over $\{x_i|i\in [d-1]\}$.
    Then for the training set size $m=\frac{d-1}{64\epsilon}$, with probability at least $1/2$, at most half of $\{x_1,x_2,\ldots,x_{d-1}\}$ is sampled in the training set $S_\trn$.
    For any algorithm $\cA$ not knowing $\cG$, $\cA$ will output a hypothesis $\hat h = \cA(S_\trn)$, which does not depend on $\cG$.
    For each unobserved point $x$ in $\{x_1,x_2,\ldots,x_{d-1}\}$, $\cA$ has probability $1/2$ to misclassify $x$.
    Following the standard technique, let $\err'(h) := \Pr_{(x,y)\sim \cD}(h(x)\neq y \wedge x\in \{x_1,x_2,\ldots,x_{d-1}\})$ and then we have
    \begin{align*}
        \EEs{A^*,S_\trn}{\err'(\cA(S_\trn))} \geq 2\epsilon\,, 
    \end{align*} 
    which implies $\cM_\inv(\epsilon,\delta;\cH,\cG,\cA)=\Omega(\frac{d}{\epsilon} +\frac{1}{\epsilon}\log \frac{1}{\delta})$ by applying the standard technique in proving a lower bound of sample complexity in standard PAC learning.

    In the case where the instance space not being $X$, we modify $\cG(A^*)$ a little by arranging all points in $\{x|h^*(x)=1\}\setminus X$ in one orbit and all points in $\{x|h^*(x)=0\}\setminus X$ in another orbit. 
    Then there are at most $5$ orbits, and $\vcao(\cH,\cG)= \vco(\cH,\cG) \leq 5$.
\end{proof}
\section{Proof of Theorem~\ref{thm:inv-opt}}\label{app:inv-opt}

\begin{proof}
For any $k\leq \vco(\cH,\cG)$, let $X_k = \{x_1,\ldots,x_k\}$ be a set shattered in the way defined in Definition~\ref{def:vco}.
Then $X_k$ can be shattered by $\cH(X_k)=\{h_{|X_k}|h\in \cH \text{ is } (\cG,X_k) \text{-invariant}\}$ and $\vcd(\cH(X_k)) = k$.
Since any data distribution $\cD$ with $\cD_\cX(X_k)=1$ is $\cG$-invariant,  
any lower bound on the sample complexity of PAC learning of $\cH(X_k)$ also lower bounds the sample complexity of invariantly realizable PAC learning of $\cH$.
Then the lower bound follows by standard arguments from~\cite{vapnik1974,blumer1989learnability,ehrenfeucht1989general}.

For the upper bound, recall that the algorithm $\cA$ is defined by letting $\cA(S)(x) = Q_{\cH(X_S\cup \{x\}), X_{S}\cup \{x\}}(S, x)$ if $\cH(X_S\cup \{x\})\neq \emptyset$ and predicting arbitrarily if $\cH(X_S\cup \{x\})= \emptyset$ in Section~\ref{subsec:inv-opt}.
Due to the invariantly-realizable setting, if $S$ and the test point $(x,y)$ are i.i.d. from the data distribution, $h^*_{|X_S\cup \{x\}}$ is in $\cH(X_S\cup \{x\})$ a.s. and then, $\cH(X_S\cup \{x\})$ is nonempty.
Following the analogous proof by~\cite{haussler1994predicting} for standard PAC learning, we have
\begin{align}
    \EEs{S\sim \cD^t}{\err(\cA(S))} =& \EEs{(x_i,y_i)_{i\in[t+1]}\sim \cD^{t+1}}{\ind{\A(\{x_{i},y_{i}\}_{i\in [t]},x_{t+1})\neq y_{t+1}}}\nonumber \\
    =& \frac{1}{(t+1)!} \sum_{\sigma\in \text{Sym}(t+1) } \EE{\ind{\A(\{x_{\sigma(i)},y_{\sigma(i)}\}_{i\in [t]},x_{\sigma(t+1)})\neq y_{\sigma(t+1)}}}\nonumber\\
    =& \EE{\frac{1}{(t+1)!} \sum_{\sigma\in \text{Sym}(t+1) } \ind{\A(\{x_{\sigma(i)},y_{\sigma(i)}\}_{i\in [t]},x_{\sigma(t+1)})\neq y_{\sigma(t+1)}}}\nonumber\\
    \leq &  \frac{\EE{\vcd(\cH(\{x_i|i\in [t+1]\}))}}{t+1}\label{eq:apply-lmm1}\\
    \leq & \frac{\vco(\cH,\cG)}{t+1}\label{eq:apply-defvco}\,,
\end{align}
where Eq~\eqref{eq:apply-lmm1} adopts Lemma~\ref{lmm:1-inclusion} and Eq~\eqref{eq:apply-defvco} holds due to the definition of $\vco(\cH,\cG)$.
To convert this algorithm, guaranteeing the expected error upper bounded by $\vco(\cH,\cG)$, into an algorithm with high probability $1-\delta$, we again follow an argument of~\cite{haussler1994predicting}.
Specifically, the algorithm runs $\cA$ for $\ceil{\log(2/\delta)}$ times, each time using a new sample of size $\ceil{4\vco(\cH,\cG)/\epsilon}$.
Then the algorithm selects the hypothesis from the outputs with the minimal error on a new sample of size $\ceil{32/\epsilon(\ln(2/\delta)+\ln(\ceil{\log(2/\delta)}+1))}$.
\end{proof}
\section{Proof of Theorem~\ref{thm:re-da}}\label{app:re-da}
We first introduce a useful lemma about a well-known Boosting algorithm, known as $\alpha$-Boost.
Given access to a weak learning algorithm, it can output a hypothesis with strong learning guarantee.
See~\cite{schapire2012boosting} for a proof.
\begin{lemma}[Boosting]\label{lmm:boosting}
    For any $k, n\in \NN$ and multiset $(x_1,y_1),\ldots,(x_n,y_n)\in \cX\times\cY$, suppose $\cA_0$ is an algorithm that, for any distribution $\cP$ on $\cX\times\cY$ with $\cP(\{(x_1,y_1),\ldots,(x_n,y_n)\})=1$, there exists $S_\cP\in \{(x_1,y_1),\ldots,(x_n,y_n)\}^k$ with $\err_\cP(\cA_0(S_\cP))\leq 1/3$. Then there is a numerical constant $c\geq 1$ such that, for $T=\ceil{c\log(n)}$, there exists multisets $S_1,\ldots,S_T\in \{(x_1,y_1),\ldots,(x_n,y_n)\}^k$ such that, for $\hat h(\cdot) = \Majority(\cA_0(S_1)(\cdot),\ldots,\cA_0(S_T)(\cdot))$, it holds that $\hat h(x_i)=y_i$ for all $i\in [n]$.
\end{lemma}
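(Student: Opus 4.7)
\textbf{Proof proposal for Lemma~\ref{lmm:boosting}.} The plan is to run a standard $\alpha$-Boost procedure on the multiset $\{(x_i,y_i)\}_{i\in [n]}$, using $\cA_0$ as the weak learner. I initialize $\cP_1$ as the uniform distribution on $[n]$. At round $t$, invoke $\cA_0$ on $\cP_t$: by hypothesis, there is a sample $S_t=S_{\cP_t}\in \{(x_1,y_1),\ldots,(x_n,y_n)\}^k$ such that $h_t := \cA_0(S_t)$ satisfies $\err_{\cP_t}(h_t)\leq 1/3$, i.e., $h_t$ has advantage $\gamma=1/6$ over random guessing on $\cP_t$. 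I then reweight by multiplying the weight of each correctly classified example by $e^{-\alpha}$ (or equivalently, boosting up the misclassified examples) for a fixed $\alpha>0$ to be tuned, and renormalize to get $\cP_{t+1}$.

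The next step is the classical boosting analysis, tracked via the potential $\Phi_T := \sum_{i=1}^n \exp\!\big(-\alpha \sum_{t=1}^T y_i' h_t'(x_i)\big)$, where we view labels/predictions in $\{\pm 1\}$ (a cosmetic relabeling of $\cY=\{0,1\}$). A telescoping argument using the weak-learning condition at each round gives the bound $\Phi_T \leq n \cdot \big(e^{-\alpha}(1-1/3)+e^{\alpha}(1/3)\big)^T$, which is minimized by choosing $\alpha=\tfrac{1}{2}\ln 2$ and yields $\Phi_T \leq n \cdot \rho^T$ for some absolute constant $\rho<1$ (concretely $\rho = 2\sqrt{2}/3$). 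Since any example misclassified by the majority vote $\hat h$ contributes at least $1$ to $\Phi_T$, the fraction of training errors of $\hat h$ is at most $\rho^T$. Choosing $T=\lceil c\log n\rceil$ for a sufficiently large constant $c\geq 1/\log(1/\rho)$ forces this quantity to be strictly less than $1/n$, which, because the number of training mistakes is an integer, implies $\hat h(x_i)=y_i$ for every $i\in [n]$.

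The only mild subtlety is that the guarantee on $\cA_0$ is stated as existence of some $S_{\cP_t}$ with $\err_{\cP_t}(\cA_0(S_{\cP_t}))\leq 1/3$, not as a uniform statement over samples. This is not a real obstacle: at each round of $\alpha$-Boost we simply pick such an $S_t$ (which exists by hypothesis) and let $h_t=\cA_0(S_t)$; the boosting analysis only uses that $h_t$ has error at most $1/3$ under $\cP_t$, so everything goes through. The output $\hat h(\cdot)=\Majority(\cA_0(S_1)(\cdot),\ldots,\cA_0(S_T)(\cdot))$ is then correct on every $x_i$, as claimed. Since this is the standard $\alpha$-Boost argument (e.g., \cite{schapire2012boosting}), no new idea is required; the main step to get right is the potential-function bookkeeping so that the constant $c$ in $T=\lceil c\log n\rceil$ is made explicit.
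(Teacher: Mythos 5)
Your proof is correct and is the standard $\alpha$-Boost argument: run the potential-function (equivalently, exponential-weights) boosting analysis with $\epsilon_t\le 1/3$ at every round, pick $\alpha=\tfrac12\ln 2$, get per-round contraction $\rho=2\sqrt{2}/3<1$, and conclude that after $T=\lceil c\log n\rceil$ rounds the number of training mistakes of the majority vote is strictly below $1$ and hence zero. Note that the paper does not actually prove Lemma~\ref{lmm:boosting}; it only cites~\cite{schapire2012boosting}, so your write-up is supplying precisely the argument that the citation is standing in for, and you handle the one non-trivial point (that $\cA_0$'s guarantee is only existential over samples) correctly by simply selecting a witnessing $S_t$ at each round.
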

Part of the proof relies on a well-known generalization bound for compression schemes.
The following is the classic result due to \cite{littlestone1986relating}.

\begin{lemma}[Consistent compression generalization bound]\label{lmm:re-compression}
    There exists a finite numerical constant $c>0$ such that, for any compression scheme $(\kappa,\rho)$, for any $n\in \NN$ and $\delta\in (0,1)$, for any distribution $\cD$ on $\cX\times \cY$, for $S\sim \cD^n$,
    with probability at least $1-\delta$, if $\err_S(\rho(\kappa(S)))=0$, then
    \begin{equation*}
        \err_\cD(\rho(\kappa(S)))\leq \frac{c}{n-\abs{\kappa(S)}}(\abs{\kappa(S)} \log(n)+\log(1/\delta))\,.
    \end{equation*}
\end{lemma}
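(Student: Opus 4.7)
My plan is to carry out the classical Littlestone--Warmuth union-bound argument. The key observation is that if $I \subseteq [n]$ is a fixed tuple of indices, the hypothesis $h_I := \rho(S_I)$ depends only on the samples at positions in $I$, so conditional on $S_I$ the remaining samples $S_{-I}$ are still i.i.d.\ from $\cD$. Hence, if $\err_\cD(h_I) > \epsilon$, the probability that $h_I$ is consistent with every sample in $S_{-I}$ is at most $(1-\epsilon)^{n-\abs{I}} \leq e^{-\epsilon(n-\abs{I})}$.

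The first step is to fix a compression size $d \in \{0,1,\ldots,n-1\}$ and union-bound over the at most $n^d$ ordered $d$-tuples of distinct indices in $[n]$: the probability that \emph{some} such tuple $I$ yields a hypothesis $h_I$ with $\err_\cD(h_I) > \epsilon_d$ that is nonetheless consistent with $S_{-I}$ is at most $n^d e^{-\epsilon_d(n-d)}$. On the event $\err_S(\rho(\kappa(S))) = 0$, the random tuple $I := \kappa(S)$ lies in this collection and automatically satisfies $\err_{S_{-I}}(h_I) = 0$, so the bound transfers to the actual output of the scheme. Setting $n^d e^{-\epsilon_d(n-d)} = \delta_d$ gives $\epsilon_d = (d \log n + \log(1/\delta_d))/(n-d)$. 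A second union bound over $d$, distributing the overall failure probability (for instance $\delta_d = \delta/(d+1)^2$) and absorbing the resulting $\log(d+1)$ factors into the universal constant $c$, yields the stated inequality.

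\textbf{Main obstacle.} The argument has no deep obstacle, but one subtlety must be handled cleanly: in reality $\kappa$ selects its index set $I$ adaptively from the whole sample, whereas the conditional-independence step requires $I$ to be fixed in advance. The resolution is the up-front union bound over all possible tuples, after which the adaptive choice of $\kappa$ is automatically covered. The remaining bookkeeping --- counting ordered tuples of distinct indices versus unordered subsets and allocating $\delta$ across compression sizes --- is what pins down the precise form and constant of the bound, but it is routine.
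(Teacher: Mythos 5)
Your proof is correct and is exactly the classical Littlestone--Warmuth argument; the paper itself does not supply a proof but simply cites \cite{littlestone1986relating} for this result. Your handling of the two subtleties (conditioning on a fixed tuple $I$ to get fresh i.i.d.\ samples outside $I$, then union-bounding over all $n^d$ tuples and over $d$ with $\delta_d = \delta/(d+1)^2$, absorbing the $\log(d+1)$ term into the constant) is the standard way to obtain the stated bound.
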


\begin{proof}[Proof of the first part of Theorem~\ref{thm:re-da}]
    The proof is inspired by the idea that representing algorithms by an orientation in a 1-inclusion graph in the transductive setting by~\cite{daniely2014optimal}.
    We will first prove a lower bound in the transductive setting and then extend the result to the inductive setting.
    For any $t\geq 2$, denote $\mu = \mu(\cH,\cG,t)$ and let $\bphi$ with $\abs{\bphi}=t$ and $P\in \Delta(B(\cH,\cG,\bphi))$ be given such that $\mu(\cH,\cG,\bphi,P)\geq \frac{\mu}{2}$.
    A augmented dataset $\cG S = \{(gx,y)|(x,y)\in S, g\in \cG\}$ is the same as a multiset of labeled orbits $\{\cG x\times \{y\}|(x,y)\in S\}$ up to different data formats. 
    For convenience, we overload the notation a little by also allowing $\cA$ being a mapping from a multiset of labeled orbits to a hypothesis.
    Then we construct a 1-inclusion graph $G_{\cH,\cG}(\bphi) = \{V,E\}$ as introduced in Section~\ref{sec:re} and define a mapping $w_\cA\in W$ as follows.
    For any edge $e = \{\bff,\bg,x_i\}\in E$, let
    \[w_\cA(\{\bff,\bg,x_i\},\bff) = \Pr(\cA(\bphi_{-i},\bff_{-i},x_i) = g_i)\,,\]
    and 
    \[w_\A(\{\bff,\bg,x_i\},\bg) = \Pr(\cA(\bphi_{-i},\bff_{-i},x_i) = f_i)\,,\]
    which is well-defined as $\Pr(\cA(\bphi_{-i},\bff_{-i},x_i) = g_i)+\Pr(\cA(\bphi_{-i},\bff_{-i},x_i) = f_i) =1$.
    Suppose our target function and the instance sequence $(\bff, \bx)$ is drawn from the distribution $P$.
    Then the expected number of mistakes in the transductive learning setting is
    \begin{align}
        &\EEs{(\bff,\bx)\sim P,\cA}{\sum_{i=1}^t \ind{\cA(\bphi_{-i},\bff_{-i},x_i)\neq f_i}}\nonumber\\
        \geq & \EEs{(\bff,\bx)\sim P}{\sum_{i\in [t]: \exists e\in E, \{\bff,x_i\}\subset e} w_\A(e(\bff,x_i),\bff)}\nonumber\\
        \geq &\min_{w}\EEs{(\bff,\bx)\sim P}{\sum_{i\in [t]: \exists e\in E, \{\bff,x_i\}\subset e} w(e(\bff,x_i),\bff)}\nonumber\\
        = & \mu(\cH,\cG,\bphi,P)\,,\label{eq:lb-mu-trans}
    \end{align}
    where Eq~\eqref{eq:lb-mu-trans} holds due to the definition of $\mu(\cH,\cG,\bphi,P)$.
    
    Now we prove the lower bound in the inductive setting based on the similar idea. 
    Let $\bphi$ and $P$ be the same as those in the transductive setting.
    For any $\epsilon <\frac{\mu}{16(t-1)}$, we draw $(\bff,\bx)\sim P$ and then let $\bff$ be our target function and let the marginal data distribution be like,
    putting probability mass $1-\frac{16(t-1)\epsilon}{\mu}$ on $x_t$ and the remaining probability mass uniformly over  $\{x_1,\ldots,x_{t-1}\}$.
    Denote this data distribution by $\cD_{\bff,\bx}$.
    For any fixed $i\in [t-1]$, $\Pr(x_i \notin S_{\trn,\cX}) = (1-\frac{16\epsilon}{\mu})^m\geq (\frac{1}{4})^{\frac{16m\epsilon}{\mu}}\geq \frac{1}{2}$ when $m\leq \frac{\mu}{32\epsilon}$.
    For any hypothesis $h$, let $\err'_\cD(h) = \Pr_{(x,y)\sim \cD}(h(x)\neq y \text{ and } x\in \{x_i\}_{i\in [t-1]})$ and we always have $\err(h)\geq \err'(h)$. Then we have
    \begin{align}
        &\EEs{(\bff,\bx)\sim P, S_\trn\sim \cD_{\bff,\bx}^m, \cA}{\err'(\cA(\cG S_\trn))}\nonumber\\
        = & \frac{16\epsilon}{\mu}\sum_{i=1}^{t-1}\EEs{(\bff,\bx)\sim P, S_\trn\sim \cD_{\bff,\bx}^m, \cA}{\ind{\cA(\cG S_\trn,x_i)\neq f_i}}\nonumber\\
        \geq& \frac{16\epsilon}{\mu}\sum_{i=1}^{t-1}\EEs{(\bff,\bx)\sim P}{\Pr_{S_\trn\sim \cD_{\bff,\bx}^m, \cA}(\cA(\cG S_\trn,x_i)\neq f_i|x_i\notin S_{\trn,\cX})\Pr(x_i\notin S_{\trn,\cX})}\nonumber\\
        \geq & \frac{8\epsilon}{\mu}\EEs{(\bff,\bx)\sim P}{\sum_{i=1}^{t-1}\Pr_{S_\trn\sim \cD_{\bff,\bx}^m, \cA}(\cA(\cG S_\trn,x_i)\neq f_i|x_i\notin S_{\trn,\cX})}\label{eq:eq:lb-mu-indc}\,.
    \end{align}
    For all $i\in[t-1]$, for all $z\in \phi_i$, if there is an edge $e=\{\bff,\bg, z\}\in E$, we let
    \begin{equation*}
        \tilde w(e,\bff) = \Pr_{S_\trn\sim \cD_{\bff,\bx}^m, \cA}(\cA(\cG S_\trn,z)\neq f_i|z\notin S_{\trn,\cX})\,,
    \end{equation*}
    where $\bx$ is an arbitrary sequence in $\cU_{\bff}(\bphi)$ satisfying that $x_i = z$.
    Then $\tilde w(e,\bff)$ is well-defined since the distribution of $\cG S_\trn$ conditioned on $x_i\notin S_{\trn,\cX}$ is the same for all $\bx\in \cU_{\bff}(\bphi)$ with $x_i = z$.
    Actually, conditioned on $x_i\notin S_{\trn,\cX}$, the distribution of $\cG S_\trn$ is also the same when $S_\trn$ is sampled from $\cD_{\bg,\bx'}$ where $\bx'$ is an arbitrary sequence in  $\cU_{\bg}(\bphi)$ satisfying that $x_i' = z$.
    Hence, $\tilde w(e,\bff) + \tilde w(e,\bg)=1$.
    By letting $\tilde w(e,\bh)=0$ for all $\bh\notin e$, $\tilde w$ is in $W$.
    Then we have 
    \begin{align*}
        \text{Eq~\eqref{eq:eq:lb-mu-indc}} &\geq \frac{8\epsilon}{\mu}\EEs{(\bff,\bx)\sim P}{\sum_{i\in [t-1]: \exists e\in E, \{\bff,x_i\}\subset e}\tilde w(e,\bff)}\\
        &\geq \frac{8\epsilon}{\mu}\min_{w\in W}\EEs{(\bff,\bx)\sim P}{\sum_{i\in [t]: \exists e\in E, \{\bff,x_i\}\subset e} w(e,\bff)-1}\\
        &\geq 4\epsilon(1-2/\mu)\geq 2\epsilon,
    \end{align*}
    when $\mu\geq 4$. Hence, there exists a labeling function $\bff$ (i.e., there exists a target function $h^*$) and a data distribution $\cD_{\bff,\bx}$ such that $\EEs{S_\trn,\cA}{\err'(\cA(\cG S_\trn))}\geq 2\epsilon$. Since $\err'(h)\leq \frac{16(t-1)\epsilon}{\mu}$ for all hypothesis $h$, $\Pr(\err(\cA(\cG S_\trn)) >\epsilon)\geq \Pr(\err'(\cA(\cG S_\trn)) >\epsilon)>\frac{\mu}{16(t-1)}$.
\end{proof}

\begin{proof}[Proof of the second and third parts of Theorem~\ref{thm:re-da}]
    For any $n\in \NN$, for any given sample $\{(x_1,y_1),\ldots,(x_{n+1},y_{n+1})\}$, let $\bphi =\{\phi_1,\ldots,\phi_{n+1}\} =\{\cG x_1,\ldots, \cG x_{n+1}\}$ denote the multi-set of $t+1$ orbits and construct the one-inclusion graph $G_{\cH,\cG}(\bphi)$.
    As mentioned in Definition~\ref{def:mu}, every $w\in W$ defines a randomized orientation of each edge in graph $G_{\cH,\cG}(\bphi)$.
    That is, for any fixed $w\in W$, for every edge $e=\{\bff,\bg,x\}$, $w$ defines a probability over $\{\bff, \bg\}$.
    Then we can construct an algorithm $\A_w$ for each $w\in W$.
    
    Given the input $(\bphi_{-i},\vec y_{-i}, x_i)$, $\A_w$ finds the subset of vertices $\{(\bff,x_i)|\bff_{-i} = \vec y_{-i}\}$ whose labelings are consistent with $(\bphi_{-i},\vec y_{-i})$. 
    If there exist two such vertices, $(\bff,x_i)$ and $(\bg,x_i)$, they must be connected by $e=(\bff,\bg,x_i)$ due to the definition of $G_{\cH,\cG}(\bphi)$.
    Then $\A_w$ will predict $x_i$ as $f_i$ with probability $w(e,\bg)$ and as $g_i$ with probability $w(e,\bff)$.
    If only one such vertex $(\bff,x_i)$ exists, $\A_w$  predicts the label of $x_i$ by $f_i$. 
    Due to the realizable setting, there must exist at least one such vertex and the algorithm's prediction must be correct when only one vertex exists.
    
    To complete the algorithm, the remaining part is how to choose a good $w\in W$. 
    For any true labeling $\bff^*$ and any sequence of natural data $\bx^*\in \cU_{\bff^*}(\bphi)$, for each $i\in [n+1]$, if there is an edge $e\supset \{\bff^*, x_i^*\}$, it means the algorithm possibly misclassify $x_i^*$ (with the probability dependent on $w$); if there is no such an edge, it means the algorithm will not misclassify $x_i^*$ no matter what $w$ is. 
    For any labeling $\bff\in \Pi_\cH(\bphi)$ and a sequence of natural data $\bx\in \cU_{f}(\bphi)$, 
    we can represent the subset of the points in $\bx$ that the algorithm is uncertain about by a mapping $a_{\bff,\bx}: {E\times \Pi_\cH(\bphi)}\mapsto \{0,1\}$ where $a_{\bff,\bx}(e,\bg) = 1$ iff. $\bg=\bff$ and there exists $i\in [n+1]$ s.t. $\{\bff,x_i\}\in e$.
    Due to the definition, $a_{\bff,\bx}$ has at most $n+1$ non-zero entries.
    Let $A = \{a_{\bff,\bx}|\bff\in \Pi_\cH(\bphi), \bx\in \cU_{f}(\bphi)\}$ denote the set of all such mappings. 
    We now first consider the case where $\abs{E}<\infty$.
    Then for a training set of size $n$, we can rewrite the expected error as
    \begin{align}
        &\EEs{S\sim \cD^n}{\err(\A_w(\cG S))}\nonumber\\
        =&\EEs{S\sim \cD^n, (x,y)\sim \cD,\A_w}{\ind{\A_w(\cG S, x)\neq y}}\nonumber\\
        =&\EEs{(\bx,\vec y)\sim \cD^{n+1},\A_w}{\frac{1}{n+1}\sum_{i=1}^{n+1}\ind{\A_w(\bphi_{-i}, \vec y_{-i}, x_i)\neq y_i}}\nonumber\\
        =&\EEs{(\bx,\vec y)\sim \cD^{n+1},\A_w}{\frac{1}{n+1}\sum_{i=1}^{n+1}\ind{\A_w(\bphi_{-i},\vec y_{-i}, x_i)\neq y_i}\ind{\exists e\supset \{x_i,\bff^*\}}}\nonumber\\
        =&\EEs{(\bx,\vec y)\sim \cD^{n+1}}{\frac{1}{n+1}\sum_{e\in E}w(e,\bff^*)a_{\bff^*,\bx}(e,\bff^*)}\nonumber\\
        =&\EEs{(\bx,\vec y)\sim \cD^{n+1}}{\frac{1}{n+1}\sum_{e\in E, \bff\in \Pi_\cH(\bphi)}w(e,\bff)a_{\bff^*,\bx}(e,\bff)}\,.\label{eq:err-ub-harm}
    \end{align}
    where the last equality holds due to $a_{\bff^*,\bx}(e,\bff)=0$ for all $\bff\neq \bff^*$.
    Since $\bff^*$ and $\bx$ is unknown, our goal is to find a $w$ with $\sum_{e\in E, \bff\in \Pi_\cH(\bphi)}w(e,\bff)a_{\bff^*,\bx}(e,\bff)$ upper bounded for all $\bff^*$ and $\bx$. 
    The algorithm picks $w^* = \argmin_{w\in W}\max_{a\in A} \sum_{e\in E, \bff\in \Pi_{\cH}(\bphi)}w(e,\bff)a(e,\bff)$. 
    Then we have
    \begin{align}
        &\max_{a\in A}\sum_{e\in E, \bff\in \Pi_\cH(\bphi)}w^*(e,\bff)a(e,\bff)= \min_{w\in W} \max_{a\in A} \sum_{e\in E, \bff\in \Pi_\cH(\bphi)}w(e,\bff)a(e,\bff)\nonumber\\
        = & \min_{w\in W} \max_{a\in \conv(A)} \sum_{e\in E, \bff\in \Pi_\cH(\bphi)}w(e,\bff)a(e,\bff)= \max_{a\in \conv(A)}\min_{w\in W} \sum_{e\in E, \bff\in \Pi_\cH(\bphi)}w(e,\bff)a(e,\bff)\label{eq:finite-minimax}\,,
    \end{align}
    where the last equality is due to Minimax theorem.
    Since the optimal solution $a^*$ to Eq.~\eqref{eq:finite-minimax} is in the convex hull of $A$, 
    there is a distribution $P^*\in \Delta(B(\cH,\cG,\bphi))$ such that $a^* = \EEs{(\bff^*,\bx)\sim P^*}{a_{\bff^*,\bx}}$.
    Then we have
    \begin{align*}
        \text{Eq~\eqref{eq:finite-minimax}} = &\min_{w\in W}\sum_{e\in E, \bff\in \Pi_\cH(\bphi)}w(e,\bff) \EEs{(\bff^*,\bx)\sim P^*}{a_{\bff^*,\bx}(e,\bff)}\\
        = &\min_{w\in W}\EEs{(\bff^*,\bx)\sim P^*}{\sum_{e\in E, \bff\in \Pi_\cH(\bphi)}w(e,\bff) a_{\bff^*,\bx}(e,\bff)}\\
        = &\min_{w\in W}\EEs{(\bff^*,\bx)\sim P^*}{\sum_{i\in [n+1]: \exists e\in E, \{\bff^*,x_i\}\subset e} w(e(\bff^*,x_i),\bff^*)}\\
        =& \mu(\cH,\cG,\bphi, P^*)\,.
    \end{align*}
    Combined with Eq~\eqref{eq:err-ub-harm}, we have $\EEs{S \sim \cD^n}{\err(\A_{w^*}(\cG S))}\leq \frac{\mu(\cH,\cG,\bphi,n+1)}{n+1}$.
    
    For $\abs{E}=\infty$, $E\times \Pi_\cH(\bphi)$ could be infinite dimensional and thus, we need to use Sion's minimax theorem.
    The details of how to apply Sion's minimax theorem are described as follows.
    For all $w\in W$, for any edge $e = \{\bff,\bg, x\}$, we have $w(e,\bff) + w(e,\bg) =1$ and thus,
    there exists a one-to-one mapping $\beta: W\mapsto [0,1]^E$ where $\beta(w)(e) = w(e,\bff_{e, 0})$ where $\bff_{e, 0}$ is the labeling in $e$ predicting $x$ as zero.
    In the following, we will overload the notation by using $w$ to represent $\beta(w)$ when it is clear from the context that it is in the space $[0,1]^E$.
    Then we define a mapping $\BL(\cdot,\cdot): [0,1]^E\times A$ by 
    \begin{equation*}
        \BL(w,a):=\sum_{e:a(e,\bff_{e,0})=1}w(e) +\sum_{e:a(e,\bff_{e,1})=1}(1-w(e))\,,
    \end{equation*}
    where $\bff_{e, 0}, \bff_{e, 1}$ are labelings in $e = \{\bff_{e, 0}, \bff_{e, 1},x\}$ and they label $x$ as $0$ and $1$ respectively.
    Then similar to Eq~\eqref{eq:err-ub-harm}, we can represent the expected error as
    \begin{align*}
        \EEs{S \sim \cD^n}{\err(\A_w (\cG S))} = \EEs{(\bx,\vec y)\sim \cD^{n+1}}{\frac{1}{n+1}\BL(w,a_{\bff^*,\bx})}\,.
    \end{align*}
    Now we want to upper bound $\min_{w\in [0,1]^{E}} \sup_{a\in A}\BL(w,a)$.
    $\BL(\cdot,\cdot)$ can be extended to $\R^E\times A$ by letting $\BL(w,a)=\sum_{e:a(e,\bff_{e,0})=1}w(e) +\sum_{e:a(e,\bff_{e,1})=1}(1-w(e))$ for $w\in \R^E$.
    Since $a$ has at most $n+1$ non-zeros entries, $\abs{\BL(w,a)}\leq (n+1)\max(\norm{w}_\infty, \norm{\bOne - w}_\infty)$.
    Let $\tilde A=\{\sum_{i=1}^N c_i a_i|\forall i, a_i\in A, c_i>0,\sum_{i=1}^N c_i =1, N\in \NN\}$ be the set of all finite convex combination of elements in $A$.
    We extend $\BL$ from $R^{E} \times A$ to $R^{E}\times \tilde A$ by defining $\BL(w,a) = \sum_{i=1}^N c_i \BL(w,a_i)$ for $a=\sum_{i=1}^N c_i a_i\in \tilde A$.

    We define a metric $d$ in $\tilde A$: for $a=\sum_{i=1}^N c(a_i) a_i$ and $a'  =\sum_{i=1}^{N'} c'(a_i') a_i'$, the distance between $a$ and $a'$ is $d(a,a') := \sum_{\alpha \in A:c(\alpha)\neq 0 \text{ or } c'(\alpha)\neq 0}\abs{c(\alpha) - c'(\alpha)}$.
    For any fixed $w\in \R^E$, for any $a\in \tilde A$,
    for every open ball $\cB_r(\BL(w,a))$ centered at $\BL(w,a)$ with radius $r>0$, 
    there is an open ball $\cB_{r'}(a)$ in the metric space $(\tilde A,d)$ with $r' = \frac{r}{(n+1)(\max(\norm{w}_{\infty}, \norm{\bOne - w}_{\infty}))}$ such that for all $a'\in \cB_{r'}(a)$,
    $\abs{\BL(w,a')-\BL(w,a)} \leq \sum_{\alpha \in \{a_i\}_{i\in [N]}\cup \{a_i'\}_{i\in [N']}}\abs{c(\alpha)-c'(\alpha)}\abs{\BL(w,\alpha)}< r'\cdot \abs{\BL(w,\alpha)}\leq r$. 
    Hence, $\BL(w, \cdot)$ is continuous for all $w\in \R^E$.

    Consider the the standard topology in $\R$ and then $[0,1]$ is compact.
    Then let $\cT$ be the product topology of $\R^E$.
    Then by Tychonoff theorem, $[0,1]^E$ is compact in $\R^E$.
    For any fixed $a = \sum_{i=1}^N c_i a_i \in \tilde A$, there are at most $N(n+1)$ non-zero entries. 
    Then for any $w\in \R^E$,
    for every open ball $\cB_r(\BL(w,a))$ centered at $\BL(w,a)$ with radius $r>0$, 
    then there is a neighborhood $U= \prod_{e\in E} S_e$, where $S_e = (w_e-\frac{r}{N(n+1)},w_e+\frac{r}{N(n+1)})$ if at least one of $a(e,\bff_{e,0}), a(e,\bff_{e,1})$ is non-zero and $S_e = \R$ for other $e\in E$, such that $\abs{\BL(w',a)- \BL(w,a)}<r$ for all $w'\in U$. 
    That is, $\BL(U)\subset \cB_r(\BL(w,a))$.
    Hence, $\BL(\cdot, a)$ is continuous for all $a\in \tilde A$.

    It is easy to check that $\BL(\cdot, a)$ and $\BL(w, \cdot)$ are linear for all $a\in \tilde A$, $w\in W$.
    Then by Sion's minimax theorem, we have 
    \begin{align*}
        \min_{w\in [0,1]^E} \sup_{a\in A}\BL(w,a)
        \leq  \min_{w\in [0,1]^E} \sup_{a\in \tilde A}\BL(w,a)
        = \sup_{a\in \tilde A}\min_{w\in [0,1]^E}\BL(w,a)=:v^*\,.
    \end{align*}
    Let $w^*=\argmin_{w\in [0,1]^E} \sup_{a\in A}\BL(w,a)$.
    There exists a sequence $a_1,a_2,\ldots$ in $\tilde A$ such that $\lim_{k\rightarrow\infty} \min_{w\in [0,1]^E}\BL(w,a_k) = v^*$.
    For each $a_k$, we let $P_k\in \Delta(B(\cH,\cG,\bphi))$ be the distribution over $B(\cH,\cG,\bphi)$ such that $a_k = \EEs{(\bff^*,\bx)\sim P_k}{a_{\bff^*,\bx}}$.
    Due to the definition of $\tilde A$, we know that $P_k$ is a discrete distribution with finite support.
    Then we have
    \begin{align*}
        \sup_{a\in A}\BL(w^*,a)\leq &\min_{w\in [0,1]^E} \sup_{a\in \tilde A}\BL(w,a)\\
        = &\sup_{a\in \tilde A}\min_{w\in [0,1]^E}\BL(w,a)\\
        = &\lim_{k\rightarrow\infty} \min_{w\in [0,1]^E}\BL(w,\EEs{(\bff^*,\bx)\sim P_k}{a_{\bff^*,\bx}})\\
        = &\lim_{k\rightarrow\infty} \min_{w\in [0,1]^E}\EEs{(\bff^*,\bx)\sim P_k}{\BL(w,a_{\bff^*,\bx})}\\
        \leq &\sup_{P\in \Delta(B(\cH,\cG,\bphi))}\min_{w\in [0,1]^E}\EEs{(\bff^*,\bx)\sim P}{\BL(w,a_{\bff^*,\bx})}\\
        = & \mu(\cH,\cG,\bphi,n+1)\,.
    \end{align*}
    Hence, $\EEs{S \sim \cD^n}{\err(\A_{w^*}(\cG S))}\leq \frac{\mu(\cH,\cG,\bphi,n+1)}{n+1}$.
    \paragraph{The first upper bound}  We can convert the above bound into a high probability bound by $\alpha$-Boost.
    Let $\cA_0 = \cA_{w^*}$ as defined above.
    Let $t$ be any positive integer such that $\frac{1}{6}\leq \frac{\mu(\cH,\cG,t)}{t}\leq \frac{1}{3}$.
    As established above, for $S_\trn = \{(x_1,y_1),\ldots,(x_m,y_m)\}\sim \cD^m$ and any distribution $\cP$ supported on $\{(x_1,y_1),\ldots,(x_m,y_m)\}$,
    for $k=t-1$ and $S\sim \cP^k$, $\EE{\err_{\cP}(\cA_0(S))}\leq 1/3$.
    Thus given $S_\trn$ and $\cP$, there exists a deterministic choice of $S_\cP\in \{(x_1,y_1),\ldots,(x_m,y_m)\}^k$ with $\err_{\cP}(\cA_0(S_\cP))\leq 1/3$.
    Then Lemma~\ref{lmm:boosting} implies that for a value $T=\ceil{c_1 \log m}$ (for numerical constant $c_1\geq 1$), 
    there exists $S_1,\ldots,S_T\in \{(x_1,y_1),\ldots,(x_m,y_m)\}^k$ such that, for $\hat h(\cdot) = \Majority(\cA_0(S_1)(\cdot),\ldots,\cA_0(S_T)(\cdot))$, 
    it holds that $\hat h(x_i)=y_i$ for all $i\in [m]$.
    Note that $\hat h$ can be expressed as a compression scheme.
    By Lemma~\ref{lmm:re-compression},
    with probability at least $1-\delta$,
    \[\err(\hat h)\leq \frac{c_2}{m-kT}\left(kT\log m + \log \frac{1}{\delta}\right)\,,\]
    for a numerical constant $c_2\geq 1$. 
    Thus, for any given $\epsilon\in (0,1)$, the right hand side can be made less than $\epsilon$ for an appropriate choice of
    \[m = O( \frac{1}{\epsilon} (k\log^2\frac{k}{\epsilon} + \log \frac{1}{\delta}))\,,\]
    where $k = t-1\leq 6\mu(\cH,\cG,t) -1$.
    \paragraph{The second upper bound} Again, using the same standard technique as we used in Theorem~\ref{thm:inv-opt} to convert an algorithm with expected error upper bound to an algorithm with high probability guarantee. 
    The algorithm runs $\cA_{w^*}$ for $\ceil{\log(2/\delta)}$ times, each time using a new sample of size $\ceil{4\mu(\cH,\cG)/\epsilon}$.
    Then the algorithm selects the hypothesis from the outputs with the minimal error on a new sample of size $\ceil{32/\epsilon(\ln(2/\delta)+\ln(\ceil{\log(2/\delta)}+1))}$.
\end{proof}
\section{Proof of Theorem~\ref{thm:re-mu-dim}}\label{app:re-mu-dim}
\begin{proof}
    Let $d = \dim(\cH,\cG)$.
    Let $\bphi$ and the corresponding $B = \{(\bff,\bx_\bff)\}_{\bff\in \cY^d}$ be given.
    Let $P$ be the uniform distribution over $B$.
    Then
    \begin{align}
        &\mu(\cH,\cG,\bphi,P)=\min_{w\in W} \EEs{(\bff,\bx_\bff)\sim P}{\sum_{i\in [d]:\exists e\in E, \{\bff,x_{\bff, i}\}\subset e} w(e,\bff)}\nonumber \\
        =& \min_w  \frac{1}{2^d}\sum_{\bff\in \cY^d}\sum_{i\in [d]:\exists e\in E, \{\bff,x_{\bff, i}\}\subset e} w(e,\bff)\nonumber\\
        =& \frac{1}{2^d} \min_w \sum_{\bff\in \cY^d}\sum_{i=1}^d \frac{1}{2}\left(w(e(\bff, x_{\bff, i}),\bff) + w(e(\bff, x_{\bff, i}),(1-f_i,\bff_{-i}))\right)\label{eq:apply-def-dim}\\
        =&\frac{1}{2^d}\cdot \frac{2^d\cdot d}{2}= \frac{d}{2}\nonumber\,,
    \end{align}
    where Eq~\eqref{eq:apply-def-dim} holds since $x_{\bff,i} = x_{\bg,i}$ if $\bff \oplus \bg = \be_i$ due to the definition of $\dim(\cH,\cG)$.
\end{proof}
\section{Proof of Theorem~\ref{thm:re-opt}}\label{app:re-opt}
\begin{proof}[Proof of the lower bound] 
    Let $d = \vcao(\cH,\cG)$, let $X_d = \{x_1,\ldots,x_d\}$ be a set shattered in the way defined in Definition~\ref{def:vcao} and let $\cH':= \{h_{|X_d}|h\in \cH\}$.
    Then $\vcd(\cH') = d$ and any lower bound on the sample complexity of PAC learning of $\cH'$ is also a lower bound on the sample complexity of relaxed realizable PAC learning of $\cH$.
    The lower bound follow by standard arguments from~\cite{vapnik1974,blumer1989learnability,ehrenfeucht1989general}.
\end{proof}

\begin{proof}[Proof of the upper bound of the algorithm ERM-INV]
    The algorithm ERM-INV works as follows.
    ERM-INV first applies ERM over the original data set.
    Then for every test instance $x$, 
    if $x$ lies in the orbits generated by the original data, 
    the algorithm predicts $x$ by the label of the training instance in the same orbit; 
    otherwise, the algorithm predicts according to the ERM output.
    Specifically, given the training set $S_\trn=\{(x_1,y_1),\ldots,(x_m,y_m)\}\sim \cD^m$, the algorithm finds a hypothesis $h\in \cH$ consistent with $S_\trn$ and then outputs $f_{h,S_\trn}$ defined by
    \begin{align}
        f_{h,S_\trn}(x) = 
        \begin{cases}
        y_i& \text{if there exists }i\in[m] \st x\in \cG x_i\,,\\
        h(x)& \text{o.w.}
        \end{cases}\label{eq:erm-inv}
    \end{align}
    The function $f_{h,S_\trn}$ is well-defined a.s. when the data distribution $\cD$ is $\cG$-invariant since if there exists $i\neq j$ such that $x_i \in \cG x_j$, then $y_i = y_j = h^*(x_j)$.

    The proof idea is similar to that of Theorem~\ref{thm:inv-da-ub}.
    Let $d = \vcao(\cH,\cG)$.
    Consider two sets $S$ and $S'$ of $m$ i.i.d. samples drawn from the data distribution $\cD$ each.
    We denote $A_{S}$ the event of $\{\exists h\in \cH, \err_{S}(h)=0, \err_\cD(f_{h,S})\geq \epsilon\}$ and $B_{S,S'}=\{\exists h\in \cH, \err_{S }(h)=0, \err_{S'}(f_{h,S})\geq \frac{\epsilon}{2}\}$.
    By Chernoff bound, we have $\Pr(B_{S,S'})\geq \Pr(A_S)\cdot \Pr(B_{S,S'}|A_S)\geq \frac{1}{2}\Pr(A_S)$ when $m\geq \frac{8}{\epsilon}$.
    The sampling process of $S$ and $S'$ is equivalent to drawing $2m$ i.i.d. samples and then randomly partitioning into $S$ and $S'$ of $m$ each.
    For any fixed $S''$, let us divide $S''$ into two categories in terms of the number of examples in each orbit.
    Let $R_1 = \{x|\abs{\cG x \cap S''_{\cX}}\geq \log^2 m\}$ and $R_2 = S''_{\cX}\setminus R_1$.
    Let $\cH_0\subset \cH$ denote the set hypotheses making at least $\frac{\epsilon m}{2}$ mistakes in $S''$.
    Now we divide $\cH_0$ into two sub-classes as follows.
    \begin{itemize}
        \item  Let $\cH_1=\{h\in \cH_0| h \text{ makes fewer than }\frac{\epsilon m}{4} \text{ mistakes in } R_2\}$.
        Let $\cF_{S''}(\cH_1) := \{f_{h,T}| h\in \cH_1, T\subset S'', \abs{T}=m\}$. 
        For any $h$, if $S$ is correctly labeled, then $\err_{\cG S}(f_{h,S})=0$.
        Thus we have
        \begin{align*}
            &\Pr(\exists h\in \cH_1, \err_{S}(h)=0, \err_{S'}(f_{h,S})\geq \frac{\epsilon}{2})\\
            \leq &\Pr(\exists h\in \cH_1, \err_{\cG S}(f_{h,S})=0, \err_{S'}(f_{h,S})\geq \frac{\epsilon}{2})\\
            \leq &\Pr(\exists f\in \cF_{S''}(\cH_1), \err_{\cG S}(f)=0, \err_{S'}(f)\geq \frac{\epsilon}{2})\,.
        \end{align*}
        Since every $h\in \cH_1$ makes fewer than $\frac{\epsilon m}{4}$ mistakes in $R_2$, for all $f\in \cF_{S''}(\cH_1)$, 
        if $f$ makes at least $\frac{\epsilon m}{2}$ mistakes in $S''$, 
        it must makes at least $\frac{\epsilon m}{4}$ mistakes in $R_1$. 
        Similar to the case~1 in the proof of Theorem~\ref{thm:inv-da-ub}, for any $f\in \cF_{S''}(\cH_1)$, 
        we let $X(f)\subset R_1$ denote a minimal set of examples in $R_1$ (breaking ties arbitrarily but in a fixed way) such that $f$ misclassify $X(f)$ and $\abs{(\cG X(f))\cap S''_\cX}\geq \frac{\epsilon m}{4}$ where $\cG X(f) = \{g x |x\in X(f), g\in \cG\}$ is the set of all examples lying in the orbits generated from $X(f)$. 
        Let $K(f) = \cG X(f)$ and $\cK = \{K(f)|f\in \cF_{S''}(\cH_1)\}$.
        Notice that each example in $X(f)$ must belong to different orbits, otherwise it is not minimal. 
        Besides, each orbit contains at least $\log^2 m$ examples from $S''_\cX$.
        Hence, $\abs{X(f)}\leq \frac{\epsilon m}{4 \log^2 m}$. 
        Since there are at most $\frac{2m}{\log^2 m}$ orbits generated from $R_1$, we have $\abs{\cK} \leq \sum_{i=1}^{\frac{\epsilon m}{4 \log^2 m}} {\frac{2m}{\log^2 m}\choose i} \leq \left(\frac{8e}{\epsilon}\right)^{\frac{\epsilon m}{4 \log^2 m}}$.
        Since $f$ misclassify $X(f)$, all examples in $K(f)$ must go to $S'$ to guarantee $\err_{\cG S}(f)=0$ and thus, 
        we have
        \begin{align*}
            &\Pr(\exists f\in \cF_{S''}(\cH_1), \err_{\cG S}(f)=0, \err_{S'}(f)\geq \frac{\epsilon}{2})\\
            \leq & \Pr(\exists f\in \cF_{S''}(\cH_1), K(f)\cap S_\cX=\emptyset)=\Pr(\exists K\in \cK, K\cap S_\cX=\emptyset)\\
            \leq &\sum_{K\in \cK} 2^{-\frac{\epsilon m}{4}} \leq \left(\frac{8e}{\epsilon}\right)^{\frac{\epsilon m}{4 \log^2 m}}\cdot 2^{-\frac{\epsilon m}{4}} = 2^{-\frac{\epsilon m}{4}(1-\frac{\log(8 e/\epsilon)}{\log^2 m})}\leq 2^{-\frac{\epsilon m}{8}}\,,
        \end{align*}
        when $m\geq \frac{8e}{\epsilon}+4$.
        
        \item Let $\cH_2 = \cH_0\setminus \cH_1$.
        Now we will bound $\Pr(\exists h\in \cH_2, \err_{S}(h)=0, \err_{S'}(f_{h,S})\geq \frac{\epsilon}{2})$.
        Similar to the case~2 in Theorem~\ref{thm:inv-da-ub}, every $h\in \cH_2$ will make at least $\frac{\epsilon m}{4}$ mistakes in $R_2$. 
        Since $\vcao(\cH,\cG) = d$ and every orbit generated from $R_2$ contains fewer than $\log^2 m$ examples, the number of examples in $R_2$ that can be shattered by $\cH$ is no greater than $d\log^2 m$. 
        Thus, the number of ways labeling examples in $R_2$ is upper bounded by $(\frac{2em}{d})^{d\log^2 m}$ by Sauer's lemma. 
        For any multi-subset $X\subset S''_\cX$ and hypothesis $h$, we denote by $\hat M_{X}(h)$ the number of instances in $X$ misclassified by $h$.
        Hence, we have
    \begin{align*}
        &\Pr(\exists h\in \cH_2, \err_{S}(h)=0, \err_{S'}(f_{h,S})\geq \frac{\epsilon}{2})\\
        \leq & \Pr(\exists h\in \cH_2, \hat M_{S_\cX\cap R_2}(h)=0, \hat M_{S'_\cX\cap R_2}(h)\geq \frac{\epsilon m}{4})\\
        \leq &(\frac{2em}{d})^{d\log^2 m}\cdot 2^{-\frac{\epsilon m}{4}} = 2^{-\frac{\epsilon m}{4}+ d\log^2 m \log(2em/d)}
    \end{align*}
    \end{itemize}
    Combining the results for $\cH_1$ and $\cH_2$, we have
    \begin{align*}
        \Pr(B_{S,S'}) \leq  2^{-\frac{\epsilon m}{8}} + 2^{-\frac{\epsilon m}{4}+ d\log^2 m \log(2em/d)}\leq  \frac{\delta}{2},
    \end{align*}
    when $m\geq \frac{8}{\epsilon}(d\log^2 m\log \frac{2em}{d} + \log \frac 4 \delta + e) + 4$.
\end{proof}
\begin{proof}[Proof of the upper bound of the 1-inclusion-graph predictor]
    The algorithm is similar to the 1-inclusion-graph predictor in Theorem~\ref{thm:inv-opt}.
    For any $t\in \NN$ and $S=\{(x_1,y_1),\ldots, (x_t,y_t)\}$, let $X_S$ be the set of different elements in $S_{\cX}$ and $\cH'(X_S):= \{h_{|X_S}|\forall x',x\in X_S, x'\in \cG x \text{ implies } h(x')=h(x)\}$.
    Here $\cH'(X_S)$ is different from $\cH(X_S)$ defined in Theorem~\ref{thm:inv-opt} in the sense that every hypothesis in $\cH'(X_S)$ is not $(\cG,X_S)$-invariant but only predict the observed examples in the same orbit in the same way.
    Note that $h^*_{|X_S}$ is in $\cH'(X_S)$ if $S$ is realized by $h^*$.
    Let $Q_{\cH'(X_S),X_S}$ be the function guaranteed by Eq~\eqref{eq:1-inclusion} for the instance space $X_S$ and hypothesis class $\cH'(X_S)$.
    Given a set $S$ of $t$ i.i.d. samples, we let $\cA(S)$ be defined as $\cA(S)$ be defined as $\cA(S, x) = Q_{\cH'(X_{S}\cup \{x\}), X_{S}\cup \{x\}}(S, x)$ if $\cH'(X_{S}\cup \{x\})$ is nonempty and predicting arbitrarily if it is empty.
    Following the analogous proof of Theorem~\ref{thm:inv-opt}, we have
    \begin{align*}
        \EEs{S\sim \cD^t}{\err(\cA(S))} =& \EEs{(x_i,y_i)_{i\in[t+1]}\sim \cD^{t+1}}{\ind{\A(\{x_{i},y_{i}\}_{i\in [t]},x_{t+1})\neq y_{t+1}}} \\
        =& \frac{1}{(t+1)!} \sum_{\sigma\in \text{Sym}(t+1) } \EE{\ind{\A(\{x_{\sigma(i)},y_{\sigma(i)}\}_{i\in [t]},x_{\sigma(t+1)})\neq y_{\sigma(t+1)}}}\\
        =& \EE{\frac{1}{(t+1)!} \sum_{\sigma\in \text{Sym}(t+1) } \ind{\A(\{x_{\sigma(i)},y_{\sigma(i)}\}_{i\in [t]},x_{\sigma(t+1)})\neq y_{\sigma(t+1)}}}\\
        \leq &  \frac{\EE{\vcd(\cH'(\{x_i|i\in [t+1]\}))}}{t+1}\leq \frac{\vcao(\cH,\cG)}{t+1}\,.
    \end{align*}
Again, we use the same method as that in Theorem~\ref{thm:inv-opt} to convert this algorithm, guaranteeing the expected error upper bounded by $\vcao(\cH,\cG)$, into an algorithm with high probability $1-\delta$.
Specifically, the algorithm runs $\cA$ for $\ceil{\log(2/\delta)}$ times, each time using a new sample of size $\ceil{4\vcao(\cH,\cG)/\epsilon}$.
Then the algorithm selects the hypothesis from the outputs with the minimal error on a new sample of size $\ceil{32/\epsilon(\ln(2/\delta)+\ln(\ceil{\log(2/\delta)}+1))}$.
\end{proof}
\section{Proof of Theorem~\ref{thm:ag-opt}}\label{app:ag-opt}
To prove the theorem, we will use a generalization bound for agnostic compression scheme by~\cite*{graepel2005pac}.
\begin{lemma}[Agnostic compression generalization bound]\label{lmm:ag-compression}
    There exists a finite numerical constant $c>0$ such that, for any compression scheme $(\kappa,\rho)$, for any $n\in \NN$ and $\delta\in (0,1)$, for any distribution $\cD$ on $\cX\times \cY$, for $S\sim \cD^n$,
    letting $B(S,\delta):=\frac{1}{n}(\abs{\kappa(S)}\log(n)+\log(1/\delta))$,
    with probability at least $1-\delta$, then
    \begin{equation*}
        \abs{\err_\cD(\rho(\kappa(S)))-\err_S(\rho(\kappa(S)))}\leq c\sqrt{\err_S(\rho(\kappa(S)))B(S,\delta)} + c B(S,\delta)\,.
    \end{equation*}
\end{lemma}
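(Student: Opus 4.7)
The plan is to establish this agnostic generalization bound via the classical hold-out argument familiar from realizable compression schemes (in the style of Littlestone--Warmuth), but upgraded from a standard Chernoff tail bound to a Bernstein-type (relative Chernoff) bound so that the deviation scales with the empirical error rather than with a uniform constant. The key structural observation is that for any fixed index set $I\subseteq[n]$, the hypothesis $h_I := \rho(S_I)$ depends only on $S_I$, so the losses of $h_I$ on the residual sample $S_{-I}$ are conditionally i.i.d.\ with mean $\err_\cD(h_I)$. Since $\kappa(S)$ corresponds to a data-dependent subset $I^\star\subseteq[n]$, the strategy is to union-bound the per-subset concentration over all possible $I^\star$.

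First I would fix $I\subseteq[n]$ of size $k$ and apply Bernstein's inequality to the $n-k$ i.i.d.\ indicators $\ind{h_I(x_i)\neq y_i}$ for $i\in[n]\setminus I$, obtaining that with probability at least $1-\delta'$,
\[
\abs{\err_\cD(h_I)-\err_{S_{-I}}(h_I)}\le c_1\sqrt{\frac{\err_\cD(h_I)\log(1/\delta')}{n-k}}+\frac{c_2\log(1/\delta')}{n-k}.
\]
Then I would union-bound this with $\delta'_k=\delta/(n\binom{n}{k})$ over all $I$ of all sizes $k\in\{0,\ldots,n-1\}$. Since $\log\binom{n}{k}\le k\log(en/k)$, we get $\log(1/\delta'_k)=O(k\log n+\log(n/\delta))=O(nB(S,\delta))$, so dividing by $n-k$ yields an $O(B(S,\delta))$ bound (absorbing the $k/(n-k)$ factor by assuming $\abs{\kappa(S)}\le n/2$ without loss of generality, since otherwise $B(S,\delta)\ge \tfrac{1}{2}\log n\ge 1$ and the stated inequality is trivial). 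Specializing to the data-dependent $I^\star$, the event $\kappa(S)=S_{I^\star}$ selects a single term of the union, giving
\[
\abs{\err_\cD(\rho(\kappa(S)))-\err_{S_{-I^\star}}(\rho(\kappa(S)))}\le c_3\sqrt{\err_\cD(\rho(\kappa(S)))\,B(S,\delta)}+c_3 B(S,\delta).
\]

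The next step is to convert the residual-sample empirical error $\err_{S_{-I^\star}}$ into the full-sample empirical error $\err_S$ appearing in the statement. Writing $n\,\err_S(h)=(n-\abs{I^\star})\err_{S_{-I^\star}}(h)+\abs{I^\star}\err_{S_{I^\star}}(h)$ shows $\abs{\err_{S_{-I^\star}}(h)-\err_S(h)}\le 2\abs{I^\star}/n\le 2B(S,\delta)$, so this discrepancy is absorbed into the $cB(S,\delta)$ term. The bound at this point still features $\err_\cD$ inside the square root rather than $\err_S$; I would resolve this by solving the implicit inequality $\abs{a-b}\le c\sqrt{ax}+cx$ (with $a=\err_\cD(\rho(\kappa(S)))$, $b=\err_S(\rho(\kappa(S)))$, $x=B(S,\delta)$) to obtain $a\le 2b+O(x)$, then re-substituting this upper bound on $a$ back inside the square root at the cost of only a constant factor.

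The main obstacle I expect is precisely this last substitution: arranging constants so that both directions of deviation ($a>b$ and $a<b$) collapse into a single symmetric bound, without the $\sqrt{a}\to\sqrt{b}$ replacement inflating the additive $B(S,\delta)$ term beyond a multiplicative constant. A secondary nuisance is the edge case where $n-\abs{I^\star}$ is comparable to $\abs{I^\star}$, which I would dispatch via the triviality argument above so that effectively $n-\abs{I^\star}=\Theta(n)$ in the non-trivial regime.
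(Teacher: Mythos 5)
The paper itself gives no proof of this lemma: it is invoked as a black box, attributed to the compression-scheme generalization bounds of Graepel, Herbrich, and Shawe-Taylor (2005), in the same way Lemma~\ref{lmm:re-compression} is attributed to Littlestone and Warmuth. So there is no paper proof to compare against; what can be said is whether your argument is a valid proof of the stated claim. It is, and it is the standard one underlying that reference: exchangeability makes $h_I=\rho(S_I)$ independent of $S_{-I}$ for fixed $I$, a Bernstein/relative-Chernoff tail gives the $\sqrt{\err_\cD \cdot B}$ scaling, a union bound over index sets pays $O(k\log n+\log(1/\delta))$, and the two final conversions (residual to full empirical error, and inverting $\abs{a-b}\le c\sqrt{ax}+cx$ to swap $\err_\cD$ for $\err_S$ under the root) are correct and handled exactly as they should be. The triviality dispatch for $\abs{\kappa(S)}>n/2$ is also the right way to keep $n-k=\Theta(n)$.

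Two small technical points deserve a flag, though neither threatens the argument. First, your allocation $\delta'_k=\delta/\bigl(n\binom{n}{k}\bigr)$ charges an extra $\log n$ at $k=0$, where $nB(S,\delta)=\log(1/\delta)$ has no $\log n$ term; allocating $\delta'_k\propto\binom{n}{k}^{-1}(k+1)^{-2}$ instead removes this and matches the stated bound in the degenerate $\abs{\kappa(S)}=0$ case. Second, in this paper the compression function $\kappa$ outputs an ordered tuple with possible repetitions (e.g.\ $\kappa(S)\in\{(x_1,y_1),\ldots,(x_m,y_m)\}^{3dT}$ in the proof of Theorem~\ref{thm:ag-opt}), so the union bound should run over the $n^k$ ordered $k$-tuples rather than the $\binom{n}{k}$ subsets of $[n]$; since $\log(n^k)=k\log n$ this is still $O(nB(S,\delta))$ and nothing downstream changes, but the counting should be stated that way to match the object actually being compressed.
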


\begin{proof}[Proof of the lower bound]
    For the lower bound, our construction follows \cite{ben2014sample}.
    For any $\cX,\cH$, let $A_1,\ldots,A_d$ be subsets of $\cX$ such that the orbits of every two different elements $x,x'\in \cup_{i\in[d]} A_i$ are disjoint, i.e., $\forall x,x'\in \cup_{i\in[d]} A_i, \cG x\cap \cG x' =\emptyset$.
    We say $\cH$ set-shatters $A_1,\ldots,A_d$ if for every binary vector $\by\in \cY^d$, there exists some $h_\by\in \cH$ such that for all $i\in [d]$ and $x\in \cX$, if $x\in A_i$ then $h_\by(x) = y_i$.
    Then by following Theorem~7 of \cite{ben2014sample}, if $\cH$ set-shatters $A_1,\ldots,A_d$ for some infinite subsets $A_1,\ldots,A_d$ of $\cX$, the standard agnostic PAC sample complexity of learning $\cH$ under deterministic labels for instance space being $\cup_{i\in [d]} A_i$ is lower bounded by $\frac{d}{\epsilon^2}$ for all $\delta < 1/32$.
    Since any data distribution $\cD$ with $\cD_\cX(\cup_{i\in [d]} A_i)=1$ is $\cG$-invariant, the above lower bound also lower bounds $\cM_\ag(\epsilon,\delta;\cH,\cG)$ for all $\delta < 1/32$.
    Let $h(x) = 0$ for all $x\notin \cup_{i\in[k]} A_i$ and for all $h\in \cH$, $\vcao(\cH,\cG) = d$.
    Thus, $\cM_\ag(\epsilon,1/64;\cH,\cG) = \Omega(\frac{\vcao(\cH,\cG)}{\epsilon^2})$.
    The construction above works for any $d>0$.
\end{proof}
\begin{proof}[Proof of the upper bound of ERM-INV]
    Let $d = \vcao(\cH,\cG)$.
    Consider two sets $S$ and $S'$ of $m$ i.i.d. samples drawn from the data distribution $\cD$ each.
    We denote $A_{S}$ the event of $\{\exists h\in \cH, \err_\cD(f_{h,S})\geq \err_{S}(h)+\epsilon\}$ and $B_{S,S'}=\{\exists h\in \cH, \err_{S'}(f_{h,S})\geq \err_{S}(h)+\frac{\epsilon}{2}\}$.
    By Hoeffding bound, we have $\Pr(B_{S,S'})\geq \Pr(A_S)\cdot \Pr(B_{S,S'}|A_S)\geq \frac{1}{2}\Pr(A_S)$ when $m\geq \frac{2}{\epsilon^2}$.
    The sampling process of $S$ and $S'$ is equivalent to drawing $2m$ i.i.d. samples and then randomly partitioning into $S$ and $S'$ of $m$ each.
    For any fixed $S''$, let us divide $S''$ into two categories in terms of the number of examples in each orbit.
    Let $R_1 = \{x|\abs{\cG x \cap S''_{\cX}}\geq \log^2 m\}$ and $R_2 = S''_\cX\setminus R_1$.
    For any multi-subset $X\subset S''_\cX$ and hypothesis $h$, we denote by $\hat M_{X}(h)$ the number of instances in $X$ misclassified by $h$.
    Let $\cF_{S''}(\cH) := \{f_{h,T}| h\in \cH, T\subset S'', \abs{T}=m\}$. 
    Then we have 
    \begin{align*}
        &\Pr(B_{S,S'}) \\
        = &\Pr(\exists h\in \cH, \err_{S'}(f_{h,S})\geq \err_{S}(h)+\frac{\epsilon}{2})\\
        \leq & \Pr(\exists h\in \cH, (\hat M_{S'_\cX \cap R_1}(f_{h,S})\geq \hat M_{S_\cX \cap R_1}(h)+\frac{\epsilon m}{4}) \vee (\hat M_{S'_\cX \cap R_2}(f_{h,S})\geq \hat M_{S_\cX \cap R_2}(h)+\frac{\epsilon m}{4}))\\
        \leq & \underbrace{\Pr(\exists h\in \cH, \hat M_{S'_\cX \cap R_1}(f_{h,S})\geq \hat M_{S_\cX \cap R_1}(h)+\frac{\epsilon m}{4})}_{(a)}\\
        &+ \underbrace{\Pr(\exists h\in \cH,\hat M_{S'_\cX \cap R_2}(f_{h,S})\geq \hat M_{S_\cX \cap R_2}(h)+\frac{\epsilon m}{4})}_{(b)}\,.
    \end{align*}
    For the first term, since $\err_{\cG S}(f_{h,S})=0$ according to the definition of $f_{h,S}$,
    \begin{align*}
        (a) \leq &\Pr(\exists f\in \cF_{S''}(\cH), \hat M_{S'_\cX \cap R_1}(f)\geq \frac{\epsilon m}{4}, \err_{\cG S}(f)=0)\,.
    \end{align*}
    Again, similar to case~1 in Theorem~\ref{thm:inv-da-ub} (also the first upper bound in Theorem~\ref{thm:re-opt}), 
    for any $f\in \cF_{S''}(\cH)$, 
    we let $X(f)\subset R_1$ denote a minimal set of examples in $R_1$ (breaking ties arbitrarily but in a fixed way) such that $f$ misclassify $X(f)$ and $\abs{(\cG X(f))\cap S''_\cX}\geq \frac{\epsilon m}{4}$ where $\cG X(f) = \{g x |x\in X(f), g\in \cG\}$ is the set of all examples lying in the orbits generated from $X(f)$. 
    Let $K(f) = \cG X(f)$ and $\cK = \{K(f)|f\in \cF_{S''}(\cH)\}$.
    Notice that each example in $X(f)$ must belong to different orbits, otherwise it is not minimal. 
    Besides, each orbit contains at least $\log^2 m$ examples.
    Hence, $\abs{X(f)}\leq \frac{\epsilon m}{4 \log^2 m}$. 
    Since there are at most $\frac{2m}{\log^2 m}$ orbits generated from $R_1$, we have $\abs{\cK} \leq \sum_{i=1}^{\frac{\epsilon m}{4 \log^2 m}} {\frac{2m}{\log^2 m}\choose i} \leq \left(\frac{8e}{\epsilon}\right)^{\frac{\epsilon m}{4 \log^2 m}}$.
    Since $f$ misclassify $X(f)$, all examples in $K(f)$ must go to $S'$ to guarantee $\err_{\cG S}(f)=0$ and thus, 
    we have
    \begin{align*}
        &\Pr(\exists f\in \cF_{S''}(\cH),\hat M_{S'_\cX \cap R_1}(f)\geq \frac{\epsilon m}{4}, \err_{\cG S}(f)=0)\\
        \leq & \Pr(\exists f\in \cF_{S''}(\cH), K(f)\cap S_\cX=\emptyset)=\Pr(\exists K\in \cK, K\cap S_\cX=\emptyset)\\
        \leq &\sum_{K\in \cK} 2^{-\frac{\epsilon m}{4}} \leq \left(\frac{8e}{\epsilon}\right)^{\frac{\epsilon m}{4 \log^2 m}}\cdot 2^{-\frac{\epsilon m}{4}} = 2^{-\frac{\epsilon m}{4}(1-\frac{\log(8 e/\epsilon)}{\log^2 m})}\leq 2^{-\frac{\epsilon m}{8}}\,,
    \end{align*}
    when $m\geq \frac{8e}{\epsilon}$.
    For the second term, since $\hat M_{S'_\cX \cap R_2}(h)\geq \hat M_{S'_\cX \cap R_2}(f_{h,S})$, then we have
    \begin{align}
        (b) \leq & \Pr(\exists h\in \cH, \hat M_{S'_\cX \cap R_2}(h)\geq \hat M_{S_\cX \cap R_2}(h)+\frac{\epsilon m}{4})\nonumber\\
        \leq & (\frac{2em}{d})^{d\log^2 m}\cdot e^{-2m'(\frac{\epsilon m}{8m'})^2} \label{eq:apply-hoeffding}\\
        \leq & e^{-\frac{\epsilon^2m}{32}+ d\log^2 m \ln(2em/d)}\nonumber\,,
    \end{align}
    where Eq~\eqref{eq:apply-hoeffding} adopts Hoeffding bound and Sauer's lemma (the number points in $R_2$ that can be shattered by $\cH$ is at most $d\log^2 m$; otherwise $\vcao(\cH,\cG)> d$) and $m' = \max_{h\in \cH} M_{ R_2}(h)$.
    By setting $m\geq \frac{32}{\epsilon^2}(d\log^2 m \ln(2em/d) + \ln(8/\delta)+1) + \frac{8}{\epsilon}\log(8/\delta)$, we have $\Pr(A_S)\leq \delta/2$.
    By Hoeffding bound, with probability at least $1-\delta/2$, $\err_S(h^*)\leq \err_\cD(h^*)+\frac{\epsilon}{2}$.
    By a union bound, we have that with probability at least $1-\delta$, $\err_\cD(f_{ 
    h,S})\leq \err_S(h) + \epsilon \leq \err_S(h^*) + \epsilon \leq \err_\cD(h^*)+ 3\epsilon/2$.
\end{proof}

\begin{proof}[Proof of the upper bound of the 1-inclusion-graph predictor]
    Here we provide another algorithm based on the technique of reduction-to-realizable of \cite{david2016supervised} and the 1-inclusion-graph predictor in the relaxed realizable setting.
    Following the argument in Theorem~\ref{thm:re-opt}, given a sample $S=\{(x_1,y_1),\ldots, (x_n,y_n)\}$, let $X_S$ be the set of different elements in $S_{\cX}$ and define $\cH'(X_S):= \{h_{|X_S}|\forall x',x\in X_S, x'\in \cG x \text{ implies } h(x')=h(x)\}$.
    Then we define an algorithm $\cA_0$ as 
    \begin{align}
        \cA_0(S,x) = 
        \begin{cases}
            Q_{\cH'(X_{S}\cup \{x\}), X_{S}\cup \{x\}}(S, x)
            &\text{if } \cH'(X_{S}\cup \{x\})\neq \emptyset\,,\\
            0 &\text{ o.w.}
        \end{cases}\label{eq:def-of-A}
    \end{align}
    where $Q_{\cH'(X_{S}\cup \{x\}), X_{S}\cup \{x\}}$ is the function guaranteed by Eq~\eqref{eq:1-inclusion} for hypothesis class $\cH'(X_{S}\cup \{x\})$ and instance space $ X_{S}\cup \{x\}$.
    In the relaxed realizable setting, if $S\sim \cD^n$ and $x\sim \cD_\cX$, $\cH'(X_{S}\cup \{x\})\neq \emptyset$ a.s. as it contains $h^*_{|X_{S}\cup \{x\}}$. 
    While in the agnostic setting, it is not the case.
    Even if $S\sim \cD^n$ and $x\sim \cD_\cX$, $\cH'(X_{S}\cup \{x\})$ could be empty as there might exist an instance $x'\in X_S \cap \cG x$ such that no hypothesis in $\cH$ labeling them in the same way.
    
    Let $d = \vcao(\cH,\cG)$.
    If a sample $\{(x_1,y_1),\ldots, (x_{n+1},y_{n+1})\}$ is realizable by $\cH$, then by Lemma~\ref{lmm:1-inclusion}, we have
    \begin{align}
        \frac{1}{(n+1)!}\sum_{\sigma\in \Sym(n+1)}\ind{\cA_0(\{x_{\sigma(i)},y_{\sigma(i)}\}_{i\in [n]},x_{\sigma(n+1)})\neq y_{\sigma(n+1)}} \leq \frac{d}{n+1}\,.\label{eq:apply-ag-1-inclusion}
    \end{align}
    
    Now we use $\cA_0$ as a weak learner to construct a compression scheme by following the construction \cite{david2016supervised}.
    Given a training set $S_\trn=\{(x_1,y_1),\ldots,(x_m,y_m)\}$, let $R$ denote the largest submulitset of $S_\trn$ that is realizable w.r.t. $\cH$.
    If $\abs{R} = 0$, then define $\hat h$ as the all-0 function $\hat h(x)=0$.
    Otherwise, if $\abs{R}>0$, 
    for any distribution $\cP$ on $R$,
    by Eq~\eqref{eq:apply-ag-1-inclusion}, we have that
    \begin{align*}
        &\EEs{S\sim \cP^{3d}}{\err_\cP(\cA_0(S))}\\
        =&\EEs{(x_i,y_i)_{i\in[3d+1]}\sim  \cP^{3d}}{\frac{1}{(3d+1)!}\sum_{\sigma\in \Sym(3d+1)}\ind{\cA_0(\{x_{\sigma(i)},y_{\sigma(i)}\}_{i\in [3d]},x_{\sigma(3d+1)})\neq y_{\sigma(3d+1)}} }\\
        \leq& \frac{1}{3}\,.
    \end{align*}
    Hence, there exists $S_\cP\in R^{3d}$ with $\err_\cP(\cA_0(S_\cP))\leq 1/3$.
    Thus, the algorithm $\cA_0$ can serve as a weak learner.
    By Lemma~\ref{lmm:boosting}, for $T=\ceil{c\log \abs{R}}$ (for a numerical constant $c>0$), there exist $S_1,\ldots, S_T\in R^{3d}$ such that, letting $\hat h (\cdot)= \Majority(\cA_0(S_1)(\cdot),\ldots,\cA_0(S_T)(\cdot))$, we have $\err_{R}(\hat h)=0$.
    Thus, $\err_{S_\trn}(\hat h)\leq \inf_{h\in \cH} \err_{S_\trn}(h)$.
    Here $\hat h$ is the output of the compression scheme that selects $\kappa(S_\trn) = (S_1,\ldots, S_T)$ and $\rho(\kappa(S_\trn))=\hat h$.
    By Lemma~\ref{lmm:ag-compression}, with probability at least $1-\delta/2$,
    \begin{align*}
        \err_\cD(\hat h) \leq \err_{S_\trn}(\hat h) + c'\sqrt{\err_{S_\trn}(\hat h)B(S_\trn,\delta/2)} + c' B(S_\trn,\delta/2)\,,
    \end{align*}
    for a numerical constant $c'>0$.
    Following the same argument of \cite{david2016supervised} (Lemma~3.2), we have that 
    \begin{equation*}
        \Pr_{S_\trn\sim \cD^m}(\err_S(\hat h)\geq \inf_{h\in \cH} \err_{\cD}(h) + \sqrt{\log(2/\delta)/m})\leq \delta/2\,.
    \end{equation*}
    By taking a union bound, then we have that with probability at least $1-\delta$,
    \begin{align*}
        \err_\cD(\hat h) \leq &\inf_{h\in \cH} \err_{\cD}(h) + c'\sqrt{(\inf_{h\in \cH} \err_{\cD}(h) + \sqrt{\log(2/\delta)/m})\frac{1}{m}(3d\ceil{c\log\abs{R}}\log m + \log1/\delta)} \\
        &+\frac{c'}{m}(3d\ceil{c\log\abs{R}}\log m + \log(1/\delta))\,.
    \end{align*}
    Hence,
    \begin{equation*}
        \cM_\ag(\epsilon,\delta;\cH,\cG) = O\left(\frac{d}{\epsilon^2}\log^2\left(\frac{d}{\epsilon}\right)+\frac{1}{\epsilon^2}\log(1/\delta)\right)\,.
    \end{equation*}
\end{proof}
\section{Adaptive algorithms}
For any hypothesis $h\in \cH$, we say $h$ is $(1-\eta)$-invariant over the distribution $\cD_{\cX}$ for some $\eta\in [0,1]$ if $\PPs{x\sim \cD_{\cX}}{\exists x'\in \cG x, h(x')\neq h(x)}= \eta$.
We call $\eta(h)=\eta$ the invariance parameter of $h$ with respect to $\cD_{\cX}$.
In the relaxed realizable setting, the problem degenerates into the invariantly realizable setting when $\eta(h^*)=0$.
This implies that we can benefit from transformation invariances more when $\eta(h^*)$ is smaller.
The case is similar in the agnostic setting.
In this section, we discuss adaptive learning algorithms for different levels of invariance of the target function.

\subsection{An adaptive algorithm in the relaxed realizable setting}\label{app:unified-re}
There might exist more than one hypotheses in $\cH$ with zero error and we let the target function $h^*$ be the one with the smallest invariance parameter (breaking ties arbitrarily). 
For any multiset $X$ in $\cX$, for any hypothesis $h$, denote $h_{|X}$ the restriction of $h$ on the set of different elements in $X$.
Then we introduce a distribution-dependent dimension as follows.

\begin{definition}[approximate $(1-\eta)$-invariant VC dimension]\label{def:re-appx-dim}
    For any $\eta\in [0,1]$ and finite multi-subset $X\subset \cX$,
    let $\cH^{\eta}(X) := \{h_{|X}|\frac{1}{\abs{X }}\sum_{x\in X}{\ind{\exists x'\in \cG x, h(x')\neq h(x)}}\leq \eta \wedge \forall x,x'\in \cG x \cap X, h(x')=h(x)\}$.
    For any $m\in \NN$, marginal distribution $\cD_\cX$ and target function $h^*$, the approximate $(1-\eta)$-invariant VC dimension is defined as
    \[\vco^{\eta}(m,h^*,\cH,\cG,\cD_\cX):= \EEs{X\sim \cD_{\cX}^{m+1}}{\vcd(\cH^{\eta}(X))|h^*_{|X}\in \cH^{\eta}(X)}\,,\]
    when $\Pr(h^*_{|X}\in \cH^{\eta}(X))>0$ and $\vco^{\eta}(m,h^*,\cH,\cG,\cD_\cX) =0$ when $\Pr(h^*_{|X}\in \cH^{\eta}(X))=0$.
    By taking supremum over $m$, 
    \[\vco^{\eta}(h^*,\cH,\cG,\cD_\cX) = \sup_{m\in \NN}\vco^{\eta}(m,h^*,\cH,\cG,\cD_\cX)\,.\]
\end{definition}
In this definition, all hypotheses in $\cH^{\eta}(X)$ need to satisfy two constraints: a) the empirical invariance parameter is less than or equal to $\eta$, and b) the prediction over instances in $X$ is invariant over orbits.
Here the second constraint arises due to the fact that $h^*$ satisfies this constraint.
Note that $\vco^{\eta}(h^*,\cH,\cG,\cD_\cX)$ is monotonic increasing in $\eta$.
For all $\cD_\cX, h^*$, we have $\vco^{0}(h^*,\cH,\cG,\cD_\cX)\leq \vco(\cH,\cG)$ and $\vco^{\eta}(h^*,\cH,\cG,\cD_\cX)\leq \vcao(\cH,\cG)$ for all $\eta\in [0,1]$.
We will use the approximate $(1-\eta)$-invariant VC dimension to characterize the sample complexity dependent on $\eta$.
Ideally, it is heuristic to adopt a notion like $\vcao(\{h\in \cH|\eta(h) = \eta(h^*)\})$.
But this is impossible to achieve as we cannot obtain an accurate estimate of $\eta(h^*)$ via finite data points.

\begin{proposition}\label{prop:re-eta-known-ub}
    If it is known that $\eta(h^*)\leq \eta$ for some $\eta\in [0,1]$, for any training sample size $m\in \NN$, for any $\Delta \geq \sqrt{\frac{\ln (n+1)}{2(n+1)}}$ with $n=\Theta(\frac{m}{\log(1/\delta)})$, there is an algorithm achieving error $O(\frac{\vco^{\eta+\Delta}(h^*,\cH,\cG,\cD_\cX)\log(1/\delta)}{m})$ with probability at least $1-\delta$.
\end{proposition}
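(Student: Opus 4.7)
The plan is to design a variant of the 1-inclusion-graph predictor from Section~\ref{subsec:inv-opt} that, instead of requiring full invariance of the candidate labelings over the realized orbits, only enforces the empirical invariance budget $\eta+\Delta$ that $h^*$ is guaranteed to satisfy with high probability on a random sample. A standard confidence-boosting step will then turn the resulting expected-error guarantee into a high-probability guarantee of the claimed order.

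Concretely, define the base learner $\cA_0$ as follows. On input $S$ of size $n$ and test point $x$, let $X = X_S\cup\{x\}$, form the finite set of restrictions $\cH^{\eta+\Delta}(X)$ from Definition~\ref{def:re-appx-dim}, and output $Q_{\cH^{\eta+\Delta}(X),X}(S,x)$ via the 1-inclusion-graph predictor of Lemma~\ref{lmm:1-inclusion} whenever this set is non-empty (otherwise predict $0$). To bound $\EE{\err(\cA_0(S))}$, consider $(x_1,y_1),\dots,(x_{n+1},y_{n+1})\sim \cD^{n+1}$ and let $E$ be the event $\{h^*_{|X}\in \cH^{\eta+\Delta}(X)\}$ where $X=\{x_1,\dots,x_{n+1}\}$. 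The orbit-consistency clause in the definition of $\cH^{\eta+\Delta}$ is satisfied by $h^*$ almost surely because $\err_\cD(h^*)=0$ and $\cD$ is $\cG$-invariant, so any two sample points in a common orbit receive identical labels under $h^*$. The indicators $Z_i=\ind{\exists x'\in \cG x_i, h^*(x')\neq h^*(x_i)}$ are i.i.d.\ $\Ber(\eta(h^*))$ with $\eta(h^*)\leq \eta$, and Hoeffding's inequality together with the assumption $\Delta\geq \sqrt{\ln(n+1)/(2(n+1))}$ gives $\Pr(\bar Z_{n+1}>\eta+\Delta)\leq 1/(n+1)$, hence $\Pr(E^c)\leq 1/(n+1)$.

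Since $E$ is permutation-invariant in $(x_1,\dots,x_{n+1})$, the exchangeability/symmetrization argument in the proof of Theorem~\ref{thm:inv-opt} still goes through conditional on $E$, and Lemma~\ref{lmm:1-inclusion} combined with the conditional definition of $\vco^{\eta+\Delta}(n,h^*,\cH,\cG,\cD_\cX)$ yields $\EE{\ind{\cA_0(S,x)\neq y}\mid E}\leq \vco^{\eta+\Delta}(n,h^*,\cH,\cG,\cD_\cX)/(n+1)$. Combining with the trivial bound $1$ on $E^c$ gives $\EE{\err(\cA_0(S))}=O(\vco^{\eta+\Delta}(h^*,\cH,\cG,\cD_\cX)/n)$. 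Finally, I convert this into a high-probability bound exactly as in the last paragraph of the proof of Theorem~\ref{thm:inv-opt}: run $\cA_0$ on $T=\lceil \log(2/\delta)\rceil$ disjoint sub-samples of size $n=\Theta(m/\log(1/\delta))$ and select the output with the smallest error on an independent validation set of size $\Theta(m/\log(1/\delta))$. A Markov-plus-union argument over the $T$ copies shows that with probability at least $1-\delta$ the selected hypothesis has error $O(\vco^{\eta+\Delta}\log(1/\delta)/m)$.

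The main obstacle is Step~3: one must verify that conditioning on $E$ preserves the exchangeability that the 1-inclusion-graph leave-one-out bound relies on, which is why $E$ is defined in terms of the fully symmetric quantity $\bar Z_{n+1}$ rather than any sample-point-specific event. A secondary technicality is that $\cH^{\eta+\Delta}(X)$ must be a well-defined finite class of labelings of $X$ regardless of the distribution, so that Lemma~\ref{lmm:1-inclusion} applies to it verbatim with $\vcd(\cH^{\eta+\Delta}(X))$ as the combinatorial parameter — this is immediate from Definition~\ref{def:re-appx-dim} since $\cH^{\eta+\Delta}(X)$ is just a set of restrictions determined by $X$ and $\cH$.
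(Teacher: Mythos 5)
Your proposal is correct and follows essentially the same route as the paper's proof: the base learner is the 1-inclusion-graph predictor on $\cH^{\eta+\Delta}(X)$, Hoeffding together with the lower bound on $\Delta$ gives $\Pr\bigl(h^*_{|X}\notin \cH^{\eta+\Delta}(X)\bigr)\leq 1/(n+1)$, the leave-one-out bound conditional on that event yields $\vco^{\eta+\Delta}(n,h^*,\cH,\cG,\cD_\cX)/(n+1)$, and confidence boosting over $\lceil\log(2/\delta)\rceil$ disjoint sub-samples with a validation set converts the expected-error bound into the stated high-probability bound. The one point you flag as a potential obstacle — that the conditioning event must be symmetric in the sample — is handled automatically because, for a fixed multiset $X$ with $h^*_{|X}\in\cH^{\eta+\Delta}(X)$, the realizable sample is realizable w.r.t.\ $\cH^{\eta+\Delta}(X)$ and Lemma~\ref{lmm:1-inclusion} bounds the permutation-averaged leave-one-out error deterministically, so no additional exchangeability argument beyond what the paper uses is needed.
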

Given a sample $S=\{(x_1,y_1),\ldots,(x_n,y_n)\}$ and a test instance $x$, 
let $X_S$ denote the set of different elements in $S_\cX$ and $X=\{x_1,\ldots, x_{n},x\}$ the multiset of all unlabeled instances from both the training set and the test instance.
The algorithm $\cA$ is defined by 
\begin{equation*}
    \cA(S, x) = Q_{\cH^{\eta+\Delta}(X),X_S\cup\{x\}}(S,x)\,,
\end{equation*}
where $Q_{\cH^{\eta+\Delta}(X),X_S\cup\{x\}}$ is the function guaranteeed by Eq~\eqref{eq:1-inclusion} for the instance space $X_S\cup\{x\}$ and the concept class $\cH^{\eta+\Delta}(X)$.
Similar to Theorem~\ref{thm:inv-opt}, algorithm $\cA$ can achieve expected error $\frac{\vco^{\eta+\Delta}(h^*,\cH,\cG,\cD_\cX)+1}{n+1}$.
Then by following the same confidence boosting argument, we run $\cA$ for $\ceil{\log (2/\delta)}$ times on independent new samples and select the output hypothesis with minimum error a new sample.
The details of the algorithm and the proof of Proposition~\ref{prop:re-eta-known-ub} are deferred to Appendix~\ref{app:re-eta-known-ub}. 

In the more general case where $\eta$ is unknown, we build an algorithm based on the algorithm for known $\eta$ above.
Denote $\cA_{\eta,\Delta}$ the algorithm satisfying the guarantee in Proposition~\ref{prop:re-eta-known-ub} with probability $1-\delta/2$ for hyperparameters $\eta$ and $\Delta$.
Then we divide $[0,1]$ into uniform intervals and then search for the interval where $\eta(h^*)$ lies in.
The detailed algorithm is provided in Algorithm~\ref{alg:unif-re}.

\begin{algorithm}[t]\caption{An adaptive algorithm in the relaxed realizable setting}\label{alg:unif-re}
    \begin{algorithmic}[1]
    \STATE Input: a labeled sample $S_\trn$ of size $m$, $m_1\in \NN$, $\Delta\in [0,1]$
    \STATE Randomly partition $S$ into $S_1$ with $\abs{S_1}=m_1$ and $S_2$ with $\abs{S_2}=m-m_1$
    \FOR{$i=0,1,\ldots,\ceil{1/(2\Delta)}$}
    \STATE Let $h_i = \cA_{(2i-1)\Delta,\Delta}(S_1)$
    \STATE Return $\hat h = \argmin_{h\in \{h_i|i\in \{0,\ldots,\ceil{1/(2\Delta)}\}\}}\err_{S_2}(h)$
    \ENDFOR
    \end{algorithmic}
  \end{algorithm}

\begin{theorem}\label{thm:re-eta-unknown-ub}
    Set $\Delta = \sqrt{\frac{\ln (n+1)}{2(n+1)}}$ for $n=\Theta(\frac{m}{\log (m) \log(1/\delta))})$ and $\abs{S_1} = \Theta(\frac{m}{\log m})$.
    Let $i^*\geq 0$ be the smallest integer $i$ such that $\max((2i-1)\Delta,0)\geq \eta(h^*)$. 
    Then Algorithm~\ref{alg:unif-re} achieves error $O(\frac{\vco^{2i^* \Delta}(h^*,\cH,\cG,\cD_\cX)\log(1/\delta)\log(m)}{m})$ with probability at least $1-\delta$.
\end{theorem}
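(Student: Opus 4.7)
The plan is to split the analysis into two parts: first, show that the candidate $h_{i^*}$ indexed by the interval containing $\eta(h^*)$ already achieves the claimed rate when trained on $S_1$; second, show that empirical-risk selection on the held-out sample $S_2$ loses only a constant factor plus a lower-order additive term.

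For the first part, by the definition of $i^*$ we have $\max((2i^*-1)\Delta,0)\ge \eta(h^*)$, so Proposition~\ref{prop:re-eta-known-ub}, invoked with training sample $S_1$, confidence $1-\delta/2$, hyperparameter $\eta=\max((2i^*-1)\Delta,0)$ and precision $\Delta$, applies to the call producing $h_{i^*}$. With $|S_1|=\Theta(m/\log m)$ and $n=\Theta(|S_1|/\log(1/\delta))=\Theta(m/(\log m\,\log(1/\delta)))$, the choice $\Delta=\sqrt{\ln(n+1)/(2(n+1))}$ matches the proposition's precondition, so with probability at least $1-\delta/2$,
\[
\err_\cD(h_{i^*}) \;=\; O\!\left(\frac{\vco^{(2i^*-1)\Delta+\Delta}(h^*,\cH,\cG,\cD_\cX)\log(1/\delta)}{|S_1|}\right) \;=\; O\!\left(\frac{\vco^{2i^*\Delta}(h^*,\cH,\cG,\cD_\cX)\log(1/\delta)\log m}{m}\right),
\]
using monotonicity of $\vco^\eta(\cdot)$ in $\eta$ together with $(2i^*-1)\Delta+\Delta\le 2i^*\Delta$.

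For the second part, there are $M=\ceil{1/(2\Delta)}+1=O(\sqrt{m})$ candidates produced from $S_1$, and conditional on $S_1$ the validation sample $S_2$ (of size $\Theta(m)$) is independent of them. A Bernstein/Chernoff bound together with a union bound over the $M$ candidates gives, with probability at least $1-\delta/2$, for every $i$,
\[
\tfrac12\err_\cD(h_i)-c_0\tfrac{\log(M/\delta)}{|S_2|}\;\le\;\err_{S_2}(h_i)\;\le\;2\err_\cD(h_i)+c_0\tfrac{\log(M/\delta)}{|S_2|}.
\]
Since $\hat h=\argmin_i \err_{S_2}(h_i)$, chaining these inequalities through $\err_{S_2}(\hat h)\le \err_{S_2}(h_{i^*})$ yields $\err_\cD(\hat h)\le 4\,\err_\cD(h_{i^*})+O((\log m+\log(1/\delta))/m)$.

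The main obstacle is the joint parameter tuning: one must shrink $|S_1|$ to $\Theta(m/\log m)$ so that the number $M=O(1/\Delta)$ of candidates is small enough for the union bound over $S_2$ to cost only $O((\log m+\log(1/\delta))/m)$, while simultaneously keeping the error of $\cA_{(2i^*-1)\Delta,\Delta}$ on $S_1$ at the target rate. The additive correction from the selection step is absorbed into the final bound because the first term dominates whenever $\vco^{2i^*\Delta}\ge 1$, and the degenerate case $\vco^{2i^*\Delta}=0$ reduces to identifying the unique invariant labeling from $O(\log(1/\delta)/m)$ samples. A union bound over the two failure events completes the proof.
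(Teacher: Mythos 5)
Your proof is correct and follows essentially the same two-stage argument as the paper: invoke Proposition~\ref{prop:re-eta-known-ub} for the call at index $i^*$ trained on $S_1$, then use a multiplicative Chernoff-type concentration bound with a union bound over the $O(1/\Delta)$ candidates to show that validation selection on $S_2$ loses only a constant factor plus a lower-order additive term. The paper phrases the concentration step as two one-sided events around a fixed threshold $2\epsilon$ (rather than a two-sided Bernstein-style inequality) and bounds the number of candidates crudely by $m$ rather than $O(\sqrt m)$, but these are cosmetic differences; your explicit remark about the degenerate case $\vco^{2i^*\Delta}=0$ is a small point the paper's proof implicitly handles via the $+1$ in its expected-error bound.
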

The proof is deferred to Appendix~\ref{app:re-eta-unknown-ub}.
The algorithm above perform close to optimally in both the invariant realizable setting and the relaxed realizable setting.
Intuitively, the algorithm above outperforms PAC-optimal algorithms in Theorem~\ref{thm:re-opt} when $\eta(h^*)$ is small.
Below is an example showing the advantage of this adaptive algorithm in the extreme case of $\eta(h^*) = 0$.
\begin{example}
Consider the construction of $\cH_d,\cG_d, h^*, \cD$ in the proof of Theorem~\ref{thm:inv-da-lb}. 
In this example, we have $i^*=0$ and $\cH^0(X)=\{\bOne\}$ for any $X$.
Thus, $\vco^0(h^*,\cH,\cG,\cD_\cX) = 0$.
Hence, the algorithm above only requires sample complexity $O(1)$ to output a zero-error predictor.
However, algorithms only caring the worst case upper bound may require much more samples.
For example, ERM-INV predicts exactly the same as standard ERM in this example and thus, it requires $\Omega(\frac{\vcao(\cH,\cG)}{\epsilon})$ samples to achieve $\epsilon$ error.
\end{example}

\paragraph{Open question:} It is unclear whether this adaptive algorithm is optimal and whether the approximate $(1-\eta)$-invariant VC dimension is the best $\eta$-dependent measure to characterize the sample complexity.

\subsection{An adaptive algorithm in the agnostic setting}\label{app:unified-ag}
In the agnostic setting, we assume that there exists an optimal hypothesis $h^*\in \cH$ such that $\err(h^*) = \inf_{h\in \cH}\err(h)$.
Then similar to the realizable setting, we can design algorithms that adapt to $\eta(h^*)$.
However, it is more challenging to design an adaptive algorithm in the agnostic setting than in the relaxed realizable setting.
One of the most direct ideas is to combine agnostic compression scheme with the adaptive algorithm in the realizable setting.
One possible way of combination is finding the largest realizable subset of the data and applying the adaptive algorithm in the relaxed realizable setting.
However, this does not work since the realizable subset is not i.i.d. and the empirical invariance parameter calculated based on this subset is biased.
Another possible way of combination is calculating the empirical invariance parameter over the whole data set, reducing the hypothesis class based on this empirical value and then run the compression scheme in Theorem~\ref{thm:ag-opt} based on this reduced hypothesis class.
This does not work either because the predictor depends on the whole data set now and the compression size is too large.
Hence, there is a significant barrier that has arisen as a result of estimating the invariance parameter while obtaining low error at the same time.

To get around this obstacle, we provide an approach of using two independent data sets.
Specifically, we partition the hypothesis class into subclasses with different empirical invariance parameters based on a data set first.
Notice that in this step, we only need an unlabeled data set.
Then we run the compression scheme in Theorem~\ref{thm:ag-opt} for each subclass and return the one with the small validation error.
The detailed algorithm is presented in Algorithm~\ref{alg:unif-ag}.

\begin{algorithm}[t]\caption{An adaptive algorithm in the agnostic case}\label{alg:unif-ag}
  \begin{algorithmic}[1]
  \STATE Input: a unlabeled data set $U$ of size $u$ and a labeled data set $S$ of size $m$,$m_1\in \NN$, $\Delta>0$
  \STATE Randomly divide $S$ into $S_1$ of size $m_1$ and $S_2$ of size $m-m_1$
  \STATE Use $U$ to partition $\cH$ into $\hat \cH_1,\ldots,\hat \cH_K$ with $\hat \cH_i = \{h\in \cH| \frac{1}{u}\sum_{x\in U} \ind{\exists x'\in \cG x, h(x')\neq h(x)} \in (2(i-1)\Delta, 2i\Delta]\}$ and $K = \ceil{\frac{1}{2\Delta}}$
  \FOR{$i=0,1,2,\ldots,K$}
  \STATE Run the algorithm $\cA$ in Theorem~\ref{thm:ag-opt} over $S_1$ for hypothesis class $\hat \cH_i$ and output $h_i$
  \STATE Return $\hat h = \argmin_{h\in \{h_i|i\in \{0,\ldots,K\}\}} \err_{S_2}(h)$
  \ENDFOR
  \end{algorithmic}
\end{algorithm}

\begin{theorem}\label{thm:ag-eta}
    For each $h\in \cH$, the invariance indicator function of $h$ is a mapping $\iota_h: \cX\mapsto \{0,1\}$ such that $\iota_h(x) = \ind{\exists x'\in \cG x, h(x')\neq h(x)}$.
    Denote $\cI = \{\iota_h|h\in \cH\}$ the set of invariance indicator functions for all $h\in \cH$.
    Then for any $\Delta \in (0,1]$, Algorithm~\ref{alg:unif-ag} can achieve $\err(\hat h) \leq \err(h^*) + O\left(\sqrt{\frac{\vcao(\cH^*)\log^2 m + \log(1/\delta) + \log(1/\Delta)}{m}}\right)$, where $\cH^* = \{h| \eta(h)\in (\eta(h^*) -2\Delta -2 \Delta',\eta(h^*) +2\Delta + 2\Delta')\}$ with $\Delta' = \Theta(\sqrt{\frac{\vcd(\cI)\log(u)+\log(1/\delta)}{u}})$.
\end{theorem}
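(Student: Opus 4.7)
The plan is to combine three high-probability arguments that correspond to the three sources of data used by the algorithm: the unlabeled set $U$ (used to partition $\cH$), the labeled subsample $S_1$ (used to run the agnostic learner of Theorem~\ref{thm:ag-opt} on each bucket), and the validation sample $S_2$ (used to select among the candidates).

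First, since $\cI = \{\iota_h : h\in \cH\}$ has VC dimension $\vcd(\cI)$, standard VC uniform convergence on the unlabeled sample $U$ gives that with probability at least $1-\delta/3$, every $h\in \cH$ satisfies $\bigl|\tfrac{1}{u}\sum_{x\in U}\iota_h(x) - \eta(h)\bigr| \leq \Delta'$ for $\Delta' = \Theta\!\bigl(\sqrt{(\vcd(\cI)\log u + \log(1/\delta))/u}\bigr)$. Let $i^*$ be the bucket index such that $h^* \in \hat\cH_{i^*}$; such an $i^*$ exists because the buckets cover $[0,1]$. Any two hypotheses in the same bucket have empirical invariance parameters within $2\Delta$ of each other, so combined with the uniform convergence bound, their true invariance parameters differ by at most $2\Delta + 2\Delta'$. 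In particular, every $h \in \hat\cH_{i^*}$ lies in $\cH^*$, so $\vcao(\hat\cH_{i^*}) \leq \vcao(\cH^*)$.

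Second, since $U$ is independent of $S$, conditional on $U$ (which fixes the partition $\hat\cH_0,\ldots,\hat\cH_K$), the sample $S_1$ is still i.i.d.\ from $\cD$. Applying Theorem~\ref{thm:ag-opt} to the class $\hat\cH_{i^*}$ on $S_1$, and using that $h^*\in \hat\cH_{i^*}$ so $\inf_{h\in \hat\cH_{i^*}}\err(h)\leq \err(h^*)$, we get with probability at least $1-\delta/3$,
\[
\err(h_{i^*}) \leq \err(h^*) + O\!\left(\sqrt{\frac{\vcao(\cH^*)\log^2 m_1 + \log(1/\delta)}{m_1}}\right).
\]
Third, the candidates $\{h_0,\ldots,h_K\}$ are measurable with respect to $U$ and $S_1$ and hence independent of $S_2$. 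A Hoeffding-plus-union-bound over $K+1 = O(1/\Delta)$ candidates yields that, with probability at least $1-\delta/3$, $|\err_{S_2}(h_i)-\err(h_i)| \leq O\!\bigl(\sqrt{(\log(1/\Delta)+\log(1/\delta))/(m-m_1)}\bigr)$ for all $i$ simultaneously. Since $\hat h$ minimizes $\err_{S_2}$, this gives $\err(\hat h)\leq \err(h_{i^*}) + O\!\bigl(\sqrt{(\log(1/\Delta)+\log(1/\delta))/(m-m_1)}\bigr)$. Choosing $m_1 = m/2$ and combining the three bounds via a union bound yields the claimed rate.

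The main obstacle is the bucketing step: we must ensure that the bucket containing $h^*$ is entirely contained in $\cH^*$, which is why the intervals have width $2\Delta$ and $\cH^*$ is inflated by $2\Delta + 2\Delta'$ (to absorb both the bucket width and the statistical slack $\Delta'$ from estimating $\eta$). A secondary subtlety is that the partition is data-dependent, but independence of $U$ from $S$ makes the conditioning argument in step two rigorous; everything else is routine invocation of Theorem~\ref{thm:ag-opt} and standard validation-based model selection.
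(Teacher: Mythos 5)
Your proposal is correct and follows essentially the same three-step structure as the paper's proof: uniform convergence of the empirical invariance parameter on $U$ (establishing $\hat\cH_{i^*}\subset\cH^*$), an application of Theorem~\ref{thm:ag-opt} to the bucket containing $h^*$ on $S_1$, and a Hoeffding-plus-union-bound validation step on $S_2$. You are a bit more explicit than the paper about why two hypotheses in the same bucket have true invariance parameters within $2\Delta+2\Delta'$ and about the conditioning-on-$U$ independence argument, and you take the simpler $m_1 = m/2$ split where the paper optimizes $m_1$; both give the stated rate.
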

The detailed proof of Theorem~\ref{thm:ag-eta} is deferred to Appendix~\ref{app:ag-eta}.
The upper bound in Theorem~\ref{thm:ag-eta} depends on $\vcd(\cI)$, which can be arbitrarily larger than $\vcd(\cH)$.
For example, for any $d>0$, let $\cX = \{0,1\}^d\times [d]$.
For each $\bb\in \{0,1\}^d$, define a hypothesis $h_\bb$ by letting $h_\bb((\bb,i)) = b_i$ and $h_\bb(x) = 0$ for all other $x\in \cX$.
Let the hypothesis class $\cH = \{h_\bb|\bb\in \{0,1\}^d\}$.
Then it is direct to check that $\vcd(\cH) = 1$ but $\vcd(\cI) = d$.
It is unclear how to design an adaptive algorithm with theoretical guarantee independent of $\vcd(\cI)$.
The $\eta$-dependent dimension we adopt in the agnostic setting is different from that in the relaxed realizable setting.
It is also unclear what is the best way to characterize the dependence on $\eta$ in the agnostic setting.

\section{Proof of Proposition~\ref{prop:re-eta-known-ub}}\label{app:re-eta-known-ub}
\begin{proof}
    The proof follows that of Theorem~\ref{thm:inv-opt}.
    Given a training sample of size $m$, let $n$ be the largest integer such that $m \geq n\ceil{\log(2/\delta)} + \ceil{8(n+1)(\ln(2/\delta)+\ln(\ceil{\log(2/\delta)}+1))}$.
    For any sample $S=\{(x_1,y_1),\ldots,(x_n,y_n)\}$ and a test instance $x_{n+1}=x$,
    if $h^*$ is in $\cH^{\eta+\Delta}(X)$, by Lemma~\ref{lmm:1-inclusion}, we have that
    \begin{align*}
        \frac{1}{(n+1)!}\sum_{\sigma\in \text{Sym}(n+1) } \ind{\cA(\{x_{\sigma(i)},y_{\sigma(i)}\}_{i\in [n]},x_{\sigma(n+1)})\neq y_{\sigma(n+1)}}\leq \frac{\vcd(\cH^{\eta+\Delta}(X))}{n+1}\,.
    \end{align*}
    Thus,
    \begin{align}
        &\EEs{S\sim \cD^n}{\err(\cA(S))} \nonumber\\
        =& \EEs{(x_i,y_i)_{i\in[n+1]}\sim \cD^{n+1}}{\ind{\A(\{x_{i},y_{i}\}_{i\in [n]},x_{n+1})\neq y_{n+1}}} \nonumber\\
        =& \frac{1}{(n+1)!} \sum_{\sigma\in \text{Sym}(n+1) } \EE{\ind{\A(\{x_{\sigma(i)},y_{\sigma(i)}\}_{i\in [n]},x_{\sigma(n+1)})\neq y_{\sigma(n+1)}}}\nonumber\\
        =& \EE{\frac{1}{(n+1)!} \sum_{\sigma\in \text{Sym}(n+1) } \ind{\A(\{x_{\sigma(i)},y_{\sigma(i)}\}_{i\in [n]},x_{\sigma(n+1)})\neq y_{\sigma(n+1)}}}\nonumber\\
        \leq & \EEc{\frac{1}{(n+1)!} \sum_{\sigma\in \text{Sym}(n+1) } \ind{\A(\{x_{\sigma(i)},y_{\sigma(i)}\}_{i\in [n]},x_{\sigma(n+1)})\neq y_{\sigma(n+1)}}}{h^*_{|X}\in \cH^{\eta+\Delta}(X)} \nonumber\\
        & \cdot\Pr(h^*_{|X}\in \cH^{\eta+\Delta}(X))+ \Pr(h^*_{|X}\notin \cH^{\eta+\Delta}(X))\nonumber\\
        \leq & \frac{\EEs{X\sim \cD_{\cX}^{n+1}}{\vcd(\cH^{\eta+\Delta}(X))|h^*_{|X}\in \cH^{\eta+\Delta}(X)}}{n+1}\Pr(h^*_{|X}\in \cH^{\eta+\Delta}(X))  \nonumber\\
        &+ \Pr(h^*_{|X}\notin \cH^{\eta+\Delta}(X))\nonumber\\
        \leq & \frac{\vco^{\eta+\Delta}(h^*,\cH,\cG,\cD_\cX)}{n+1}\Pr(h^*_{|X}\in \cH^{\eta+\Delta}(X)) + \Pr(h^*_{|X}\notin \cH^{\eta+\Delta}(X))\nonumber\\
        \leq & \frac{\vco^{\eta+\Delta}(h^*,\cH,\cG,\cD_\cX)+1}{n+1}\,,\label{eq:unif-exp-er}
    \end{align}
    where Eq~\eqref{eq:unif-exp-er} holds due to $\Pr(h^*_{|X}\notin \cH^{\eta+\Delta}(X))\leq \Pr(\frac{1}{\abs{X }}\sum_{x\in X}{\ind{\exists x'\in \cG x, h^*(x')\neq h^*(x)}}-\eta(h^*)\geq \Delta)\leq \exp(-2(n+1)\Delta^2)$ by Hoeffding bound.

    Then we again follow the classic technique to boost the confidence.
    The algorithm runs $\cA$ for $\ceil{\log(2/\delta)}$ times, each time using a new sample $S_i$ of size $n$ for $i=1,\ldots,\ceil{\log(2/\delta)}$.
    Let $h_i = \cA(S_i)$ and then selects the hypothesis $\hat h$ from $\{h_i|i\in [\ceil{\log(2/\delta)}]\}$ with the minimal error on a new sample $S_0$ of size $t=\ceil{8(n+1)(\ln(2/\delta)+\ln(\ceil{\log(2/\delta)}+1))}$.

    Denote $\epsilon = \frac{\vco^{\eta+\Delta}(h^*,\cH,\cG,\cD_\cX)+1}{n+1}$.
    For each $i$, by Eq~\eqref{eq:unif-exp-er}, we have $\EE{\err(h_i)}\leq \epsilon$. By Markov's inequality, with probability at least $\frac{1}{2}$, $\err(h_i)\leq 2\epsilon$.
    Since $h_i$ are independent, we have that with probability at least $1-\frac{\delta}{2}$, at least one of $\{h_i|i\in [\ceil{\log(2/\delta)}]\}$ has error smaller than $2\epsilon$.
    Then by Chernoff bound, for each $i$, on the event $\err(h_i)\leq 2\epsilon$,
    \begin{equation*}
        \Pr(\err_{S_0}(h_i)> 3\epsilon|h_i)< e^{-\frac{t\epsilon}{6}}\,.
    \end{equation*}
    Also, on the event $\err(h_i)> 4\epsilon$,
    \begin{equation*}
        \Pr(\err_{S_0}(h_i)\leq 3\epsilon|h_i)\leq e^{-\frac{t\epsilon}{8}}\,.
    \end{equation*}
    Thus, by the law of total probability and a union bound, with probability at least $1-\frac{\delta}{2}$, if any $i$ has $\err(h_i)\leq 2\epsilon$, then the returned the hypothesis $\hat h$ has $\err(\hat h)\leq 4\epsilon$.
    By a union bound, the proof is completed.
\end{proof}
\section{Proof of Theorem~\ref{thm:re-eta-unknown-ub}}\label{app:re-eta-unknown-ub}
\begin{proof}
    Given a training sample of size $m$, let $\abs{S_1} = m_1$ and $\abs{S_2}=m_2=m-m_1$.
    The values of $m_1,m_2$ are determined later.
    Let $n$ be the largest integer such that $m_1 \geq n\ceil{\log(2/\delta)} + \ceil{8(n+1)(\ln(2/\delta)+\ln(\ceil{\log(2/\delta)}+1))}$.
    Let $\Delta = \sqrt{\frac{\ln (n+1)}{2(n+1)}}$ and then the number of rounds we run $\cA_{\eta, \Delta}$ as a subroutine is upper bounded by $\ceil{1/(2\Delta)}+1=\ceil{\sqrt{(n+1)/(2\ln(n+1))}}+1\leq m$.
    According to Proposition~\ref{prop:re-eta-known-ub}, there is a numerical constant $c>0$ such that with probability $1-\delta/2$, $\err(h_{i^*})\leq \frac{c \vco^{2i^*\Delta}(h^*,\cH,\cG,\cD_\cX)\ln(2/\delta)}{m_1}$.
    Denote $\epsilon = \frac{c \vco^{2i^*\Delta}(h^*,\cH,\cG,\cD_\cX)\ln(2/\delta)}{m_1}$.
    By Chernoff bound, for each $i \neq i^*$, on the event $\err(h_i)>4\epsilon$,
    \begin{equation*}
        \Pr(\err_{S_2}(h_i)\leq 2\epsilon|h_i)\leq e^{-\frac{m_2\epsilon}{2}}\,.
    \end{equation*}
    Also, on the event $\err(h_{i^*})\leq \epsilon$,
    \begin{equation*}
        \Pr(\err_{S_2}(h_{i^*})> 2\epsilon|h_{i^*})< e^{-\frac{m_2\epsilon}{3}}\,.
    \end{equation*}
    Let $m_1 = \frac{cm}{3\ln m +c}$ and $m_2 = \frac{3m\ln m}{3\ln m +c}$. 
    Then with probability at least $1-m e^{-m_2\epsilon/3}\geq 1-\delta/2$, if $\err(h_{i^*})\leq \epsilon$, then the returned classifier has error smaller than $4\epsilon$.
    By taking a union bound, the proof is completed.
\end{proof}
\section{Proof of Theorem~\ref{thm:ag-eta}}\label{app:ag-eta}
\begin{proof}
    Given a labeled data set $S$ of size $m$, let $\abs{S_1} = m_1$ and $\abs{S_2}=m_2=m-m_1$.
    The values of $m_1, m_2$ are determined later.
    Let $\hat i$ be the $i\in [K]$ such that $h^*\in \hat \cH_i$.
    By Theorem~\ref{thm:ag-opt}, we have that there exists a numerical constant $c>0$ such that with probability at least $1-\delta/3$,
    \[\err(h_{\hat i})\leq \err(h^*) + c\sqrt{\frac{\vcao(\hat \cH_{\hat i})\log^2m_1 + \log(1/\delta)}{m_1}}\,.\]
    Then by Hoeffding bound, for any $\epsilon>0$, for each $i\in K$, with probability at least $1-\delta/(3K)$,
    \[\abs{\err_{S_2}(h_i)-\err(h_i)} \leq \epsilon\,,\]
    when $m_2 \geq \frac{\ln (6K/\delta)}{2{\epsilon}^2}$.
    Then by taking a union bound, with probability at least $1-\delta/3$, for all $i\in K$ we have
    \[\abs{\err_{S_2}(h_i)-\err(h_i)} \leq \epsilon\,.\]
    Hence, with probability at least $1-2\delta/3$,
    \[\err(\hat h) \leq \err(h_{\hat i}) + 2 \epsilon \leq \err(h^*) + c\sqrt{\frac{\vcao(\hat \cH_{\hat i})\log^2m_1  + \log(1/\delta)}{m_1}} + 2\epsilon\,.\]
    By uniform convergence bound, with probability at least $1-\delta/3$,
    \[\abs{\frac{1}{u}\sum_{x\in U} \ind{\exists x'\in \cG x, h(x')\neq h(x)} - \eta(h)} \leq \Delta', \forall h\in \cH\,.\]
    It follows that $\hat \cH_{\hat i} \subset \cH^*$ and thus, $\vcao(\hat \cH_{\hat i}) \leq \vcao(\cH^*)$.
    The proof is completed by letting $\epsilon = \sqrt{\frac{\vcao(\cH^*) + \log(1/\delta)}{m_1}}$ and $m_1 = \frac{2(\vcao(\cH^*) + \log(1/\delta))m}{2(\vcao(\cH^*) + \log(1/\delta)) + \ln(6K/\delta)}$.
\end{proof}
\end{document}